\documentclass{article}

    \PassOptionsToPackage{numbers, compress}{natbib}

\usepackage[preprint]{neurips_2026}

\usepackage[utf8]{inputenc} 
\usepackage[T1]{fontenc}    

\usepackage{hyperref}       

\hypersetup{
  colorlinks=true,
  linkcolor=linkpurple,
  citecolor=green!50!black,
  urlcolor=blue!50!black
}
 
\usepackage{url}            
\usepackage{booktabs}       
\usepackage{amsfonts}       
\usepackage{nicefrac}       
\usepackage{microtype}      

\usepackage{amsmath,amssymb}
\usepackage{mathtools}
\usepackage{amsthm}
\usepackage{bm}
\usepackage{enumitem}
\usepackage{ulem}
\usepackage{xcolor}
\usepackage{cleveref}
\usepackage{pifont}

\usepackage{algorithm}
\usepackage{algpseudocode}

\usepackage{multirow}
\usepackage{caption} 
\usepackage{adjustbox}
\usepackage{color, colortbl}
\usepackage{subcaption}
\usepackage{graphicx}
\usepackage{makecell}

\usepackage{wrapfig,lipsum}

\newtheorem{theorem}{Theorem}

\newtheorem{corollary}{Corollary}
\newtheorem{definition}{Definition}
\newtheorem{proposition}{Proposition}

\definecolor{linkpurple}{RGB}{90,40,90}

\definecolor{my_color}{named}{blue}

\definecolor{avgrowgray}{gray}{0.9}
\newcolumntype{a}{>{\columncolor{avgrowgray}}c}

\definecolor{gaegreen}{HTML}{10B981}
\definecolor{gaerowhighlight}{HTML}{E6F9F1}
\makeatletter
\newcommand{\corrfootnote}[1]{%
  \begingroup
  \renewcommand{\@makefntext}[1]{%
    \noindent\makebox[1.8em][r]{*}\,##1%
  }%
  \@footnotetext{#1}%
  \endgroup
}
\makeatother

\title{Geometry-Adaptive Explainer for~Faithful~Dictionary-Based~Interpretability under~Distribution~Shift}

\author{%
Sungjun Lim$^{a}$ \quad
Heedong Kim$^{a}$ \quad
Andrew Lee$^{b, *}$ \quad
Kyungwoo Song$^{a, *}$ \\ [1em]
$^{a}$Yonsei University \qquad
$^{b}$Harvard University
}

\begin{document}

\maketitle

\begingroup
\renewcommand{\thefootnote}{\fnsymbol{footnote}}
\setcounter{footnote}{1}
\footnotetext{Corresponding authors:
\texttt{andrewlee@g.harvard.edu}, \texttt{kyungwoo.song@gmail.com}.}
\endgroup
\setcounter{footnote}{0}

\begin{abstract}
Mechanistic interpretability aims to explain a model’s behavior by identifying causally responsible internal structures. Dictionary-based explainers such as sparse autoencoders and transcoders are a primary tool, but their faithfulness under out-of-distribution (OOD) shift has received little systematic attention. We show that distribution shift rotates the subspace that the model actively uses, misaligning the explainer’s dictionary trained on in-distribution (ID) activations. We formalize this misalignment as the \textbf{faithfulness gap}, a geometric distance between the ID dictionary and the OOD-active subspace, and show that it controls OOD faithfulness degradation. To reduce this gap, we propose the \textbf{Geometry-Adaptive Explainer (GAE)}, which realigns the explainer's dictionary with the OOD-active subspace while preserving the original feature structure. This requires only unlabeled OOD activations and no gradient updates. We prove that GAE improves over the unadapted ID explainer, with excess loss bounded quadratically by the second-moment shift. Empirically, GAE even matches or surpasses all training-based baselines in causal faithfulness across multiple models and OOD settings.\footnote{Code is available at: \url{https://github.com/MLAI-Yonsei/GAE/}}
\end{abstract}

\vspace{-0.5em}
\section{Introduction}\label{sec:introduction}
\vspace{-0.6em}
Mechanistic interpretability aims to explain a model's behavior by identifying internal structures that are causally responsible for its outputs~\cite{geiger2025causal,olsson2022context,sharkey2025open}. A primary approach is to train an explainer, a post-hoc module such as a sparse autoencoder (SAE)~\cite{bricken2023monosemanticity} or transcoder~\cite{dunefsky2024transcoders}, that decomposes hidden activations into sparse combinations of learned feature directions (a dictionary). These dictionary-based explainers have recently been scaled to large language models~\cite{templeton2024scaling,gao2024scaling} and used to uncover interpretable feature circuits~\cite{marks2025sparse}. A central requirement for such explanations is faithfulness: they should accurately reflect the computations the model actually uses~\cite{jacovi2020towards,geiger2025causal}.

When a model encounters out-of-distribution (OOD) inputs, the dictionary learned in-distribution (ID) can no longer capture the directions the model actively uses~\cite{gdm2025negativesaes}. Prior work has addressed this vulnerability from two angles, each limited in scope. Attribution-level robustness studies~\cite{adebayo2018sanity,ghorbani2019interpretation,lin2023robustness,balestra2023consistency} focus on input perturbations rather than hidden-state geometry. Dictionary-explainer remedies such as retraining on the model's own generations~\cite{cho2025faithfulsae}, upweighting rare concepts~\cite{li2020tilted,muhamed2025decoding}, and adding residual modules~\cite{koriagin2025teach} remain heuristic, without diagnosing the underlying misalignment. Consequently, the mechanism driving this failure and a principled correction remain unaddressed.

In this work, we identify a geometric mechanism for OOD faithfulness degradation in dictionary-based explainers. These explainers learn their feature directions from ID activations, so their dictionaries reflect the geometric structure of ID hidden representations~\cite{bereska2024mechanistic}. This structure is captured by the second moment of hidden activations, which distribution shift typically alters~\cite{lee2018simple}, leaving the ID-trained dictionary misaligned with the OOD-active subspace. As illustrated in Figure~\ref{fig:intro} (left), we call this misalignment the \textbf{faithfulness gap}, a geometric distance between the ID dictionary and the OOD-active subspace. We prove that this gap controls OOD faithfulness degradation and that it is itself upper-bounded by the magnitude of the second-moment shift. Reducing this gap is therefore necessary for restoring OOD faithfulness, motivating methods that directly realign the dictionary.

\begin{figure}[t]
    \centering
    \includegraphics[width=\linewidth]{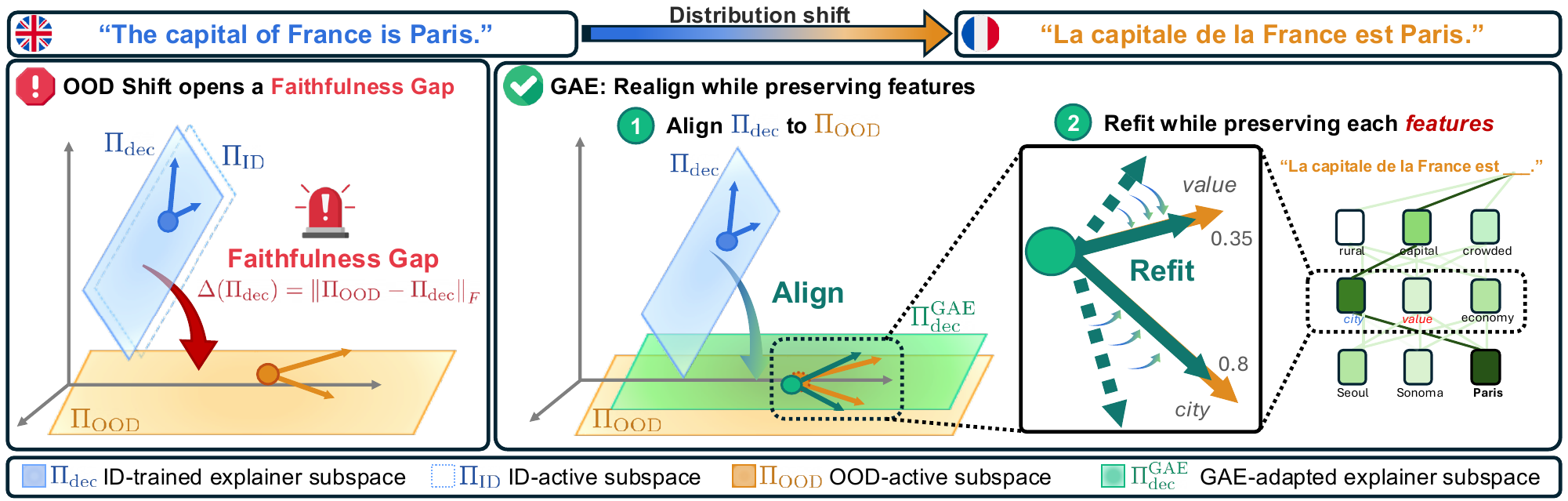}
    \caption{\textbf{Faithfulness gap and GAE.} \textit{Left}: distribution shift (illustrated as a language change) rotates the OOD-active subspace $\Pi_{\mathrm{OOD}}$ away from the ID-trained explainer subspace $\Pi_{\mathrm{dec}} \approx \Pi_{\mathrm{ID}}$, opening a faithfulness gap $\Delta(\Pi_{\mathrm{dec}})$. \textit{Right}: GAE closes this gap in two steps. Step~1 rotates $\Pi_{\mathrm{dec}}$ onto $\Pi_{\mathrm{OOD}}$ via orthogonal Procrustes. Step~2 refits individual feature directions within the aligned subspace to match OOD activations while preserving the original feature structure.}
    \label{fig:intro}
    \vspace{-1.5em}
\end{figure}

We instantiate this idea with the \textbf{Geometry-Adaptive Explainer (GAE)}, a closed-form, post-hoc method that closes the faithfulness gap using only unlabeled OOD activations. GAE first rotates the ID-trained dictionary so that its subspace aligns with the OOD-active subspace, choosing the rotation closest to the original dictionary to preserve feature structure (Step~1 in Figure~\ref{fig:intro}). A constrained decoder refit then adjusts individual feature directions to match OOD activations while maintaining this alignment (Step~2 in Figure~\ref{fig:intro}). The entire pipeline requires no gradient computation, yet we prove it is guaranteed to improve over the unadapted explainer~(Theorem~\ref{thm:improvement_over_id}). Empirically, GAE matches or surpasses all training-based baselines in causal faithfulness across multiple language models and OOD settings, including methods that retrain from scratch on OOD data.

Our main contributions are summarized as follows:
\begin{itemize}[nosep,leftmargin=*]
    \item We identify the \textbf{faithfulness gap} as a geometric mechanism for OOD faithfulness degradation and prove that excess loss grows at most quadratically with second-moment shift.
    \item We propose \textbf{GAE}, a closed-form dictionary realignment method that targets the faithfulness gap with a theoretical guarantee while preserving feature structure.
    \item Without any gradient computation, GAE matches or surpasses all training-based baselines, including full OOD retraining, in causal faithfulness across multiple models and diverse OOD settings.
\end{itemize}

\vspace{-0.5em}
\section{Related Work}\label{sec:related_works}
\vspace{-0.5em}
\subsection{Faithfulness in Mechanistic Interpretability}
\vspace{-0.6em}
Dictionary-based explainers such as SAEs~\cite{bricken2023monosemanticity, cunningham2023sparse} and transcoders~\cite{dunefsky2024transcoders,paulo2025transcoders} decompose hidden activations into sparse feature directions and have become a primary tool for mechanistic interpretability~\cite{geiger2025causal, olsson2022context, templeton2024scaling, gao2024scaling, marks2025sparse}. Faithfulness is typically evaluated via causal interventions that ablate features and measure the effect on model outputs~\cite{jacovi2020towards, edin2025normalized, deyoung2020eraser}, and is a prerequisite for reliable circuit discovery and model editing~\cite{marks2025sparse, chan2022causal, meng2022locating}. These explainers' dictionaries reflect the geometric structure of ID hidden representations~\cite{bereska2024mechanistic}, so distribution shift can degrade faithfulness by misaligning the learned directions with those the model actively uses. Recent empirical reports support this concern: SAE-based features underperform dense baselines on OOD downstream tasks~\cite{gdm2025negativesaes}, yet this vulnerability has received little systematic attention, with no formal diagnosis of its cause.

\vspace{-0.6em}
\subsection{Interpretability under Distribution Shift}
\vspace{-0.6em}
The faithfulness assumption above becomes problematic when the model encounters OOD inputs. Early work showed that saliency maps are fragile under input perturbations~\cite{adebayo2018sanity, ghorbani2019interpretation}. Subsequent studies examined explanation consistency under shift more broadly~\cite{balestra2023consistency, lin2023robustness}. These analyses primarily concern attribution methods and attribute failures to predictive degradation or input-level perturbations, rather than to structural changes in hidden representations. Yet evidence from OOD detection shows that OOD inputs produce statistically distinguishable activation patterns in hidden layers~\cite{lee2018simple}. Representation comparison methods~\cite{kornblith2019similarity,raghu2017svcca,williams2021generalized} enable quantifying such geometric changes across conditions, but the connection to explainer faithfulness has not been drawn.

For dictionary-based explainers, proposed remedies include retraining the explainer on OOD data, training on the model's own generations to avoid external dataset dependence~\cite{cho2025faithfulsae}, upweighting tail samples during training~\cite{li2020tilted, muhamed2025decoding}, or adding residual capacity via boosting~\cite{koriagin2025teach}. However, these approaches either require full retraining or do not directly address the geometric misalignment between the explainer's learned dictionary and the OOD-active subspace. Post-hoc methods that realign the explainer's dictionary with the OOD-active subspace remain unexplored.

\vspace{-0.3em}
\section{Hidden-Space Geometry Shift and Faithfulness Degradation}
\label{sec:hidden_geometry_faithfulness}
\vspace{-0.5em}
When a neural network encounters OOD inputs, do its mechanistic explanations remain faithful? We show that they generally do not. Distribution shift alters the geometry of hidden activations, creating a misalignment between the directions the model actively uses and those the explainer was trained to reconstruct. We call this misalignment the faithfulness gap, and prove that \textbf{(i)}~it grows at most proportionally with the second-moment shift for ID-trained explainers (Proposition~\ref{prop:second-moment_shift_enlarges_the_faithfulness_gap}), and \textbf{(ii)}~it controls the reducible part of OOD faithfulness loss (Proposition~\ref{prop:faithfulness_gap_controls_excess_hidden-space_faithfulness}).

\vspace{-0.4em}
\subsection{Setup and Faithfulness Gap}
\label{subsec:notation_preliminaries}
\vspace{-0.5em}
\paragraph{Target model and explainer.}
The target model is a fixed, pretrained neural network whose internal computations we wish to explain.
For an input $X$, it produces a hidden representation $h(X)\in\mathbb{R}^d$ at a designated layer. An explainer is a post-hoc module that decomposes hidden representations into interpretable components. Among various approaches, dictionary-based explainers such as SAEs and transcoders have become a primary tool for mechanistic interpretability~\citep{bricken2023monosemanticity,lieberum2024gemma}. These methods learn a decoder (dictionary) $W_{\mathrm{dec}}\in\mathbb{R}^{d\times k}$ and an encoder $W_{\mathrm{enc}}\in\mathbb{R}^{k\times d}$, where the $k$ columns of $W_{\mathrm{dec}}$ serve as learned feature directions. SAEs reconstruct the hidden activation $h(X)$ itself, while transcoders reconstruct the MLP output at the same layer; both share the form
\begin{equation}
\label{eq:explainer}
\hat{h}(X) = W_{\mathrm{dec}}\,\sigma(W_{\mathrm{enc}}\,h(X) + b_{\mathrm{enc}}) + b_{\mathrm{dec}},
\end{equation}
where $\sigma$ is a sparsifying nonlinearity (e.g., ReLU or TopK)~\citep{cunningham2023sparse}. The explainer represents hidden activations through a learned dictionary, so its behavior is tied to hidden-space geometry.

\vspace{-0.4em}
\paragraph{In-distribution, out-of-distribution, and second-moment shift.}
We call the data distribution on which the explainer was trained ID, denoted by $P_{\mathrm{ID}}$; the dictionary $W_{\mathrm{dec}}$ is learned from ID activations. At deployment, however, the explainer may encounter OOD inputs from a different distribution $P_{\mathrm{OOD}}$. Our goal is to understand whether the explainer remains faithful under this shift.

We characterize the shift via second-moment matrix $M_e:=\mathbb{E}_{X\sim P_e}[h(X)h(X)^\top]$, $e\in\{\mathrm{ID},\mathrm{OOD}\}$. Dictionary-based explainers minimize reconstruction error whose optimum depends on the eigenstructure of $M_e$~\citep{cunningham2023sparse}, so a shift in $M_e$ changes the directions the explainer should reconstruct. A \emph{second-moment shift} occurs when $M_{\mathrm{OOD}} \neq M_{\mathrm{ID}}$; we measure its magnitude by $\|M_{\mathrm{OOD}}-M_{\mathrm{ID}}\|_F$.

\vspace{-0.4em}
\paragraph{Active subspace and faithfulness gap.}
Since hidden activations concentrate energy along a few directions~\citep{martin2021predicting,ansuini2019intrinsic,aghajanyan2021intrinsic}, the top eigenspace of $M_e$ captures most of the model's representational activity~\citep{raghu2017svcca}. We write $\Pi_e = U_e\,U_e^\top$, where $U_e \in \mathbb{R}^{d \times r}$ contains the top-$r$ eigenvectors of $M_e$, and call this the \emph{active subspace} in environment $e$. Every reconstruction lies in the column space of $W_{\mathrm{dec}}$, but the reconstruction energy concentrates along the top-$r$ left singular directions~\citep{cunningham2023sparse}. Similarly, $\Pi_{\mathrm{dec}} = U_{\mathrm{dec}}\,U_{\mathrm{dec}}^\top$, where $U_{\mathrm{dec}} \in \mathbb{R}^{d \times r}$ contains the top-$r$ left singular vectors of $W_{\mathrm{dec}}$, is the \emph{explainer subspace}. OOD faithfulness depends on how well $\Pi_{\mathrm{dec}}$ aligns with $\Pi_{\mathrm{OOD}}$.

Under second-moment shift, $\Pi_{\mathrm{OOD}}$ may diverge from $\Pi_{\mathrm{ID}}$, opening a gap between the explainer subspace and OOD-active subspace. We call this misalignment the \emph{faithfulness gap}, which tightly controls how much faithfulness degrades under OOD, as we formalize in Proposition~\ref{prop:faithfulness_gap_controls_excess_hidden-space_faithfulness}:
\begin{definition}[Faithfulness Gap]
\label{def:faithfulness_gap}
The faithfulness gap of an explainer subspace \(\Pi_{\mathrm{dec}}\) under OOD is
\[
\Delta(\Pi_{\mathrm{dec}}):=\|\Pi_{\mathrm{OOD}}-\Pi_{\mathrm{dec}}\|_F.
\]
\end{definition}
A large gap means the explainer is reconstructing along directions that the model no longer uses, and the second-moment shift directly upper-bounds this gap for ID-trained explainers (Proposition~\ref{prop:second-moment_shift_enlarges_the_faithfulness_gap}).


\subsection{Second-Moment Shift Enlarges the Faithfulness Gap}
\label{subsec:second-moment_shift_enlarges_the_faithfulness_gap_and_degrades_faithfulness}
\vspace{-0.3em}
The previous subsection defined the faithfulness gap as a geometric quantity. We now show that second-moment shift directly enlarges this gap for ID-trained explainers. In practice, explainers are trained on ID activations and deployed without modification~\citep{lieberum2024gemma,mcdougall2025gemmascope2}. Since a well-trained ID explainer satisfies $\Pi_{\mathrm{dec}} \approx \Pi_{\mathrm{ID}}$ (empirically validated in
  Appendix~\ref{subsec:explainer_subspace_alignment}, Table~\ref{tab:frobenius_gap_real}), its OOD faithfulness depends on how far $\Pi_{\mathrm{ID}}$ lies from $\Pi_{\mathrm{OOD}}$. The following result, a consequence of the Davis--Kahan $\sin\Theta$ theorem~\citep{davis1970rotation}, bounds this distance in terms of the second-moment shift.

\begin{proposition}[Second-Moment Shift Bounds the Faithfulness Gap of the ID Explainer]
\label{prop:second-moment_shift_enlarges_the_faithfulness_gap}
Suppose that $M_{\mathrm{ID}}$ has eigengap $\gamma_{\mathrm{ID}}=\lambda_r(M_{\mathrm{ID}})-\lambda_{r+1}(M_{\mathrm{ID}})>0$ at rank $r$. Then
\vspace{-0.6em}
\[
\Delta(\Pi_{\mathrm{ID}})
=
\|\Pi_{\mathrm{OOD}}-\Pi_{\mathrm{ID}}\|_F
\le
\frac{\sqrt{2}}{\gamma_{\mathrm{ID}}}\,
\|M_{\mathrm{OOD}}-M_{\mathrm{ID}}\|_F.
\]
\end{proposition}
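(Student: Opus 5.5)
The plan is to reduce the projector distance to principal angles and then run a two-sided Sylvester (Davis--Kahan) argument in which only the ID eigengap $\gamma_{\mathrm{ID}}$ survives. Write $A:=M_{\mathrm{ID}}$, $B:=M_{\mathrm{OOD}}$, $E:=B-A$, $P:=\Pi_{\mathrm{ID}}$, $Q:=\Pi_{\mathrm{OOD}}$, all of rank $r$.

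\textbf{Step 1 (projectors to angles).} By the CS decomposition, $\|Q-P\|_F^2=2\|(I-Q)P\|_F^2=2\|(I-P)Q\|_F^2$. Setting $s:=\|(I-Q)P\|_F=\|\sin\Theta(U_{\mathrm{ID}},U_{\mathrm{OOD}})\|_F$, the claim is equivalent to $s\,\gamma_{\mathrm{ID}}\le\|E\|_F$. This step accounts for the factor $\sqrt2$ exactly.

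\textbf{Step 2 (two Sylvester identities).} Since $P$ commutes with $A$ and $Q$ commutes with $B$,
\[
(I-Q)EP=\big[(I-Q)B(I-Q)\big]\,(I-Q)P-(I-Q)P\,\big[PAP\big],
\]
and symmetrically
\[
(I-P)EQ=(I-P)Q\,\big[QBQ\big]-\big[(I-P)A(I-P)\big]\,(I-P)Q .
\]
On the relevant ranges, the spectra of the bracketed operators lie in $(-\infty,\lambda_{r+1}(B)]$ and $[\lambda_r(A),\infty)$ in the first identity. In the second they lie in $[\lambda_r(B),\infty)$ and $(-\infty,\lambda_{r+1}(A)]$. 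The standard Frobenius Sylvester estimate for Hermitian blocks with separated spectra then gives
\[
s\,\big(\lambda_r(A)-\lambda_{r+1}(B)\big)\le a,\qquad s\,\big(\lambda_r(B)-\lambda_{r+1}(A)\big)\le b,
\]
where $a:=\|(I-Q)EP\|_F$ and $b:=\|(I-P)EQ\|_F$. When a separation is nonpositive, the corresponding inequality holds trivially, so both inequalities are valid unconditionally.

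\textbf{Step 3 (eliminate the OOD spectrum).} Adding the two inequalities gives
\[
s\big(\gamma_{\mathrm{ID}}+\lambda_r(B)-\lambda_{r+1}(B)\big)\le a+b .
\]
Because $\lambda_r(B)-\lambda_{r+1}(B)\ge0$, this yields $s\,\gamma_{\mathrm{ID}}\le a+b$. No Weyl perturbation of the gap and no smallness assumption on $E$ is needed, which is what allows the clean denominator $\gamma_{\mathrm{ID}}$.

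\textbf{Step 4 (main obstacle: $a+b\le\|E\|_F$).} The trivial estimate $a,b\le\|E\|_F$ only gives the constant $2\sqrt2$, so the whole difficulty is the sharper combination bound. I would first pass to the CS-decomposition basis adapted to $(P,Q)$. There $P$ and $Q$ become direct sums of $2\times2$ rotation blocks with angles $\theta_i$, plus blocks on which $P=Q$ or $P\perp Q$. I would then write $a^2$ and $b^2$ as sums over these blocks of $\|E\|$-entries weighted by $\cos\theta_i$ and $\sin\theta_i$ factors.

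The aim is to show that $a$ and $b$ draw on essentially disjoint, orthogonally weighted parts of $E$, so that $a+b\le\|E\|_F$, perhaps after first using the identities of Step 2 to trade $EP$ against $EQ$. If that direct splitting fails, I would instead keep only the one inequality whose separation is larger and bound it against $\max(a,b)\le\|E\|_F$. That alternative needs $\lambda_r(B)-\lambda_{r+1}(B)\ge\gamma_{\mathrm{ID}}$, or the reverse comparison, to be arranged by a case split on the sign of $\lambda_{r+1}(B)-\lambda_{r+1}(A)$.

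I expect this final step, securing the constant $1$ rather than $2$ in $a+b\lesssim\|E\|_F$ with only the ID gap, to be the crux of the proof. I also expect it may require a mild implicit regime such as $\|E\|_F\le\gamma_{\mathrm{ID}}\sqrt r$. Outside that regime the right-hand side already exceeds the universal bound $\|Q-P\|_F\le\sqrt{2r}$, so the statement holds trivially there.
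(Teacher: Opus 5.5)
\textbf{Verdict: genuine gap at Step~4, and the gap cannot be closed.} The inequality $a+b\le\|E\|_F$ is false, and so is the proposition itself, so no cleverer splitting will rescue it.

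Your Steps 1--3 are correct. They are the sound version of the argument: the two Sylvester identities are the actual Davis--Kahan mechanism, and summing them to eliminate the OOD spectrum yields, unconditionally, $s\,\gamma_{\mathrm{ID}}\le a+b\le 2\|E\|_F$. That is, $\|\Pi_{\mathrm{OOD}}-\Pi_{\mathrm{ID}}\|_F\le 2\sqrt{2}\,\|E\|_F/\gamma_{\mathrm{ID}}$.

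\textbf{Counterexample.} Take $d=2$, $r=1$, $M_{\mathrm{ID}}=\mathrm{diag}(1,0)$ and $M_{\mathrm{OOD}}=\mathrm{diag}(0.4,0.6)$; both are genuine second-moment matrices. Then $\gamma_{\mathrm{ID}}=1$, $\Pi_{\mathrm{ID}}=e_1e_1^\top$ and $\Pi_{\mathrm{OOD}}=e_2e_2^\top$, so the left side is $\sqrt{2}\approx 1.41$. But $\|E\|_F=0.6\sqrt{2}$, so the right side is only $1.2$. In your notation:
\begin{itemize}
\item $s=1$ and $a=b=0.6$, so $a+b=1.2>\|E\|_F\approx 0.85$.
\item Your Step~3 inequality $s(\gamma_{\mathrm{ID}}+\gamma_{\mathrm{OOD}})\le a+b$ holds with equality, so nothing sharper can be extracted from it.
\item Your case-split fallback fails. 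The bad case is exactly when $\lambda_r$ drops and $\lambda_{r+1}$ rises; here both separations equal $0.6<\gamma_{\mathrm{ID}}$, so no sign-based split yields a separation $\ge\gamma_{\mathrm{ID}}$.
\item Your regime restriction fails: $\|E\|_F\approx 0.85<\gamma_{\mathrm{ID}}\sqrt{r}=1$, so the example lies inside it.
\end{itemize}
Letting $(0.4,0.6)\to(\tfrac12,\tfrac12)$ shows the constant in front of $\|E\|_F/\gamma_{\mathrm{ID}}$ must be at least $2$, not $\sqrt{2}$.

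\textbf{Where the paper goes wrong.} The paper gets $\sqrt{2}$ only by invoking a ``Frobenius form of Davis--Kahan'' with numerator $\|(I-\Pi_{\mathrm{ID}})E\,\Pi_{\mathrm{ID}}\|_F$ and the ID-only gap. That inequality does not hold: in the example $E$ is diagonal, so this numerator is $0$ while $\sin\theta_1=1$. The real theorem pairs an ID eigenvalue set with an OOD one, exactly as in your Step~2.

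\textbf{What is actually provable.} Either of the following holds:
\begin{itemize}
\item your bound with constant $2\sqrt{2}$ in place of $\sqrt{2}$; or
\item the stated $\sqrt{2}$ bound with $\gamma_{\mathrm{ID}}$ replaced by the mixed separation $\delta=\lambda_r(M_{\mathrm{ID}})-\lambda_{r+1}(M_{\mathrm{OOD}})>0$. This follows from your first Sylvester inequality alone together with $a\le\|E\|_F$.
\end{itemize}
The right move is to correct the statement, not to complete Step~4.
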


\vspace{-0.3em}
The faithfulness gap of the ID explainer grows at most proportionally with the second-moment shift $\|M_{\mathrm{OOD}} - M_{\mathrm{ID}}\|_F$, with sensitivity controlled by the inverse eigengap $1/\gamma_{\mathrm{ID}}$ (proof in Appendix~\ref{subsec:proof_prop2}; empirical verification in Appendix~\ref{subsec:empirical_verification_propositions}). This establishes that the faithfulness gap can grow large under distribution shift. The next question is whether reducing it improves faithfulness.


\subsection{The Faithfulness Gap Controls OOD Degradation}
\label{subsec:what_the_explainer_can_control_under_ood}
\vspace{-0.3em}
We now formalize the connection between the faithfulness gap and OOD faithfulness loss, showing that $\Delta(\Pi_{\mathrm{dec}})$ is the central quantity an adaptation method should target.

\paragraph{OOD faithfulness objective.}
An ideal explainer subspace $\Pi_{\mathrm{dec}}$ minimizes reconstruction error on OOD activations. We formalize this objective as
\begin{equation}
\label{eq:faithfulness_surrogate}
\mathcal{L}_{\mathrm{OOD}}(\Pi_{\mathrm{dec}})
:=
\mathbb{E}_{X\sim P_{\mathrm{OOD}}}\|h(X)-\Pi_{\mathrm{dec}} h(X)\|_2^2.
\end{equation}
This measures how much OOD activation is lost when projected onto $\Pi_{\mathrm{dec}}$. $\mathcal{L}_{\mathrm{OOD}} (\Pi_{\mathrm{dec}})$ is a valid surrogate since hidden-layer reconstruction constrains logit-level faithfulness~\citep{edin2025normalized,deyoung2020eraser}.

\vspace{-0.2em}
\paragraph{Decomposition of $\mathcal{L}_{\mathrm{OOD}}(\Pi_{\mathrm{dec}})$.}
To isolate the part of the OOD loss that the explainer can reduce, we decompose $\mathcal{L}_{\mathrm{OOD}}(\Pi_{\mathrm{dec}})$ into two terms. Let $\mathcal{C}_r$ denote the set of rank-$r$ orthogonal projectors in $\mathbb{R}^d$. For any $\Pi_{\mathrm{dec}}\in\mathcal{C}_r$,
\begin{equation}
\label{eq:decomposition}
\mathcal{L}_{\mathrm{OOD}}(\Pi_{\mathrm{dec}}) = \underbrace{\mathcal{L}_{\mathrm{OOD}}(\Pi_{\mathrm{OOD}})}_{\text{irreducible}} + \underbrace{\mathcal{L}_{\mathrm{OOD}}(\Pi_{\mathrm{dec}}) - \mathcal{L}_{\mathrm{OOD}}(\Pi_{\mathrm{OOD}})}_{\text{explainer-dependent}}.
\end{equation}
The explainer-dependent component is nonnegative (proof in Appendix~\ref{subsec:proof_decomposition}), so $\Pi_{\mathrm{OOD}}$ minimizes $\mathcal{L}_{\mathrm{OOD}}$ over $\mathcal{C}_r$. The irreducible component depends on the target model and the OOD distribution, both fixed at deployment; adapting the explainer cannot reduce it. \textit{The explainer-dependent component is the only part the explainer can reduce.}

\vspace{-0.2em}
\paragraph{Faithfulness gap as a tight proxy.}
The faithfulness gap $\Delta(\Pi_{\mathrm{dec}})$ measures the distance between two subspaces, making it a direct optimization target. The next proposition shows that controlling $\Delta(\Pi_{\mathrm{dec}})$ is equivalent to controlling the explainer-dependent component.

\begin{proposition}[Faithfulness Gap Controls the Explainer-Dependent Term]
\label{prop:faithfulness_gap_controls_excess_hidden-space_faithfulness}
Assume that the OOD eigengap at rank $r$, $\gamma_{\mathrm{OOD}}=\lambda_r(M_{\mathrm{OOD}})-\lambda_{r+1}(M_{\mathrm{OOD}}) >0$. Then for any $\Pi_{\mathrm{dec}}\in\mathcal{C}_r$,
\[
\frac{\gamma_{\mathrm{OOD}}}{2}\,\Delta(\Pi_{\mathrm{dec}})^2
\le
\mathcal{L}_{\mathrm{OOD}}(\Pi_{\mathrm{dec}})-\mathcal{L}_{\mathrm{OOD}}(\Pi_{\mathrm{OOD}})
\le
\frac{\lambda_1(M_{\mathrm{OOD}})-\lambda_d(M_{\mathrm{OOD}})}{2}\,\Delta(\Pi_{\mathrm{dec}})^2.
\]
\end{proposition}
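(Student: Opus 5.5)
The plan is to rewrite the loss as a trace and reduce both inequalities to a spectral computation. Write $M := M_{\mathrm{OOD}}$ with eigenvalues $\lambda_1\ge\dots\ge\lambda_d$ and eigenvectors $u_1,\dots,u_d$. For an orthogonal projector $\Pi$ we have $\|h-\Pi h\|_2^2 = h^\top(I-\Pi)h$, so
\[
\mathcal{L}_{\mathrm{OOD}}(\Pi) = \mathrm{tr}(M) - \mathrm{tr}(\Pi M).
\]
The excess loss is therefore
\[
E(\Pi_{\mathrm{dec}}) := \mathrm{tr}(\Pi_{\mathrm{OOD}} M) - \mathrm{tr}(\Pi_{\mathrm{dec}} M).
\]
Since both projectors have rank $r$, $\|\Pi_{\mathrm{OOD}}-\Pi_{\mathrm{dec}}\|_F^2 = 2r - 2\,\mathrm{tr}(\Pi_{\mathrm{OOD}}\Pi_{\mathrm{dec}})$. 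The whole proof reduces to comparing $E$ with the ``leakage'' quantity $r - \mathrm{tr}(\Pi_{\mathrm{OOD}}\Pi_{\mathrm{dec}}) = \Delta^2/2$.

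Next, expand in the eigenbasis of $M$. Set $w_i := u_i^\top \Pi_{\mathrm{dec}} u_i \in [0,1]$, so that $\sum_i w_i = r$ and $\mathrm{tr}(\Pi_{\mathrm{dec}}M) = \sum_i \lambda_i w_i$. Then
\[
E = \sum_{i\le r}\lambda_i(1-w_i) - \sum_{i>r}\lambda_i w_i .
\]
Define the leakage $s := \sum_{i\le r}(1-w_i)$. Because $\sum_i w_i = r$, this equals $\sum_{i>r} w_i$, and $\mathrm{tr}(\Pi_{\mathrm{OOD}}\Pi_{\mathrm{dec}}) = \sum_{i\le r} w_i = r-s$, so $s = \Delta^2/2$.

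Both bounds then follow by bounding the eigenvalues in the two sums:
\[
E \ge \lambda_r s - \lambda_{r+1} s = \gamma_{\mathrm{OOD}}\,s = \tfrac{\gamma_{\mathrm{OOD}}}{2}\Delta^2,
\qquad
E \le \lambda_1 s - \lambda_d s = \tfrac{\lambda_1-\lambda_d}{2}\Delta^2 .
\]
The lower bound uses $1-w_i\ge 0$ with $\lambda_i\ge\lambda_r$ for $i\le r$, and $w_i\ge 0$ with $\lambda_i\le\lambda_{r+1}$ for $i>r$. The upper bound uses $\lambda_i\le\lambda_1$ on the first sum and $\lambda_i\ge\lambda_d$ on the second. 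The same lower bound also shows that the explainer-dependent term in \eqref{eq:decomposition} is nonnegative.

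The argument is short, and the main care point is the bookkeeping. One must verify that $w_i\in[0,1]$, which holds because $\Pi_{\mathrm{dec}}$ is an orthogonal projector. One must also check that the two sums carry the same total mass $s$, which relies on $\mathrm{tr}(\Pi_{\mathrm{dec}})=r$; this equal-mass identity is what lets the eigengap appear. Repeated eigenvalues need no special treatment: any choice of eigenbasis works, and $\gamma_{\mathrm{OOD}}>0$ makes $\Pi_{\mathrm{OOD}}$ well defined.
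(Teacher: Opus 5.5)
Your proposal is correct and follows essentially the same route as the paper's proof. Both write the excess loss as $\operatorname{tr}[(\Pi_{\mathrm{OOD}}-\Pi_{\mathrm{dec}})M_{\mathrm{OOD}}]$, expand it in the eigenbasis with weights $u_i^\top\Pi_{\mathrm{dec}}u_i\in[0,1]$, use the equal-mass identity $\sum_{i\le r}(1-w_i)=\sum_{i>r}w_i=\Delta^2/2$, and then bound the eigenvalues in the two sums.
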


The lower bound shows that any nonzero gap incurs a positive cost (misalignment cannot be free); the upper bound shows that reducing $\Delta(\Pi_{\mathrm{dec}})$ is sufficient. Together, they establish that controlling $\Delta(\Pi_{\mathrm{dec}})$ is equivalent to controlling the explainer-dependent faithfulness loss (proof in Appendix~\ref{subsec:proof_prop1}; empirical verification in Appendix~\ref{subsec:explainer_dependent_magnitude}). \textit{Reducing $\Delta(\Pi_{\mathrm{dec}})$ is therefore both necessary and sufficient for improving OOD faithfulness.} Section~\ref{sec:gae} introduces a method that directly targets this quantity.

\section{Geometry-Adaptive Explainer (GAE)}
\label{sec:gae}
\vspace{-0.5em}
Section~\ref{sec:hidden_geometry_faithfulness} showed that OOD faithfulness degradation is controlled by the faithfulness gap $\Delta(\Pi_{\mathrm{dec}})$. We now propose the Geometry-Adaptive Explainer (GAE), which reduces $\Delta(\Pi_{\mathrm{dec}})$ by realigning the explainer's dictionary with the OOD-active subspace while preserving the original feature structure. We present an objective for this adaptation and a closed-form solution.

\vspace{-0.5em}
\subsection{Problem Formulation}
\label{subsec:gae_formulation}
\vspace{-0.5em}
Section~\ref{subsec:second-moment_shift_enlarges_the_faithfulness_gap_and_degrades_faithfulness} showed that an ID-trained explainer's faithfulness degrades under OOD because its subspace diverges from $\Pi_{\mathrm{OOD}}$. Minimizing $\Delta(\Pi_{\mathrm{dec}})$ amounts to choosing a $W_{\mathrm{dec}}$ whose induced subspace aligns with $\Pi_{\mathrm{OOD}}$. To restore faithfulness, we adapt the existing dictionary $W_{\mathrm{dec}}^{\mathrm{ID}} \in \mathbb{R}^{d \times k}$ using a set of unlabeled OOD activations $\{h_i\}_{i=1}^N$, from which we estimate the OOD-active subspace $\widehat{\Pi}_{\mathrm{OOD}}$. We seek $W_{\mathrm{dec}}$ that closes the faithfulness gap while preserving the original feature structure:
\begin{equation}
\label{eq:gae_objective}
\min_{W_{\mathrm{dec}}}\;
\underbrace{\mathcal{L}_{\mathrm{recon}}}_{\text{reconstruction}}
\;+\;
\lambda_{\mathrm{geom}}\,
\underbrace{\|\widehat{\Pi}_{\mathrm{OOD}} - \Pi_{\mathrm{dec}}\|_F^2}_{\text{subspace alignment}}
\;+\;
\lambda_{\mathrm{pres}}\,
\underbrace{\|W_{\mathrm{dec}} - W_{\mathrm{dec}}^{\mathrm{ID}}\|_F^2}_{\text{feature preservation}},
\end{equation}
where $\mathcal{L}_{\mathrm{recon}} = \frac{1}{N}\sum_{i=1}^N \|h_i - \hat{h}_i\|_2^2$ is the mean reconstruction error.\footnote{The full dictionary-based explainer objective (Eq.~\eqref{eq:explainer}) includes a sparsity penalty on $\mathbf{z}_i = \sigma(W_{\mathrm{enc}}\,h_i + b_{\mathrm{enc}})$. GAE holds the encoder fixed, so $\mathbf{z}_i$ is constant with respect to $W_{\mathrm{dec}}$ and the sparsity term drops out, leaving only the reconstruction term. This also means GAE applies regardless of sparsity mechanism ($\ell_1$, Top-K~\citep{gao2024scaling}, JumpReLU~\citep{rajamanoharan2024jumprelu}, etc.).} GAE holds the encoder fixed to preserve the learned feature decomposition~\citep{bricken2023monosemanticity}. The three terms serve complementary roles. The \textit{reconstruction} term fits individual OOD activations, since subspace alignment alone does not guarantee sample-level reconstruction. The \textit{subspace alignment} term directly targets the faithfulness gap $\Delta(\Pi_{\mathrm{dec}})$, which Proposition~\ref{prop:faithfulness_gap_controls_excess_hidden-space_faithfulness} showed controls the explainer-dependent component. The \textit{feature preservation} term keeps the adapted dictionary close to the original, maintaining the encoder-decoder pairing for downstream circuit analyses~\citep{marks2025sparse}.

\vspace{-0.5em}
\subsection{Dictionary Adaptation}
\label{subsec:gae_methods}
\vspace{-0.5em}
Eq.~\eqref{eq:gae_objective} is non-convex in $W_{\mathrm{dec}}$, as $\Pi_{\mathrm{dec}}$ depends on it through a top-$r$ SVD. However, the subspace alignment term can be enforced as a hard constraint, and once the subspace is fixed, the remaining terms are quadratic with a closed-form solution. GAE exploits this in two steps (Algorithm~\ref{alg:gae}): Step~1 enforces \textit{subspace alignment} by rotating the ID-trained dictionary onto $\widehat{\Pi}_{\mathrm{OOD}}$, choosing the rotation closest to the original. Step~2 refits the decoder via constrained ridge regression, solving the \textit{reconstruction} and \textit{feature preservation} terms while preserving this alignment.
\vspace{-0.5em}

\begin{algorithm}[H]
\caption{GAE}
\label{alg:gae}
\begin{algorithmic}[1]
\Require $W_{\mathrm{dec}}^{\mathrm{ID}} \in \mathbb{R}^{d \times k}$; $W_{\mathrm{enc}},\, b_{\mathrm{enc}}$; OOD activations $\{h_i\}_{i=1}^N$; $\lambda_{\mathrm{geom}},\, \lambda_{\mathrm{pres}}$
\Ensure $(W_{\mathrm{dec}}^{\mathrm{GAE}},\, b_{\mathrm{dec}}^{\mathrm{GAE}})$
\State $\widehat{M}_{\mathrm{OOD}} \leftarrow \frac{1}{N} \sum_{i=1}^N h_i h_i^\top$
\State $U_{\mathrm{dec}} \leftarrow \text{top-}r\text{ left singular vectors of } W_{\mathrm{dec}}^{\mathrm{ID}}$;\; $U_{\mathrm{OOD}}^{(:r)} \leftarrow \text{top-}r\text{ eigenvectors of } \widehat{M}_{\mathrm{OOD}}$
\State $G \leftarrow U_{\mathrm{dec}}^\top W_{\mathrm{dec}}^{\mathrm{ID}} (W_{\mathrm{dec}}^{\mathrm{ID}})^\top U_{\mathrm{OOD}}^{(:r)}$;\; $T^\star \leftarrow \widetilde{V}\widetilde{U}^\top$ from $\mathrm{SVD}(G)$
\State $\widetilde{W}_{\mathrm{dec}} \leftarrow U_{\mathrm{OOD}}^{(:r)} T^\star U_{\mathrm{dec}}^\top W_{\mathrm{dec}}^{\mathrm{ID}}$ \hfill $\triangleright$ Step 1: Subspace rotation
\State $z_i \leftarrow \sigma(W_{\mathrm{enc}}\,h_i + b_{\mathrm{enc}})$ for each $h_i$ \hfill $\triangleright$ frozen encoder
\State $W_{\mathrm{dec}}^{\mathrm{GAE}} \leftarrow$ Eq.~\eqref{eq:gae_step2_dec};\; $b_{\mathrm{dec}}^{\mathrm{GAE}} \leftarrow$ Eq.~\eqref{eq:gae_step2_bias} \hfill $\triangleright$ Step 2: Constrained decoder refit
\end{algorithmic}
\end{algorithm}

\vspace{-1.8em}
\paragraph{Step 1: Subspace rotation.}
Let $U_{\mathrm{dec}} \in \mathbb{R}^{d \times r}$ be the top-$r$ left singular vectors of $W_{\mathrm{dec}}^{\mathrm{ID}}$ (the explainer subspace defined in Section~\ref{subsec:notation_preliminaries}), and let $U_{\mathrm{OOD}}^{(:r)}$ be the top-$r$ eigenvectors of the empirical second-moment matrix $\widehat{M}_{\mathrm{OOD}} = \frac{1}{N}\sum_{i=1}^N h_i h_i^\top$. We constrain the rotated dictionary to the form
\begin{equation}
\label{eq:gae_param}
\widetilde{W}_{\mathrm{dec}}(T) \;=\; U_{\mathrm{OOD}}^{(:r)}\; T\;  U_{\mathrm{dec}}^\top\; W_{\mathrm{dec}}^{\mathrm{ID}}, \qquad T \in \mathcal{O}_r,
\end{equation}
where $\mathcal{O}_r = \{T \in \mathbb{R}^{r \times r} : T^\top T = I\}$. Every column of $\widetilde{W}_{\mathrm{dec}}(T)$ lies in $\mathrm{span}(U_{\mathrm{OOD}}^{(:r)})$, so the column space of $\widetilde{W}_{\mathrm{dec}}(T)$ is contained in this $r$-dimensional subspace. Since $\widetilde{W}_{\mathrm{dec}}(T)$ has rank $r$, its induced explainer subspace equals $\widehat{\Pi}_{\mathrm{OOD}}$ exactly, and \textit{the faithfulness gap vanishes ($\Delta(\Pi_{\mathrm{dec}}) = 0$) for any $T \in \mathcal{O}_r$}. Among all rotations that achieve this alignment, we select the one that keeps the rotated dictionary closest to the original:
\begin{equation}
\label{eq:gae_obj}
T^\star
\;=\;
\arg\min_{T \in \mathcal{O}_r}
\big\|\widetilde{W}_{\mathrm{dec}}(T) - W_{\mathrm{dec}}^{\mathrm{ID}}\big\|_F^2.
\end{equation}
This is an orthogonal Procrustes problem~\citep{schonemann1966generalized}. Let $G = U_{\mathrm{dec}}^\top\, W_{\mathrm{dec}}^{\mathrm{ID}}\, (W_{\mathrm{dec}}^{\mathrm{ID}})^\top\, U_{\mathrm{OOD}}^{(:r)} \in \mathbb{R}^{r \times r}$, with SVD $G = \widetilde{U}\Sigma\widetilde{V}^\top$. Then $T^\star = \widetilde{V}\widetilde{U}^\top$ (derivation in Appendix~\ref{subsec:derivation_gae_procrustes}). Since $\Pi_{\mathrm{dec}}^{\mathrm{GAE}} := \widehat{\Pi}_{\mathrm{OOD}}$ by construction, the residual faithfulness gap reduces to the eigenspace estimation error. This yields a quantitative improvement over the unadapted ID explainer.

\begin{theorem}[Improvement over ID Explainer]
\label{thm:improvement_over_id}
Suppose $\gamma_{\mathrm{OOD}} := \lambda_r(M_{\mathrm{OOD}}) - \lambda_{r+1}(M_{\mathrm{OOD}}) > 0$, $\Delta(\Pi_{\mathrm{ID}}) > 0$, and $\widehat{\Pi}_{\mathrm{OOD}} \approx \Pi_{\mathrm{OOD}}$ (holds with sufficient OOD samples). Then
\begin{equation}
\label{eq:gae_improvement_over_id}
\mathcal{L}_{\mathrm{OOD}}(\Pi_{\mathrm{dec}}^{\mathrm{GAE}})
\;\le\;
\mathcal{L}_{\mathrm{OOD}}(\Pi_{\mathrm{ID}})
\;-\;
\frac{\gamma_{\mathrm{OOD}}}{2}\,\Delta(\Pi_{\mathrm{ID}})^2.
\end{equation}
\end{theorem}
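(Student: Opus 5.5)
The plan is to read Theorem~\ref{thm:improvement_over_id} as a direct corollary of Proposition~\ref{prop:faithfulness_gap_controls_excess_hidden-space_faithfulness}. That proposition is applied twice: once to the GAE subspace and once to the unadapted ID subspace. Throughout, the idealization $\widehat{\Pi}_{\mathrm{OOD}}\approx\Pi_{\mathrm{OOD}}$ is taken as exact equality, $\widehat{\Pi}_{\mathrm{OOD}}=\Pi_{\mathrm{OOD}}$. A remark at the end will say how an estimation error would enter.

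\textbf{Step 1 (GAE attains the minimizer).} By construction in Step~1 of GAE, every column of $\widetilde{W}_{\mathrm{dec}}(T^\star)$ lies in $\mathrm{span}(U_{\mathrm{OOD}}^{(:r)})$. The matrix has rank $r$: the factor $U_{\mathrm{dec}}^\top W_{\mathrm{dec}}^{\mathrm{ID}}$ has full row rank $r$ whenever $\sigma_r(W_{\mathrm{dec}}^{\mathrm{ID}})>0$, and $T^\star$ is orthogonal. Hence $\Pi_{\mathrm{dec}}^{\mathrm{GAE}}=\widehat{\Pi}_{\mathrm{OOD}}=\Pi_{\mathrm{OOD}}$, so $\Delta(\Pi_{\mathrm{dec}}^{\mathrm{GAE}})=0$. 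The upper bound of Proposition~\ref{prop:faithfulness_gap_controls_excess_hidden-space_faithfulness} then gives $\mathcal{L}_{\mathrm{OOD}}(\Pi_{\mathrm{dec}}^{\mathrm{GAE}})=\mathcal{L}_{\mathrm{OOD}}(\Pi_{\mathrm{OOD}})$. This also follows trivially from the decomposition in Eq.~\eqref{eq:decomposition}. Step~2 of GAE is not needed here, since $\mathcal{L}_{\mathrm{OOD}}$ depends only on the subspace. Step~2 is constrained to preserve the alignment, so the induced subspace is unchanged.

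\textbf{Step 2 (lower bound for the ID subspace).} $\Pi_{\mathrm{ID}}\in\mathcal{C}_r$, and $\gamma_{\mathrm{OOD}}>0$ is assumed. The lower bound of Proposition~\ref{prop:faithfulness_gap_controls_excess_hidden-space_faithfulness} therefore gives
\[
\mathcal{L}_{\mathrm{OOD}}(\Pi_{\mathrm{ID}})-\mathcal{L}_{\mathrm{OOD}}(\Pi_{\mathrm{OOD}})\ge\frac{\gamma_{\mathrm{OOD}}}{2}\Delta(\Pi_{\mathrm{ID}})^2 .
\]
Substituting Step~1 yields Eq.~\eqref{eq:gae_improvement_over_id}. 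The hypothesis $\Delta(\Pi_{\mathrm{ID}})>0$ only serves to make the improvement strict.

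For completeness, the lower bound itself can be rederived. Write $\mathcal{L}_{\mathrm{OOD}}(\Pi)=\mathrm{tr}(M_{\mathrm{OOD}})-\mathrm{tr}(\Pi M_{\mathrm{OOD}})$ and expand in the eigenbasis of $M_{\mathrm{OOD}}$. The loss of captured energy is at least $\gamma_{\mathrm{OOD}}$ times the mass $\mathrm{tr}(\Pi_{\mathrm{OOD}}-\Pi_{\mathrm{OOD}}\Pi\Pi_{\mathrm{OOD}})$ shifted from the top-$r$ block to the complement. That mass equals $\tfrac12\|\Pi-\Pi_{\mathrm{OOD}}\|_F^2$.

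\textbf{Main obstacle.} The only delicate point is the approximation $\widehat{\Pi}_{\mathrm{OOD}}\approx\Pi_{\mathrm{OOD}}$. To make the statement non-idealized, I would keep $\varepsilon:=\|\widehat{\Pi}_{\mathrm{OOD}}-\Pi_{\mathrm{OOD}}\|_F$ and use the upper bound of Proposition~\ref{prop:faithfulness_gap_controls_excess_hidden-space_faithfulness} to get $\mathcal{L}_{\mathrm{OOD}}(\Pi_{\mathrm{dec}}^{\mathrm{GAE}})\le\mathcal{L}_{\mathrm{OOD}}(\Pi_{\mathrm{OOD}})+\tfrac{\lambda_1-\lambda_d}{2}\varepsilon^2$. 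Davis--Kahan, as in Proposition~\ref{prop:second-moment_shift_enlarges_the_faithfulness_gap}, then bounds $\varepsilon\le\sqrt2\|\widehat M_{\mathrm{OOD}}-M_{\mathrm{OOD}}\|_F/\gamma_{\mathrm{OOD}}$, and this vanishes with sample size. The theorem as stated is the $\varepsilon=0$ case.
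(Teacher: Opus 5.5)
Your argument is the paper's proof. Under the idealization $\widehat{\Pi}_{\mathrm{OOD}}=\Pi_{\mathrm{OOD}}$, the Step-1 projector has zero gap, so $\mathcal{L}_{\mathrm{OOD}}(\Pi_{\mathrm{dec}}^{\mathrm{GAE}})=\mathcal{L}_{\mathrm{OOD}}(\Pi_{\mathrm{OOD}})$, and the lower bound of Proposition~\ref{prop:faithfulness_gap_controls_excess_hidden-space_faithfulness} applied to $\Pi_{\mathrm{ID}}$ finishes it. Your condition $\sigma_r(W_{\mathrm{dec}}^{\mathrm{ID}})>0$ supplies the rank-$r$ fact that the paper asserts without justification; column containment alone only gives inclusion in $\mathrm{span}(U_{\mathrm{OOD}}^{(:r)})$. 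Your $\varepsilon$-version, via the upper bound of Proposition~\ref{prop:faithfulness_gap_controls_excess_hidden-space_faithfulness}, is a sound strengthening that the paper does not state.

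One sentence is false and should be dropped: Step~2 of GAE does \emph{not} leave the induced subspace unchanged. Its alignment term is only a soft penalty. From Eq.~\eqref{eq:gae_step2_dec}, the $\lambda_{\mathrm{pres}}\widetilde{W}_{\mathrm{dec}}(T^\star)$ part of $C$ is annihilated, leaving
\[
(I-\widehat{\Pi}_{\mathrm{OOD}})\,W_{\mathrm{dec}}^{\mathrm{GAE}}=(I-\widehat{\Pi}_{\mathrm{OOD}})\Bigl(\tfrac1N\textstyle\sum_i h_i^c z_i^{c\top}\Bigr)(B+\lambda_{\mathrm{geom}}I)^{-1}.
\]
This is nonzero whenever off-subspace components of the centered OOD activations correlate with the codes. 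For finite $\lambda_{\mathrm{geom}}$ it does not vanish in general, and the paper's own step ablation reports a residual gap of $1.59$ after Step~2. The theorem is therefore a statement about the Step-1 projector, which is exactly how the paper defines $\Pi_{\mathrm{dec}}^{\mathrm{GAE}}:=\widehat{\Pi}_{\mathrm{OOD}}$. It does not certify the refit decoder, and your write-up should say so rather than claim Step~2 preserves the subspace.

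Separately, in your closing remark, Davis--Kahan with only the population gap needs the factor $2$ of the Yu--Wang--Samworth variant. That gives $\varepsilon\le 2\sqrt{2}\,\|\widehat{M}_{\mathrm{OOD}}-M_{\mathrm{OOD}}\|_F/\gamma_{\mathrm{OOD}}$. The constant $\sqrt{2}$ borrowed from Proposition~\ref{prop:second-moment_shift_enlarges_the_faithfulness_gap} fails. For example, take $M=\mathrm{diag}(\gamma,0)$ perturbed by $E=\mathrm{diag}(-t,t)$ with $t$ slightly above $\gamma/2$. The top eigenvector flips, so $\varepsilon=\sqrt{2}$, while $\sqrt{2}\|E\|_F/\gamma=2t/\gamma\approx 1$. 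The qualitative conclusion, that the estimation term vanishes as $N$ grows, is unaffected.
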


The improvement grows quadratically with the ID explainer's misalignment, so the more severe the shift, the larger the guaranteed gain. The proof is given in Appendix~\ref{subsec:proof_theorem_improvement}.

\vspace{-0.6em}
\paragraph{Step 2: Constrained decoder refit.}
Step~1 aligns the subspace but does not optimize sample-level reconstruction. Step~2 refits $\mathcal{L}_{\mathrm{recon}}$ while preserving the alignment from Step~1. Since the encoder is fixed, the feature activations $z_i = \sigma(W_{\mathrm{enc}}\,h_i + b_{\mathrm{enc}})$ for each OOD sample $h_i$ are constants, and reconstruction reduces to a linear least-squares problem in $W_{\mathrm{dec}}$ and $b_{\mathrm{dec}}$. To keep the decoder geometrically aligned, we penalize decoder mass outside $\widehat{\Pi}_{\mathrm{OOD}}$ with $\lambda_{\mathrm{geom}}\|(I - \widehat{\Pi}_{\mathrm{OOD}})\,W_{\mathrm{dec}}\|_F^2$. To preserve the feature structure from Step~1, we regularize toward $\widetilde{W}_{\mathrm{dec}}(T^\star)$ with $\lambda_{\mathrm{pres}}\|W_{\mathrm{dec}} - \widetilde{W}_{\mathrm{dec}}(T^\star)\|_F^2$. The combined objective is convex and quadratic, yielding the closed-form solution
\begin{equation}
\label{eq:gae_step2_dec}
W_{\mathrm{dec}}^{\mathrm{GAE}} \;=\; \widehat{\Pi}_{\mathrm{OOD}}\, C\, B^{-1} \;+\; (I - \widehat{\Pi}_{\mathrm{OOD}})\, C\, (B + \lambda_{\mathrm{geom}}\, I)^{-1},
\end{equation}
\begin{equation}
\label{eq:gae_step2_bias}
b_{\mathrm{dec}}^{\mathrm{GAE}} \;=\; \tfrac{1}{N}\textstyle\sum_i h_i \;-\; W_{\mathrm{dec}}^{\mathrm{GAE}}\;\tfrac{1}{N}\textstyle\sum_i z_i,
\end{equation}
\vspace{-0.5em}
where
\vspace{-0.5em}
\begin{align}
B &= \tfrac{1}{N}\textstyle\sum_i z_i z_i^\top \;-\; \bigl(\tfrac{1}{N}\textstyle\sum_i z_i\bigr)\bigl(\tfrac{1}{N}\textstyle\sum_i z_i\bigr)^{\!\top} \;+\; \lambda_{\mathrm{pres}}\, I, \label{eq:gae_B} \\
C &= \tfrac{1}{N}\textstyle\sum_i h_i z_i^\top \;-\; \bigl(\tfrac{1}{N}\textstyle\sum_i h_i\bigr)\bigl(\tfrac{1}{N}\textstyle\sum_i z_i\bigr)^{\!\top} \;+\; \lambda_{\mathrm{pres}}\, \widetilde{W}_{\mathrm{dec}}(T^\star). \label{eq:gae_C}
\end{align}
To summarize, GAE works by first rotating the ID dictionary so that its column space coincides with the OOD-active subspace (Step~1), then refitting the decoder via a closed-form ridge regression that matches sample-level reconstruction while preserving this alignment (Step~2). The Step~2 solution applies regularization strength $\lambda_{\mathrm{pres}}$ to decoder mass inside the OOD-active subspace and the larger strength $\lambda_{\mathrm{pres}} + \lambda_{\mathrm{geom}}$ outside it, so \textit{any decoder mass that drifts off the Step~1 alignment is automatically shrunk back}. The full derivation is in Appendix~\ref{subsec:derivation_gae_step2}.

\vspace{-0.6em}
\paragraph{Applying GAE at inference.}
At inference, the ID-trained encoder remains unchanged: given an OOD activation $h$, the explainer extracts features $z = \sigma(W_{\mathrm{enc}}\,h + b_{\mathrm{enc}})$ and reconstructs $\hat{h} = W_{\mathrm{dec}}^{\mathrm{GAE}}\,z + b_{\mathrm{dec}}^{\mathrm{GAE}}$. This applies identically to both SAEs and transcoders.

\vspace{-0.5em}
\section{Experiments}\label{sec:experiments}
\vspace{-0.5em}
We evaluate whether GAE restores explanation faithfulness under distribution shift. Section~\ref{subsec:controlled_validation} tests the geometric mechanism in a controlled setting, and Section~\ref{subsec:lm_experiments} evaluates on language models. 
\vspace{-0.5em}

\vspace{-0.4em}
\subsection{Controlled Experiment}
\label{subsec:controlled_validation}
\vspace{-0.4em}

\begin{wrapfigure}{R}{0.55\textwidth}
\centering
\vspace{-3em}
\begin{subfigure}[t]{0.48\linewidth}
\centering
\includegraphics[width=\linewidth]{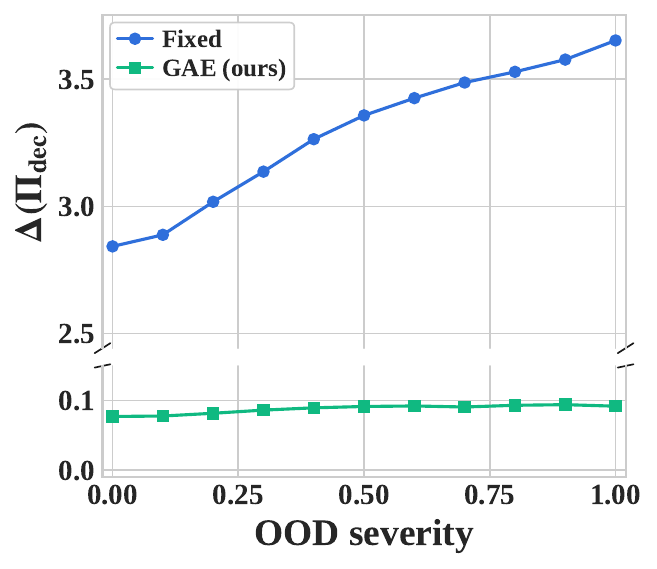}
\caption{Faithfulness gap.}
\label{fig:controlled_validation_a}
\end{subfigure}
\hfill
\begin{subfigure}[t]{0.48\linewidth}
\centering
\includegraphics[width=\linewidth]{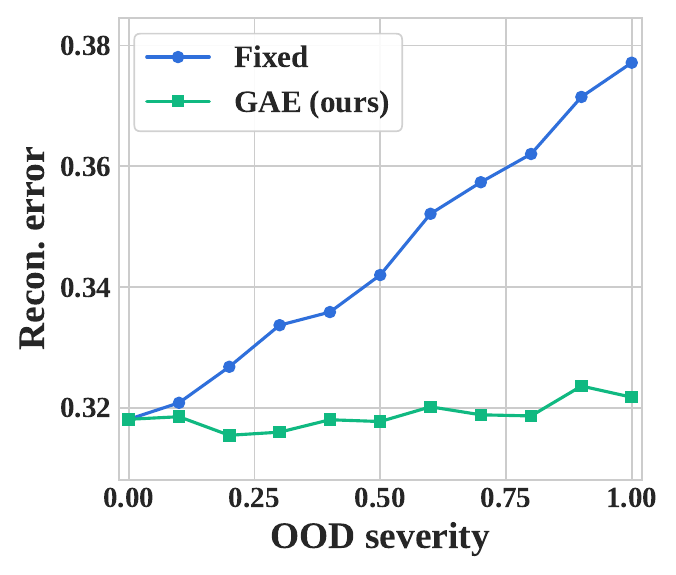}
\caption{Reconstruction error.}
\label{fig:controlled_validation_b}
\end{subfigure}
\caption{\textbf{Controlled experiment on a toy MLP with OOD severity varied from 0 (ID) to 1 (maximum shift).} (a)~The Fixed explainer's faithfulness gap $\Delta(\Pi_{\mathrm{dec}})$ grows monotonically. (b)~Its reconstruction error rises accordingly. GAE maintains near-zero gap and flat error throughout.}
\label{fig:controlled_validation}
\vspace{-3em}
\end{wrapfigure}

We first test whether the geometric mechanism from Section~\ref{sec:hidden_geometry_faithfulness} holds in a controlled setting. We train a 2-layer ReLU MLP with hidden dim $d\!=\!256$ and output dim $p\!=\!8$, and a linear-decoder SAE on its ID hidden activations, then continuously increase OOD severity by rotating and rescaling the input covariance (details in Appendix~\ref{subsec:toy_setting}). Figure~\ref{fig:controlled_validation} confirms that as severity increases, the faithfulness gap and reconstruction error of the Fixed explainer (the ID-trained explainer, used without adaptation) grow, while GAE closes the gap to near zero and keeps reconstruction error nearly flat.

\subsection{Experiments on Language Models}
\label{subsec:lm_experiments}
\subsubsection{Setup}
\label{subsec:experimental_setup}
\paragraph{Target models and explainers.}
We evaluate on two frozen pretrained language models: GPT-2 Small~\citep{radford2019language} and Pythia-1.4B~\citep{biderman2023pythia}. For each model, we train transcoders~\citep{dunefsky2024transcoders} and Top-K SAEs~\citep{gao2024scaling}, both with dictionary size $k{=}32d$, as dictionary-based explainers (Eq.~\eqref{eq:explainer}).

\vspace{-0.5em}
\paragraph{OOD settings.}
We consider three categories of distribution shift: \textbf{temporal} (FineWeb~\citep{penedo2024fineweb}, web text collected after each model's pretraining cutoff), \textbf{domain} (Edgar~\citep{loukas2021edgar}, financial filings whose specialized vocabulary and structure differ from general web text), and \textbf{adversarial} (HaluEval~\citep{li2023halueval}, hallucination-inducing prompts that elicit atypical hidden representations). All three induce measurable second-moment shift in hidden activations, as verified in Appendix~\ref{subsec:empirical_verification_propositions}.

\vspace{-0.7em}
\paragraph{Baselines.}
We compare training-free and training-based approaches. \textbf{Fixed} applies the ID-trained explainer without adaptation. \textbf{TERM}~\citep{li2020tilted,muhamed2025decoding} trains the ID explainer with tilted ERM to upweight tail samples. Among training-based methods, \textbf{Finetune}~\citep{kissane2024saetransfer} warm-starts from the ID explainer on OOD activations, \textbf{Retrain} trains from scratch on OOD data, \textbf{SAEBoost}~\citep{koriagin2025teach} adds a residual booster on OOD reconstruction residuals, and \textbf{FaithfulSAE}~\citep{cho2025faithfulsae} retrains on the model's own generations. \textbf{GAE (ours)} is training-free: it uses only unlabeled OOD activations with no gradient computation. Detailed descriptions are in Appendix~\ref{subsec:baseline_details}.

\begin{wraptable}{r}{0.5\linewidth}
\vspace{-2.5em}
\centering
\caption{\textbf{Computational cost per method.} Training-based baselines require gradient optimization over millions of tokens. GAE adapts in under 3\,s without training.}
\vspace{-0.5em}
\label{tab:compute_cost}
\adjustbox{width=1.0\linewidth}{
\begin{tabular}{lccc}
\toprule
\textbf{Method} & \textbf{Tokens} & \multicolumn{2}{c}{\textbf{Wall-clock}} \\
\cmidrule(lr){3-4}
& & \textbf{GPT-2} & \textbf{P-1.4B} \\
\midrule
Finetune & 5M & ${\sim}$2\,min & ${\sim}$12\,min \\
Retrain & 100M & ${\sim}$39\,min & ${\sim}$4\,hrs \\
SAEBoost & 100M & ${\sim}$39\,min & ${\sim}$4\,hrs \\
FaithfulSAE & 100M & ${\sim}$39\,min & ${\sim}$4\,hrs \\
\midrule
\rowcolor{gaerowhighlight} \textbf{GAE} & \textbf{2K} & \textbf{0.5\,s} & \textbf{2.9\,s} \\
\bottomrule
\end{tabular}
}
\vspace{-1.5em}
\end{wraptable}

As shown in Table~\ref{tab:compute_cost}, all training-based baselines require gradient-based optimization over millions of tokens. Even the lightest, Finetune, processes 5M tokens over several minutes; Retrain, SAEBoost, and FaithfulSAE each consume 100M tokens and take hours on a single GPU. \textit{GAE requires no gradient computation at all}: the entire closed-form pipeline completes in 0.5\,s for GPT-2 and 2.9\,s for Pythia-1.4B, using only ${\sim}2{,}048$ unlabeled OOD activations. This makes GAE practical for on-the-fly adaptation whenever the deployment distribution changes.

\vspace{-0.5em}
\paragraph{Evaluation metrics.}
We evaluate causal faithfulness using three metrics. \textbf{Normalized AOPC (nAOPC)}~\citep{edin2025normalized} averages the normalized logit drop across multiple feature budgets when top-$m$ features are removed ($\uparrow$ is better). \textbf{Normalized comprehensiveness (nComp)}~\citep{deyoung2020eraser} measures the normalized logit drop at a single budget $m^*{=}32$ ($\uparrow$ is better). Both measure the logit-level effect of ablating top features. \textbf{Delta cross-entropy ($\Delta$CE)}~\citep{gao2024scaling} measures reconstruction quality: the cross-entropy change when activations are replaced with the explainer's reconstruction ($\approx 0$ is better). GAE optimizes a geometric objective (the faithfulness gap); improvements on these causal metrics confirm that geometric realignment yields faithfulness gains. Formal definitions are in Appendix~\ref{subsec:evaluation_details}.

\vspace{-0.5em}
\subsubsection{Faithfulness Results}
\label{subsec:main_results}
\vspace{-0.5em}

\begin{table*}[h]
\centering
\caption{\textbf{Faithfulness under distribution shift (Transcoder, two models $\times$ three OOD settings).} GAE is training-free yet leads in all nine columns on GPT-2 and achieves the best nComp and $|\Delta\mathrm{CE}|$ in 5 of 6 Pythia-1.4B columns. \textbf{Bold}: best per column. \underline{Underline}: second best.}
\label{tab:main_results}
\vspace{-0.5em}
\adjustbox{max width=\textwidth}{
\begin{tabular}{l ccc ccc ccc}
\toprule
& \multicolumn{3}{c}{\textbf{FineWeb (Temporal)}} & \multicolumn{3}{c}{\textbf{Edgar (Domain)}} & \multicolumn{3}{c}{\textbf{HaluEval (Adversarial)}} \\
\cmidrule(lr){2-4} \cmidrule(lr){5-7} \cmidrule(lr){8-10}
\textbf{Method} & \textbf{nAOPC}$\uparrow$ & \textbf{nComp}$\uparrow$ & $|\boldsymbol{\Delta}\textbf{CE}|\!\downarrow$ & \textbf{nAOPC}$\uparrow$ & \textbf{nComp}$\uparrow$ & $|\boldsymbol{\Delta}\textbf{CE}|\!\downarrow$ & \textbf{nAOPC}$\uparrow$ & \textbf{nComp}$\uparrow$ & $|\boldsymbol{\Delta}\textbf{CE}|\!\downarrow$ \\
\midrule
\multicolumn{10}{l}{\textbf{\textit{GPT-2 Small}}} \\ \addlinespace
Fixed           & 0.857 & 1.017 & 0.0281 & 0.975 & 1.025 & 0.0201 & 0.735 & 0.737 & 0.0473 \\
TERM            & 0.853 & 0.993 & 0.0283 & 0.964 & 0.999 & 0.0198 & 0.730 & 0.732 & 0.0523 \\
Finetune        & 0.856 & 0.984 & \underline{0.0172} & 0.971 & 1.117 & 0.0047 & 0.579 & 0.579 & \underline{0.0105} \\
Retrain         & 0.895 & 1.118 & 0.0218 & 0.936 & 1.034 & \underline{0.0015} & 0.521 & 0.528 & 0.2763 \\
SAEBoost        & \underline{0.958} & \underline{1.476} & 0.0177 & \underline{0.979} & \underline{1.542} & 0.0072 & \underline{0.840} & \underline{0.921} & 0.0212 \\
FaithfulSAE     & 0.936 & 1.186 & 0.0213 & 0.976 & 1.134 & 0.0197 & 0.738 & 0.740 & 0.0586 \\
\midrule
\rowcolor{gaerowhighlight} \textbf{GAE (ours)}      & \textbf{0.960} & \textbf{1.494} & \textbf{0.0167} & \textbf{0.981} & \textbf{1.618} & \textbf{0.0009} & \textbf{0.871} & \textbf{0.963} & \textbf{0.0014} \\
\midrule\midrule
\multicolumn{10}{l}{\textbf{\textit{Pythia-1.4B}}} \\ \addlinespace
Fixed           & 0.839 & 1.091 & 0.0278 & 0.725 & 0.828 & 0.0300 & 0.899 & 1.021 & 0.0354 \\
TERM            & 0.845 & 1.043 & 0.0271 & 0.895 & 0.981 & 0.0298 & 0.896 & 1.104 & 0.0349 \\
Finetune        & 0.859 & 1.249 & \textbf{0.0264} & 0.684 & 0.860 & \underline{0.0282} & 0.925 & 1.502 & \underline{0.0280} \\
Retrain         & 0.894 & \underline{1.315} & 0.0405 & 0.746 & 0.821 & 0.0329 & 0.894 & 1.393 & 0.0305 \\
SAEBoost        & \underline{0.908} & 1.296 & 0.0284 & \underline{0.903} & \underline{1.297} & 0.0296 & \underline{0.965} & \underline{1.530} & 0.0283 \\
FaithfulSAE     & 0.858 & 1.056 & 0.0272 & 0.724 & 0.822 & 0.0323 & 0.899 & 1.171 & 0.0307 \\
\midrule
\rowcolor{gaerowhighlight} \textbf{GAE (ours)}      & \textbf{0.915} & \textbf{1.354} & \underline{0.0269} & \textbf{0.988} & \textbf{1.652} & \textbf{0.0230} & \textbf{0.968} & \textbf{1.693} & \textbf{0.0276} \\
\bottomrule
\end{tabular}}
\vspace{-0.5em}
\end{table*}

Table~\ref{tab:main_results} reports faithfulness across three OOD settings for GPT-2 Small and Pythia-1.4B (Transcoder). Despite using no gradient updates, \textit{GAE leads on all three metrics for GPT-2 across every OOD setting}, surpassing training-based baselines that consume up to 100M tokens (cf.\ Table~\ref{tab:compute_cost}). The largest gains appear on adversarial shift, where GAE improves nComp over the strongest training-based baseline SAEBoost by 4.6\% (0.963 vs 0.921) and reduces $|\Delta\mathrm{CE}|$ by 93\% (0.0014 vs 0.0212). On Pythia-1.4B, GAE achieves the best nAOPC and nComp on all three settings; $|\Delta\mathrm{CE}|$ is best on Edgar and HaluEval but slightly elevated on FineWeb (0.027 vs Finetune's 0.026), where the more diffuse eigenvalue decay weakens the rank-$r$ approximation.

\vspace{-0.5em}
\begin{table*}[h]
\centering
\caption{\textbf{Faithfulness under distribution shift (SAE, two models $\times$ three OOD settings).} GAE leads in all nine columns on GPT-2 and leads on nComp and $|\Delta\mathrm{CE}|$ in 5 of 6 Pythia-1.4B columns. \textbf{Bold}: best. \underline{Underline}: second best.}
\label{tab:main_results_sae}
\vspace{-0.5em}
\adjustbox{max width=\textwidth}{
\begin{tabular}{l ccc ccc ccc}
\toprule
& \multicolumn{3}{c}{\textbf{FineWeb (Temporal)}} & \multicolumn{3}{c}{\textbf{Edgar (Domain)}} & \multicolumn{3}{c}{\textbf{HaluEval (Adversarial)}} \\
\cmidrule(lr){2-4} \cmidrule(lr){5-7} \cmidrule(lr){8-10}
\textbf{Method} & \textbf{nAOPC}$\uparrow$ & \textbf{nComp}$\uparrow$ & $|\boldsymbol{\Delta}\textbf{CE}|\!\downarrow$ & \textbf{nAOPC}$\uparrow$ & \textbf{nComp}$\uparrow$ & $|\boldsymbol{\Delta}\textbf{CE}|\!\downarrow$ & \textbf{nAOPC}$\uparrow$ & \textbf{nComp}$\uparrow$ & $|\boldsymbol{\Delta}\textbf{CE}|\!\downarrow$ \\
\midrule
\multicolumn{10}{l}{\textbf{\textit{GPT-2 Small}}} \\ \addlinespace
Fixed           & 0.735 & 0.796 & 0.0185 & 0.650 & 0.667 & 0.0089 & 0.930 & 1.134 & 0.0406 \\
TERM            & 0.741 & 0.790 & 0.0183 & 0.655 & 0.682 & 0.0145 & 0.898 & 0.987 & 0.0401 \\
Finetune        & 0.725 & 0.802 & \underline{0.0015} & 0.658 & 0.687 & 0.0068 & \underline{0.932} & 1.155 & \underline{0.0218} \\
Retrain         & \underline{0.766} & \underline{0.856} & 0.0375 & \underline{0.715} & \underline{0.797} & \underline{0.0065} & 0.930 & \underline{1.276} & 0.0300 \\
SAEBoost        & 0.704 & 0.786 & 0.0120 & 0.604 & 0.650 & 0.0098 & 0.909 & 1.243 & 0.0448 \\
FaithfulSAE     & 0.725 & 0.760 & 0.0262 & 0.657 & 0.683 & 0.0074 & 0.908 & 1.082 & 0.0278 \\
\midrule
\rowcolor{gaerowhighlight} \textbf{GAE (ours)} & \textbf{0.768} & \textbf{0.871} & \textbf{0.0011} & \textbf{0.723} & \textbf{0.809} & \textbf{0.0037} & \textbf{0.953} & \textbf{1.303} & \textbf{0.0017} \\
\midrule\midrule
\multicolumn{10}{l}{\textbf{\textit{Pythia-1.4B}}} \\ \addlinespace
Fixed           & 0.962 & 1.536 & 0.0216 & 0.953 & 1.690 & 0.0170 & 0.985 & 1.425 & 0.0252 \\
TERM            & 0.965 & 1.486 & 0.0237 & 0.959 & 1.787 & 0.0167 & 0.984 & 1.762 & 0.0292 \\
Finetune        & 0.963 & 1.618 & \underline{0.0216} & 0.959 & 1.832 & \underline{0.0131} & \underline{0.988} & \underline{1.880} & 0.0174 \\
Retrain         & \underline{0.983} & \textbf{1.681} & 0.0563 & \textbf{0.970} & \underline{1.848} & 0.0261 & \textbf{1.000} & 1.855 & 0.0209 \\
SAEBoost        & 0.971 & 1.451 & 0.0211 & 0.955 & 1.598 & 0.0152 & \textbf{1.000} & 1.830 & \underline{0.0166} \\
FaithfulSAE     & 0.982 & 1.642 & 0.0307 & 0.953 & 1.769 & 0.0205 & \textbf{1.000} & 1.678 & 0.0514 \\
\midrule
\rowcolor{gaerowhighlight} \textbf{GAE (ours)} & \textbf{0.985} & \underline{1.677} & \textbf{0.0207} & \underline{0.968} & \textbf{1.946} & \textbf{0.0098} & \textbf{1.000} & \textbf{1.885} & \textbf{0.0163} \\
\bottomrule
\end{tabular}}
\end{table*}

Table~\ref{tab:main_results_sae} reports the same evaluation for Top-K SAEs. On GPT-2, GAE again surpasses every training-based baseline on all nine columns, with the largest margins on Edgar (nComp 0.809 vs Retrain's 0.797) and HaluEval ($|\Delta\mathrm{CE}|$ 0.0017 vs Finetune's 0.0218). On Pythia-1.4B, Retrain is a stronger competitor than for transcoders, taking best nComp on FineWeb (1.681 vs 1.677) and best nAOPC on Edgar (0.970 vs 0.968). GAE still leads on nComp and $|\Delta\mathrm{CE}|$ in 5 of 6 columns and achieves the best $|\Delta\mathrm{CE}|$ across all three settings. The pattern is consistent: \textit{GAE matches or surpasses methods that require orders of magnitude more computation.}

\vspace{-0.2em}
\subsubsection{Case Study: Circuit Attribution under Distribution Shift}
\label{subsec:case_study}

\begin{figure}[h]
\vspace{-1.0em}
\centering
\includegraphics[width=\linewidth]{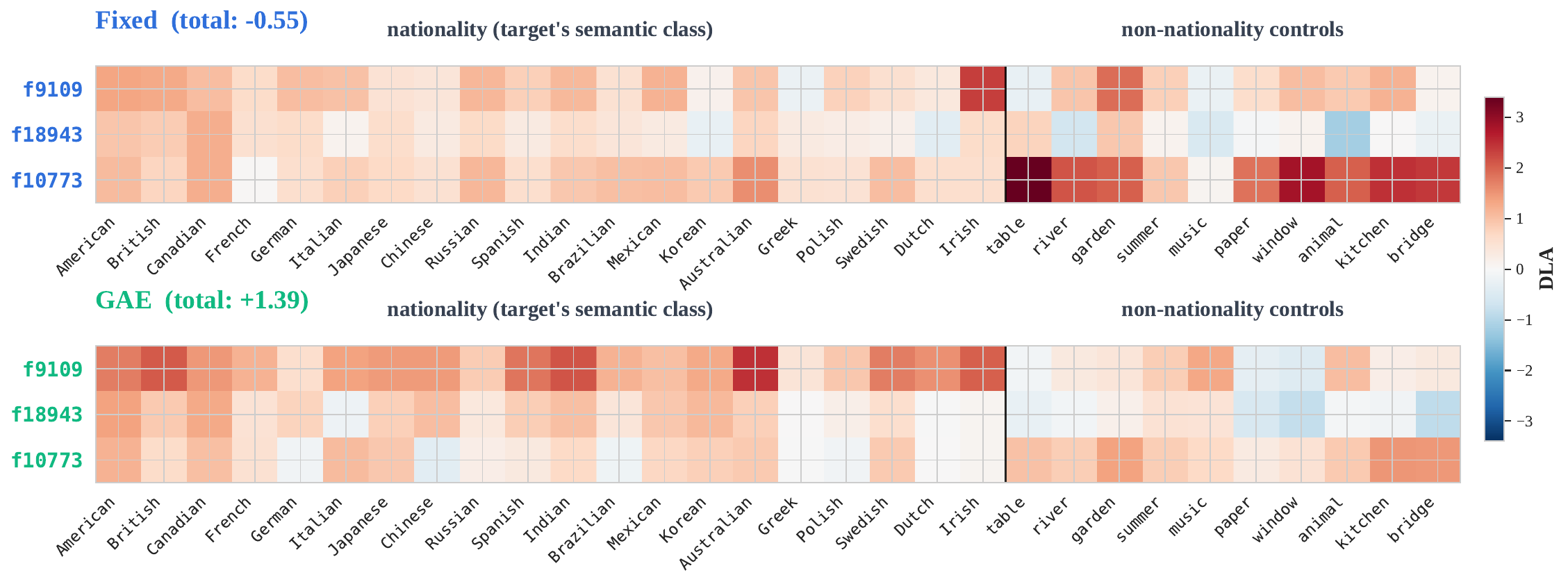}
\vspace{-2em}
\caption{\textbf{Per-feature DLA on a prompt predicting `\,American' (GPT-2, Transcoder).} Both methods share the same encoder and top-3 features; only the decoder columns differ. Each cell shows a feature's direct logit attribution (DLA) to nationality tokens (left, 20 tokens) vs.\ non-nationality controls (right, 10 tokens). Fixed's total class-specificity is $-0.55$ (circuit points away from the target class); GAE's is $+1.39$ (circuit points toward it).}
\label{fig:case_study_class_specificity}
\vspace{-0.5em}
\end{figure}

We select a prompt where the model predicts `\,American' and measure each feature's direct logit attribution (DLA), the dot product of its decoder column with the token's unembedding vector scaled by the feature activation. DLA quantifies how much each feature pushes the model toward a given next token. Since GAE keeps the encoder frozen, both Fixed and GAE extract the same top-3 features with the same activations, so any change in DLA isolates the effect of the decoder rotation. For each feature, we compute the difference between its mean DLA on 20 nationality tokens (the target's semantic class) and 10 non-nationality controls, then sum across features to obtain a class-specificity score that measures whether the identified circuit points toward the correct token class.

Fixed scores $-0.55$: its circuit points away from the target class on average. GAE scores $+1.39$: every top feature contributes more to nationalities than to controls, as shown in Figure~\ref{fig:case_study_class_specificity}. The decoder rotation alone corrects feature-level attribution without altering which features are selected. Appendix~\ref{sec:case_study_appendix} repeats this analysis on two further prompts whose target tokens are a male first name ($+1.00 \to +4.51$) and a profession ($+0.54 \to +0.99$).

\vspace{-0.3em}
\subsubsection{Mechanism Analysis}
\label{subsec:ablation}

\begin{figure}[ht]
\centering
\vspace{-1.5em}
\begin{subfigure}[t]{0.48\linewidth}
\centering
\includegraphics[width=\linewidth]{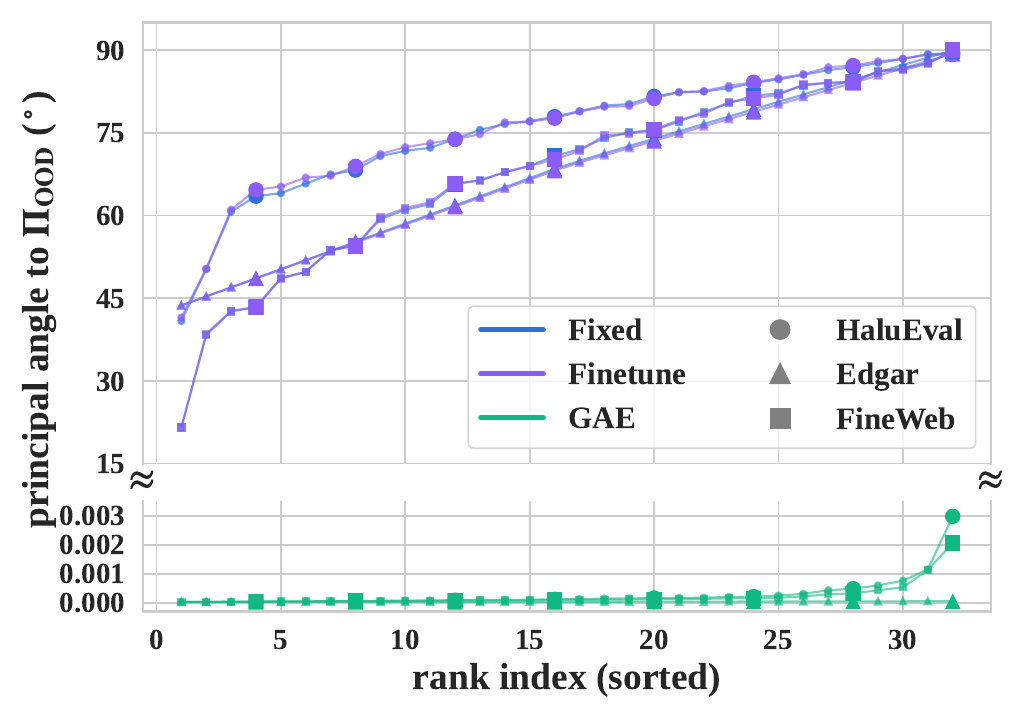}
\caption{Subspace alignment.}
\label{fig:sorted_angles}
\end{subfigure}
\hfill
\begin{subfigure}[t]{0.48\linewidth}
\centering
\includegraphics[width=\linewidth]{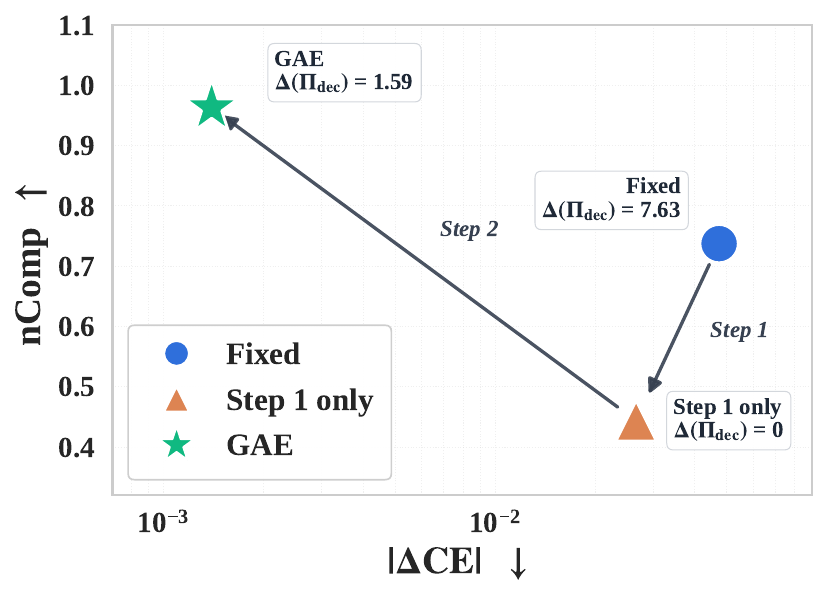}
\caption{Step ablation.}
\label{fig:step_ablation}
\end{subfigure}
\caption{\textbf{Mechanism analysis (GPT-2, Transcoder).} \textbf{(a)}~Sorted principal angles between each explainer's top-$r$ subspace and $\widehat{\Pi}_{\mathrm{OOD}}$. GAE's subspace aligns with $\widehat{\Pi}_{\mathrm{OOD}}$, while Fixed and Finetune leave large angular gaps. \textbf{(b)}~Step ablation: Step~1 closes the faithfulness gap to 0 yet drops nComp from 0.74 to 0.44. Step~2 restores nComp to 0.96 at the cost of a small gap (1.59).}
\label{fig:geometric_analysis}
\vspace{-1.3em}
\end{figure}

\paragraph{Subspace alignment.}
Figure~\ref{fig:geometric_analysis}(a) measures the principal angles between each explainer's top-$r$ decoder subspace and the OOD-active subspace $\widehat{\Pi}_{\mathrm{OOD}}$. Fixed and Finetune leave $40^\circ$--$90^\circ$ angular gaps across all rank indices and OOD shift types, while GAE drives every angle to ${\sim}10^{-3}\,^\circ$. This confirms that GAE's faithfulness gains arise from explicit geometric alignment, validating the mechanism of Section~\ref{sec:gae}. The persistence of Finetune's gap further shows that \textit{gradient-based reconstruction loss does not, by itself, drive subspace alignment with $\widehat{\Pi}_{\mathrm{OOD}}$}.  Appendix~\ref{subsec:geometric_appendix} verifies that the projection-loss improvement scales quadratically with $\Delta(\Pi_{\mathrm{ID}})^2$, as predicted by Theorem~\ref{thm:improvement_over_id}.

\vspace{-0.5em}
\paragraph{Step ablation.}
Figure~\ref{fig:geometric_analysis}(b) ablates each step of GAE on HaluEval. Step~1 alone closes the faithfulness gap from 7.63 to 0 by construction, yet nComp drops from 0.74 to 0.44 since the orthogonal rotation diffuses the encoder-decoder feature pairing that the top-$k$ ablation in nComp measures. Step~2 refits the decoder within Step~1's subspace, accepting a small gap (1.59) in exchange for the highest nComp (0.96) and the lowest $|\Delta\mathrm{CE}|$ (0.0014). \textit{The two steps are complementary: Step~1 chooses the subspace, Step~2 makes the dictionary causally coherent within it.} A hyperparameter sensitivity analysis is reported in Appendix~\ref{subsec:hp_sensitivity}.
\vspace{-0.5em}

\vspace{-0.3em}
\section{Conclusion}
\label{sec:conclusion}
\vspace{-0.7em}
We showed that OOD faithfulness degradation in dictionary-based explainers has a geometric cause: the decoder subspace drifts from the directions the model actively uses. The faithfulness gap $\Delta(\Pi_{\mathrm{dec}})$ formalizes this misalignment and provably controls the reducible part of OOD faithfulness loss. GAE closes the gap with a closed-form subspace rotation and constrained decoder refit, using only unlabeled OOD activations. Across two models and three shift types, GAE outperforms all training-based baselines on 5 of 6 settings, completing in under 3 seconds without any gradient computation. A limitation is that we have not yet evaluated on larger-scale models. GAE also relies on a top-$r$ SVD truncation of $W_{\mathrm{dec}}^{\mathrm{ID}}$, so any feature information carried in the residual $(d{-}r)$ singular directions is dropped before adaptation. Extending GAE to adaptive rank selection, encoder adaptation, and connections to optimal transport on the Grassmannian~\cite{edelman1998geometry} are promising future directions.

\clearpage
\bibliographystyle{unsrtnat}
\bibliography{main.bib}

\clearpage
\appendix

\section{Proofs and Derivations}
\label{sec:proofs_and_derivation}


\subsection{Proof of Proposition~\ref{prop:second-moment_shift_enlarges_the_faithfulness_gap}}
\label{subsec:proof_prop2}

\paragraph{Setup.}
Write the second-moment shift as $E = M_{\mathrm{OOD}} - M_{\mathrm{ID}}$, so that $M_{\mathrm{OOD}} = M_{\mathrm{ID}} + E$. The projectors $\Pi_{\mathrm{ID}}$ and $\Pi_{\mathrm{OOD}}$ correspond to the top-$r$ eigenspaces of $M_{\mathrm{ID}}$ and $M_{\mathrm{ID}} + E$, respectively. We wish to bound $\Delta(\Pi_{\mathrm{ID}}) = \|\Pi_{\mathrm{OOD}} - \Pi_{\mathrm{ID}}\|_F$ in terms of $\|E\|_F$.

\paragraph{Step 1: Projector distance via principal angles.}
Let $\theta_1, \ldots, \theta_r$ be the principal angles between the column spaces of $\Pi_{\mathrm{ID}}$ and $\Pi_{\mathrm{OOD}}$. For two rank-$r$ orthogonal projectors, the Frobenius norm of their difference satisfies
\[
\|\Pi_{\mathrm{OOD}} - \Pi_{\mathrm{ID}}\|_F^2 = 2\sum_{i=1}^r \sin^2\theta_i.
\]

\paragraph{Step 2: Applying Davis--Kahan.}
The Frobenius-norm form of the Davis--Kahan $\sin\Theta$ theorem~\citep{davis1970rotation,yu2015useful} bounds the sum of squared sines in terms of the perturbation $E$ and the eigengap:
\[
\sum_{i=1}^r \sin^2\theta_i
\;\le\;
\frac{\|(I - \Pi_{\mathrm{ID}})\,E\,\Pi_{\mathrm{ID}}\|_F^2}{\gamma_{\mathrm{ID}}^2},
\]
where $\gamma_{\mathrm{ID}} = \lambda_r(M_{\mathrm{ID}}) - \lambda_{r+1}(M_{\mathrm{ID}})$ is the eigengap of $M_{\mathrm{ID}}$ at rank $r$.

\paragraph{Step 3: Bounding the cross term.}
The matrix $(I - \Pi_{\mathrm{ID}})\,E\,\Pi_{\mathrm{ID}}$ is the component of $E$ that maps the ID-active subspace into its orthogonal complement. Since $I - \Pi_{\mathrm{ID}}$ and $\Pi_{\mathrm{ID}}$ are orthogonal projectors, each has operator norm $1$, so by the submultiplicativity of the Frobenius norm under operator-norm factors,
\[
\|(I - \Pi_{\mathrm{ID}})\,E\,\Pi_{\mathrm{ID}}\|_F
\le \|I - \Pi_{\mathrm{ID}}\|_2 \cdot \|E\|_F \cdot \|\Pi_{\mathrm{ID}}\|_2
= 1 \cdot \|E\|_F \cdot 1
= \|E\|_F.
\]

\paragraph{Step 4: Combining.}
Substituting back,
\[
\Delta(\Pi_{\mathrm{ID}})
= \|\Pi_{\mathrm{OOD}} - \Pi_{\mathrm{ID}}\|_F
= \sqrt{2\sum_{i=1}^r \sin^2\theta_i}
\le \frac{\sqrt{2}}{\gamma_{\mathrm{ID}}}\,\|E\|_F
= \frac{\sqrt{2}}{\gamma_{\mathrm{ID}}}\,\|M_{\mathrm{OOD}} - M_{\mathrm{ID}}\|_F,
\]
where the first equality is Step~1, the inequality combines Steps~2 and 3, and the last equality uses $E = M_{\mathrm{OOD}} - M_{\mathrm{ID}}$ from the Setup.
\qed

\subsection{Decomposition Eq.~\texorpdfstring{\eqref{eq:decomposition}}{(decomposition)}}
\label{subsec:proof_decomposition}

\paragraph{Step 1: Expressing $\mathcal{L}_{\mathrm{OOD}}$ in terms of $M_{\mathrm{OOD}}$.}
For any rank-$r$ orthogonal $\Pi_{\mathrm{dec}}$ $\in \mathcal{C}_r$, the reconstruction error under $P_{\mathrm{OOD}}$ is
\begin{align}
\mathcal{L}_{\mathrm{OOD}}(\Pi_{\mathrm{dec}})
&= \mathbb{E}_{X \sim P_{\mathrm{OOD}}}\|h(X) - \Pi_{\mathrm{dec}}\,h(X)\|_2^2
\nonumber\\
&= \mathbb{E}\|(I - \Pi_{\mathrm{dec}})\,h(X)\|_2^2
\nonumber\\
&= \mathbb{E}\,\operatorname{tr}\!\big[(I - \Pi_{\mathrm{dec}})\,h(X)\,h(X)^\top\,(I - \Pi_{\mathrm{dec}})^\top\big]
\nonumber\\
&= \operatorname{tr}\!\big[(I - \Pi_{\mathrm{dec}})\,M_{\mathrm{OOD}}\,(I - \Pi_{\mathrm{dec}})\big],
\label{eq:loss_trace}
\end{align}
where the third step uses $\|a\|_2^2 = \operatorname{tr}(aa^\top)$ and the fourth step swaps expectation and trace, with $M_{\mathrm{OOD}} = \mathbb{E}[h(X)h(X)^\top]$. Since $I - \Pi_{\mathrm{dec}}$ is an orthogonal projector, it is idempotent: $(I-\Pi_{\mathrm{dec}})^2 = I - \Pi_{\mathrm{dec}}$. Using the cyclic property of trace,
\begin{align}
\operatorname{tr}\!\big[(I - \Pi_{\mathrm{dec}})\,M_{\mathrm{OOD}}\,(I - \Pi_{\mathrm{dec}})\big]
= \operatorname{tr}\!\big[(I - \Pi_{\mathrm{dec}})^2\,M_{\mathrm{OOD}}\big]
= \operatorname{tr}\!\big[(I - \Pi_{\mathrm{dec}})\,M_{\mathrm{OOD}}\big].
\end{align}

\paragraph{Step 2: Deriving the decomposition.}
Applying the same identity with $\Pi_{\mathrm{OOD}}$ gives $\mathcal{L}_{\mathrm{OOD}}(\Pi_{\mathrm{OOD}}) = \operatorname{tr}\!\big[(I - \Pi_{\mathrm{OOD}})\,M_{\mathrm{OOD}}\big]$. Taking the difference,
\begin{align}
\mathcal{L}_{\mathrm{OOD}}(\Pi_{\mathrm{dec}}) - \mathcal{L}_{\mathrm{OOD}}(\Pi_{\mathrm{OOD}})
&= \operatorname{tr}\!\big[(I - \Pi_{\mathrm{dec}})\,M_{\mathrm{OOD}}\big] - \operatorname{tr}\!\big[(I - \Pi_{\mathrm{OOD}})\,M_{\mathrm{OOD}}\big]
\nonumber\\
&= \operatorname{tr}\!\big[(\Pi_{\mathrm{OOD}} - \Pi_{\mathrm{dec}})\,M_{\mathrm{OOD}}\big].
\end{align}
Rearranging gives the claimed decomposition:
\[
\mathcal{L}_{\mathrm{OOD}}(\Pi_{\mathrm{dec}})
= \mathcal{L}_{\mathrm{OOD}}(\Pi_{\mathrm{OOD}})
+ \operatorname{tr}\!\big[(\Pi_{\mathrm{OOD}} - \Pi_{\mathrm{dec}})\,M_{\mathrm{OOD}}\big].
\]

\paragraph{Step 3: Nonnegativity and optimality.}
It remains to show that the second term is nonneg\-ative. Let $M_{\mathrm{OOD}} = \sum_{i=1}^d \lambda_i\,u_i u_i^\top$ be the eigendecomposition with $\lambda_1 \ge \cdots \ge \lambda_d \ge 0$. Since $\Pi_{\mathrm{OOD}}$ projects onto the top-$r$ eigenspace,
\[
\operatorname{tr}(\Pi_{\mathrm{OOD}}\,M_{\mathrm{OOD}}) = \sum_{i=1}^r \lambda_i.
\]
By Ky Fan's maximum principle, $\sum_{i=1}^r \lambda_i = \max_{\Pi_{\mathrm{dec}} \in \mathcal{C}_r} \operatorname{tr}(\Pi_{\mathrm{dec}}\,M_{\mathrm{OOD}})$. Therefore, for any $\Pi_{\mathrm{dec}} \in \mathcal{C}_r$,
\[
\operatorname{tr}\!\big[(\Pi_{\mathrm{OOD}} - \Pi_{\mathrm{dec}})\,M_{\mathrm{OOD}}\big]
= \operatorname{tr}(\Pi_{\mathrm{OOD}}\,M_{\mathrm{OOD}}) - \operatorname{tr}(\Pi_{\mathrm{dec}}\,M_{\mathrm{OOD}})
\ge 0,
\]
with equality if and only if $\Pi_{\mathrm{dec}}$ also projects onto a top-$r$ eigenspace of $M_{\mathrm{OOD}}$. This proves both the nonnegativity and the optimality $\Pi_{\mathrm{OOD}} \in \arg\min_{\Pi_{\mathrm{dec}} \in \mathcal{C}_r}\mathcal{L}_{\mathrm{OOD}}(\Pi_{\mathrm{dec}})$.
\qed

\subsection{Proof of Proposition~\ref{prop:faithfulness_gap_controls_excess_hidden-space_faithfulness}}
\label{subsec:proof_prop1}

From the decomposition~\eqref{eq:decomposition}, the explainer-dependent component equals $\operatorname{tr}\!\big[(\Pi_{\mathrm{OOD}} - \Pi_{\mathrm{dec}})\,M_{\mathrm{OOD}}\big]$. We derive both bounds by expanding this trace in the eigenbasis of $M_{\mathrm{OOD}}$.

\paragraph{Setup.}
Let $M_{\mathrm{OOD}} = \sum_{i=1}^d \lambda_i\,u_i u_i^\top$ be the eigendecomposition with $\lambda_1 \ge \cdots \ge \lambda_d \ge 0$. The OOD-active subspace is $\Pi_{\mathrm{OOD}} = \sum_{i=1}^r u_i u_i^\top$, so $u_i^\top \Pi_{\mathrm{OOD}}\,u_i = \mathbf{1}_{i \le r}$. For the explainer subspace $\Pi_{\mathrm{dec}} \in \mathcal{C}_r$, define $p_i := u_i^\top \Pi_{\mathrm{dec}}\, u_i \in [0,1]$, the fraction of the $i$-th OOD eigendirection captured by $\Pi_{\mathrm{dec}}$. Expanding the trace in this eigenbasis gives
\begin{align}
\operatorname{tr}\!\big[(\Pi_{\mathrm{OOD}} - \Pi_{\mathrm{dec}})\,M_{\mathrm{OOD}}\big]
&= \sum_{i=1}^d \lambda_i\,(u_i^\top \Pi_{\mathrm{OOD}}\,u_i - p_i)
\nonumber\\
&= \sum_{i=1}^r \lambda_i\,(1 - p_i) - \sum_{i=r+1}^d \lambda_i\,p_i.
\label{eq:trace_expansion}
\end{align}
The first sum captures the OOD energy in the top-$r$ directions that the explainer misses (since $1 - p_i$ is the fraction lost). The second sum captures the OOD energy in the bottom directions that the explainer unnecessarily covers.

\paragraph{Connecting to the faithfulness gap.}
For two rank-$r$ subspaces, the Frobenius norm of their difference satisfies $\|\Pi_{\mathrm{OOD}} - \Pi_{\mathrm{dec}}\|_F^2 = 2\sum_{i=1}^r (1 - p_i)$. Moreover, since both have the same rank, $\sum_{i=1}^r (1-p_i) = \sum_{i=r+1}^d p_i$. Denoting $S := \sum_{i=1}^r (1 - p_i)$, we have
\begin{equation}
\Delta(\Pi_{\mathrm{dec}})^2 = \|\Pi_{\mathrm{OOD}} - \Pi_{\mathrm{dec}}\|_F^2 = 2S, \quad\text{and}\quad \sum_{i=r+1}^d p_i = S.
\label{eq:S_identity}
\end{equation}

\paragraph{Upper bound.}
Using $\lambda_i \le \lambda_1$ for $i \le r$ and $\lambda_i \ge \lambda_d$ for $i > r$ in Eq.~\eqref{eq:trace_expansion},
\begin{align}
\sum_{i=1}^r \lambda_i(1-p_i) - \sum_{i=r+1}^d \lambda_i\,p_i
&\le \lambda_1 \sum_{i=1}^r (1-p_i) - \lambda_d \sum_{i=r+1}^d p_i
\nonumber\\
&= \lambda_1\,S - \lambda_d\,S
= (\lambda_1 - \lambda_d)\,S
= \frac{\lambda_1(M_{\mathrm{OOD}}) - \lambda_d(M_{\mathrm{OOD}})}{2}\,\Delta(\Pi_{\mathrm{dec}})^2.
\end{align}

\paragraph{Lower bound.}
Using $\lambda_i \ge \lambda_r$ for $i \le r$ and $\lambda_i \le \lambda_{r+1}$ for $i > r$,
\begin{align}
\sum_{i=1}^r \lambda_i(1-p_i) - \sum_{i=r+1}^d \lambda_i\,p_i
&\ge \lambda_r \sum_{i=1}^r (1-p_i) - \lambda_{r+1} \sum_{i=r+1}^d p_i
\nonumber\\
&= \lambda_r\,S - \lambda_{r+1}\,S
= (\lambda_r - \lambda_{r+1})\,S
= \frac{\gamma_{\mathrm{OOD}}}{2}\,\Delta(\Pi_{\mathrm{dec}})^2.
\end{align}
\qed

\subsection{Corollary: Second-Moment Shift Upper-Bounds OOD Faithfulness Degradation}
\label{subsec:proof_cor1}

Combining Propositions~\ref{prop:second-moment_shift_enlarges_the_faithfulness_gap} and~\ref{prop:faithfulness_gap_controls_excess_hidden-space_faithfulness} yields an explicit bound on how much faithfulness the ID explainer loses under OOD.

\begin{corollary}[Second-Moment Shift Upper-Bounds OOD Faithfulness Degradation for the ID Explainer]
\label{cor:second-moment_shift_induces_ood_faithfulness_degradation_for_the_id_explainer}
Suppose $\gamma_{\mathrm{ID}} := \lambda_r(M_{\mathrm{ID}})-\lambda_{r+1}(M_{\mathrm{ID}}) > 0$. Then
\[
\mathcal{L}_{\mathrm{OOD}}(\Pi_{\mathrm{ID}})
-
\mathcal{L}_{\mathrm{OOD}}(\Pi_{\mathrm{OOD}})
\le
\frac{\lambda_1(M_{\mathrm{OOD}})-\lambda_d(M_{\mathrm{OOD}})}{2}
\left(
\frac{\sqrt{2}}{\gamma_{\mathrm{ID}}}\,
\|M_{\mathrm{OOD}}-M_{\mathrm{ID}}\|_F
\right)^2.
\]
\end{corollary}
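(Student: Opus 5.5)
The corollary follows by chaining the two propositions, so the plan is a two-line composition with one monotonicity check.

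First, I will apply the upper bound of Proposition~\ref{prop:faithfulness_gap_controls_excess_hidden-space_faithfulness} with $\Pi_{\mathrm{dec}}=\Pi_{\mathrm{ID}}$. This is a valid choice because $\Pi_{\mathrm{ID}}=U_{\mathrm{ID}}U_{\mathrm{ID}}^\top$ is a rank-$r$ orthogonal projector, so $\Pi_{\mathrm{ID}}\in\mathcal{C}_r$. The substitution gives
\[
\mathcal{L}_{\mathrm{OOD}}(\Pi_{\mathrm{ID}})-\mathcal{L}_{\mathrm{OOD}}(\Pi_{\mathrm{OOD}})\le\frac{\lambda_1(M_{\mathrm{OOD}})-\lambda_d(M_{\mathrm{OOD}})}{2}\,\Delta(\Pi_{\mathrm{ID}})^2 .
\]
I need to check which hypotheses this step actually uses. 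Proposition~\ref{prop:faithfulness_gap_controls_excess_hidden-space_faithfulness} assumes $\gamma_{\mathrm{OOD}}>0$, but its upper-bound argument (Appendix~\ref{subsec:proof_prop1}) uses only three facts: the trace expansion~\eqref{eq:trace_expansion}, the identities~\eqref{eq:S_identity}, and the ordering $\lambda_1\ge\lambda_i\ge\lambda_d$. None of these needs an OOD eigengap. The one subtlety is that $\Pi_{\mathrm{OOD}}$ must be well defined. If $\lambda_r(M_{\mathrm{OOD}})=\lambda_{r+1}(M_{\mathrm{OOD}})$, I will fix any top-$r$ eigenprojector, and the argument goes through unchanged. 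So the corollary legitimately needs only $\gamma_{\mathrm{ID}}>0$.

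Second, I will invoke Proposition~\ref{prop:second-moment_shift_enlarges_the_faithfulness_gap}, which gives $\Delta(\Pi_{\mathrm{ID}})\le\frac{\sqrt2}{\gamma_{\mathrm{ID}}}\|M_{\mathrm{OOD}}-M_{\mathrm{ID}}\|_F$. Both sides are nonnegative, so I can square the inequality. The prefactor $\tfrac12(\lambda_1-\lambda_d)$ is also nonnegative, since $\lambda_1\ge\lambda_d$ for the PSD matrix $M_{\mathrm{OOD}}$. Multiplying the squared bound by this prefactor and substituting into the first display yields the claimed inequality.

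The main obstacle is this hypothesis bookkeeping, not any computation. The composition is immediate once I confirm that the upper bound of Proposition~\ref{prop:faithfulness_gap_controls_excess_hidden-space_faithfulness} holds without $\gamma_{\mathrm{OOD}}>0$, and that the Davis--Kahan step in Proposition~\ref{prop:second-moment_shift_enlarges_the_faithfulness_gap} needs only the ID eigengap.
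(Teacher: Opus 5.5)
Your composition is exactly the paper's proof. Your bookkeeping on the loss side is also right: $\Pi_{\mathrm{ID}}\in\mathcal{C}_r$, the upper bound of Proposition~\ref{prop:faithfulness_gap_controls_excess_hidden-space_faithfulness} never uses $\gamma_{\mathrm{OOD}}$, and the nonnegativity needed to square and multiply holds.

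The gap is the one check you assert without carrying out: that the Davis--Kahan step behind Proposition~\ref{prop:second-moment_shift_enlarges_the_faithfulness_gap} works with only the ID eigengap and constant $\sqrt2$. It does not, and the corollary as stated is false. Take $d=2$, $r=1$, $M_{\mathrm{ID}}=\mathrm{diag}(1,0)$, $M_{\mathrm{OOD}}=\mathrm{diag}(\tfrac25,\tfrac35)$, and $E=M_{\mathrm{OOD}}-M_{\mathrm{ID}}$. Then $\gamma_{\mathrm{ID}}=1$, $\Pi_{\mathrm{ID}}=e_1e_1^\top$ and $\Pi_{\mathrm{OOD}}=e_2e_2^\top$, so $\Delta(\Pi_{\mathrm{ID}})=\sqrt2$. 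But $\|E\|_F=\tfrac{3\sqrt2}{5}$, so the proposition's bound is $\tfrac65<\sqrt2$. For the corollary, the left-hand side is $\operatorname{tr}[(\Pi_{\mathrm{OOD}}-\Pi_{\mathrm{ID}})M_{\mathrm{OOD}}]=\tfrac15$, while the right-hand side is $\tfrac1{10}\cdot\tfrac{36}{25}=\tfrac{18}{125}<\tfrac15$. More generally, the family $M_{\mathrm{OOD}}=\mathrm{diag}(\tfrac12-\epsilon,\tfrac12+\epsilon)$ gives $2\epsilon$ versus $\epsilon(1+2\epsilon)^2$, a violation by a factor tending to $2$.

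What breaks is the paper's form of Davis--Kahan, $\sum_i\sin^2\theta_i\le\|(I-\Pi_{\mathrm{ID}})E\,\Pi_{\mathrm{ID}}\|_F^2/\gamma_{\mathrm{ID}}^2$. In the example $E$ is diagonal, so the right side is $0$ while $\theta_1=\pi/2$. The actual theorem pairs a mixed block with a mixed gap:
\[
\|(I-\Pi_{\mathrm{ID}})\Pi_{\mathrm{OOD}}\|_F\le\frac{\|(I-\Pi_{\mathrm{ID}})E\,\Pi_{\mathrm{OOD}}\|_F}{\delta},\qquad \delta=\lambda_r(M_{\mathrm{OOD}})-\lambda_{r+1}(M_{\mathrm{ID}}).
\]
Weyl only gives $\delta\ge\gamma_{\mathrm{ID}}-\|E\|_2$. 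The version that uses $\gamma_{\mathrm{ID}}$ alone (Yu--Wang--Samworth, which the paper cites) costs a factor $2$, giving $\Delta(\Pi_{\mathrm{ID}})\le 2\sqrt2\,\|E\|_F/\gamma_{\mathrm{ID}}$.

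With that corrected ingredient, your two-line chaining goes through verbatim and proves the true statement
\[
\mathcal{L}_{\mathrm{OOD}}(\Pi_{\mathrm{ID}})-\mathcal{L}_{\mathrm{OOD}}(\Pi_{\mathrm{OOD}})\le\frac{4\bigl(\lambda_1(M_{\mathrm{OOD}})-\lambda_d(M_{\mathrm{OOD}})\bigr)}{\gamma_{\mathrm{ID}}^2}\,\|M_{\mathrm{OOD}}-M_{\mathrm{ID}}\|_F^2,
\]
which is four times the stated bound. Alternatively, you can keep the stated form but replace $\gamma_{\mathrm{ID}}$ by $\gamma_{\mathrm{ID}}-\|E\|_2$ whenever that is positive. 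Since the example forces a factor approaching $2$, some such correction is necessary.
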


\begin{proof}
Setting $\Pi_{\mathrm{dec}} = \Pi_{\mathrm{ID}}$ in the upper bound of Proposition~\ref{prop:faithfulness_gap_controls_excess_hidden-space_faithfulness},
\[
\mathcal{L}_{\mathrm{OOD}}(\Pi_{\mathrm{ID}}) - \mathcal{L}_{\mathrm{OOD}}(\Pi_{\mathrm{OOD}})
\;\le\;
\frac{\lambda_1(M_{\mathrm{OOD}}) - \lambda_d(M_{\mathrm{OOD}})}{2}\,\Delta(\Pi_{\mathrm{ID}})^2.
\]
Proposition~\ref{prop:second-moment_shift_enlarges_the_faithfulness_gap} gives $\Delta(\Pi_{\mathrm{ID}}) \le \frac{\sqrt{2}}{\gamma_{\mathrm{ID}}}\|M_{\mathrm{OOD}} - M_{\mathrm{ID}}\|_F$. Substituting yields the result. The upper bound of Proposition~\ref{prop:faithfulness_gap_controls_excess_hidden-space_faithfulness} does not require $\gamma_{\mathrm{OOD}} > 0$.
\end{proof}

The explainer-dependent component grows at most quadratically with the second-moment shift.

\subsection{Derivation of the GAE Procrustes Solution (Step~1)}
\label{subsec:derivation_gae_procrustes}

We derive the closed-form solution to the feature preservation problem in Eq.~\eqref{eq:gae_obj}. Substituting $\widetilde{W}_{\mathrm{dec}}(T) = U_{\mathrm{OOD}}^{(:r)}\, T\, U_{\mathrm{dec}}^\top\, W_{\mathrm{dec}}^{\mathrm{ID}}$ and expanding:
\begin{align}
\|\widetilde{W}_{\mathrm{dec}}(T) - W_{\mathrm{dec}}^{\mathrm{ID}}\|_F^2
&= \|W_{\mathrm{dec}}^{\mathrm{ID}}\|_F^2 + \|U_{\mathrm{dec}}^\top W_{\mathrm{dec}}^{\mathrm{ID}}\|_F^2 - 2\operatorname{tr}\!\big[(W_{\mathrm{dec}}^{\mathrm{ID}})^\top U_{\mathrm{OOD}}^{(:r)}\, T\, U_{\mathrm{dec}}^\top\, W_{\mathrm{dec}}^{\mathrm{ID}}\big],
\end{align}
where we used $T^\top T = I$ and $(U_{\mathrm{OOD}}^{(:r)})^\top U_{\mathrm{OOD}}^{(:r)} = I$. The first two terms are independent of $T$, so minimization reduces to
\[
T^\star = \arg\max_{T \in \mathcal{O}_r} \operatorname{tr}(G\, T), \qquad G = U_{\mathrm{dec}}^\top\, W_{\mathrm{dec}}^{\mathrm{ID}}\, (W_{\mathrm{dec}}^{\mathrm{ID}})^\top\, U_{\mathrm{OOD}}^{(:r)}.
\]
Let $G = \widetilde{U}\Sigma\widetilde{V}^\top$ be the SVD of $G$. Setting $R = \widetilde{V}^\top T \widetilde{U}$, we have $\operatorname{tr}(G\,T) = \operatorname{tr}(\Sigma\, R) = \sum_i \sigma_i R_{ii}$. Since $R \in \mathcal{O}_r$, each $|R_{ii}| \le 1$, so the maximum is achieved at $R = I_r$, giving $T^\star = \widetilde{V}\widetilde{U}^\top$.
\qed

\subsection{Derivation of the Step~2 Closed-Form Solution}
\label{subsec:derivation_gae_step2}

With the encoder fixed, Step~2 solves the following convex quadratic objective over $W_{\mathrm{dec}} \in \mathbb{R}^{d \times k}$ and $b \in \mathbb{R}^d$:
\[
\min_{W_{\mathrm{dec}},\, b}\;
\frac{1}{N}\sum_{i=1}^N \|h_i - W_{\mathrm{dec}}\,z_i - b\|^2
\;+\;
\lambda_{\mathrm{geom}}\,\|(I - \widehat{\Pi}_{\mathrm{OOD}})\,W_{\mathrm{dec}}\|_F^2
\;+\;
\lambda_{\mathrm{pres}}\,\|W_{\mathrm{dec}} - \widetilde{W}_{\mathrm{dec}}(T^\star)\|_F^2.
\]

\paragraph{Bias.}
Setting $\partial / \partial b = 0$ gives $b^\star = \frac{1}{N}\sum_i h_i - W_{\mathrm{dec}}^\star \frac{1}{N}\sum_i z_i$, i.e., Eq.~\eqref{eq:gae_step2_bias}.

\paragraph{Decoder.}
Substituting $b^\star$ centers the data: define $h_i^c = h_i - \frac{1}{N}\sum_j h_j$ and $z_i^c = z_i - \frac{1}{N}\sum_j z_j$. The problem reduces to
\[
\min_{W_{\mathrm{dec}}}\;
\frac{1}{N}\sum_{i=1}^N \|h_i^c - W_{\mathrm{dec}}\,z_i^c\|^2
\;+\;
\lambda_{\mathrm{geom}}\,\|(I - \widehat{\Pi}_{\mathrm{OOD}})\,W_{\mathrm{dec}}\|_F^2
\;+\;
\lambda_{\mathrm{pres}}\,\|W_{\mathrm{dec}} - \widetilde{W}_{\mathrm{dec}}(T^\star)\|_F^2.
\]
Setting $\partial / \partial W_{\mathrm{dec}} = 0$:
\[
\bigl[\lambda_{\mathrm{pres}}\,I + \lambda_{\mathrm{geom}}(I - \widehat{\Pi}_{\mathrm{OOD}})\bigr]\,W_{\mathrm{dec}}
\;+\;
W_{\mathrm{dec}}\;\tfrac{1}{N}\textstyle\sum_i z_i^c z_i^{c\top}
\;=\;
\tfrac{1}{N}\textstyle\sum_i h_i^c z_i^{c\top}
\;+\;
\lambda_{\mathrm{pres}}\,\widetilde{W}_{\mathrm{dec}}(T^\star).
\]
The left-hand coefficient $\Lambda := \lambda_{\mathrm{pres}}\,I + \lambda_{\mathrm{geom}}(I - \widehat{\Pi}_{\mathrm{OOD}})$ is diagonal in the OOD basis, with eigenvalue $\lambda_{\mathrm{pres}}$ on $\mathrm{span}(U_{\mathrm{OOD}}^{(:r)})$ and $\lambda_{\mathrm{pres}} + \lambda_{\mathrm{geom}}$ on its complement. Denoting $B = \frac{1}{N}\sum_i z_i^c z_i^{c\top} + \lambda_{\mathrm{pres}}\,I$ and $C = \frac{1}{N}\sum_i h_i^c z_i^{c\top} + \lambda_{\mathrm{pres}}\,\widetilde{W}_{\mathrm{dec}}(T^\star)$, the system decouples row-wise in the OOD basis into two standard ridge regressions:
\begin{itemize}[nosep,leftmargin=*]
\item \textbf{Inside $\widehat{\Pi}_{\mathrm{OOD}}$} (first $r$ rows): $W_{\mathrm{in}} = C_{\mathrm{in}}\, B^{-1}$, with ridge level $\lambda_{\mathrm{pres}}$.
\item \textbf{Outside $\widehat{\Pi}_{\mathrm{OOD}}$} (remaining $d{-}r$ rows): $W_{\mathrm{out}} = C_{\mathrm{out}}\,(B + \lambda_{\mathrm{geom}}\,I)^{-1}$, with ridge level $\lambda_{\mathrm{pres}} + \lambda_{\mathrm{geom}}$.
\end{itemize}
Combining via the projector yields Eq.~\eqref{eq:gae_step2_dec}:
$W_{\mathrm{dec}}^{\mathrm{GAE}} = \widehat{\Pi}_{\mathrm{OOD}}\,C\,B^{-1} + (I - \widehat{\Pi}_{\mathrm{OOD}})\,C\,(B + \lambda_{\mathrm{geom}}\,I)^{-1}$.
\qed

\subsection{Proof of Theorem~\ref{thm:improvement_over_id}}
\label{subsec:proof_theorem_improvement}

\begin{proof}
By Eq.~\eqref{eq:gae_param}, every column of $\widetilde{W}_{\mathrm{dec}}(T^\star)$ lies in $\mathrm{span}(U_{\mathrm{OOD}}^{(:r)})$, so
\begin{equation}
\Pi_{\mathrm{dec}}^{\mathrm{GAE}} = U_{\mathrm{OOD}}^{(:r)}(U_{\mathrm{OOD}}^{(:r)})^\top = \widehat{\Pi}_{\mathrm{OOD}}.
\end{equation}
Under the condition $\widehat{\Pi}_{\mathrm{OOD}} = \Pi_{\mathrm{OOD}}$, this gives $\Delta(\Pi_{\mathrm{dec}}^{\mathrm{GAE}}) = \|\Pi_{\mathrm{OOD}} - \Pi_{\mathrm{dec}}^{\mathrm{GAE}}\|_F = 0$ and therefore
\begin{equation}
\mathcal{L}_{\mathrm{OOD}}(\Pi_{\mathrm{dec}}^{\mathrm{GAE}}) = \mathcal{L}_{\mathrm{OOD}}(\Pi_{\mathrm{OOD}}).
\end{equation}
Applying the lower bound of Proposition~\ref{prop:faithfulness_gap_controls_excess_hidden-space_faithfulness} to $\Pi_{\mathrm{dec}} = \Pi_{\mathrm{ID}}$:
\begin{equation}
\mathcal{L}_{\mathrm{OOD}}(\Pi_{\mathrm{ID}}) - \mathcal{L}_{\mathrm{OOD}}(\Pi_{\mathrm{OOD}})
\;\ge\;
\frac{\gamma_{\mathrm{OOD}}}{2}\,\Delta(\Pi_{\mathrm{ID}})^2.
\end{equation}
Substituting $\mathcal{L}_{\mathrm{OOD}}(\Pi_{\mathrm{dec}}^{\mathrm{GAE}}) = \mathcal{L}_{\mathrm{OOD}}(\Pi_{\mathrm{OOD}})$ and rearranging:
\begin{equation}
\mathcal{L}_{\mathrm{OOD}}(\Pi_{\mathrm{dec}}^{\mathrm{GAE}})
= \mathcal{L}_{\mathrm{OOD}}(\Pi_{\mathrm{OOD}})
\le \mathcal{L}_{\mathrm{OOD}}(\Pi_{\mathrm{ID}})
- \frac{\gamma_{\mathrm{OOD}}}{2}\,\Delta(\Pi_{\mathrm{ID}})^2.
\end{equation}
\end{proof}

\section{Empirical Evidence for Section~\ref{sec:hidden_geometry_faithfulness}}
\label{sec:empirical_evidence}
This appendix provides empirical support for the theoretical results in Section~\ref{sec:hidden_geometry_faithfulness}. We verify three claims: (i) the explainer subspace aligns with the ID-active subspace, (ii) the explainer-dependent component in the decomposition~\eqref{eq:decomposition} accounts for a meaningful fraction of the total OOD error, and (iii) second-moment shift enlarges the faithfulness gap as predicted by Proposition~\ref{prop:second-moment_shift_enlarges_the_faithfulness_gap}.

\subsection{Controlled Toy Setting}
\label{subsec:toy_setting}
All experiments in this appendix use a controlled toy setting that allows us to vary OOD severity continuously while keeping the model and explainer fixed.

\paragraph{Target model.}
The target model is a 2-layer ReLU MLP with input dimension $d_{\mathrm{in}}{=}128$, hidden dimension $d{=}256$, and output dimension $p \in \{4, 8, 16\}$:
\[
h(x) = \mathrm{ReLU}(W_1 x + b_1) \in \mathbb{R}^{d}, \qquad o(x) = W_2\,h(x) + b_2 \in \mathbb{R}^p.
\]
The hidden activations $h(x) \in \mathbb{R}^d$ correspond to the hidden representations analyzed in the main text. The explainer operates on these $d$-dimensional activations.

\paragraph{Explainer.}
We train both a transcoder and an SAE on ID hidden activations using the standard reconstruction-plus-sparsity objective (ERM), with dictionary sizes $k \in \{d/2,\, 1d,\, 2d,\, 4d,\, 8d,\, 32d\}$. The subspace rank is set to $r = p$, matching the rank of the output weight matrix $W_2 \in \mathbb{R}^{p \times d}$: since $o(x) = W_2\,h(x) + b_2$, the model's output depends on $h(x)$ only through its projection onto $\mathrm{span}(W_2^\top)$, which has dimension $p$. This makes $r = p$ the natural rank at which the active subspace captures all output-relevant directions.

\paragraph{OOD generation.}
ID inputs are drawn from $x \sim \mathcal{N}(0, I_{d_{\mathrm{in}}})$. OOD inputs are generated as $x = A_s\,z$ where $z \sim \mathcal{N}(0, I_{d_{\mathrm{in}}})$ and $A_s$ is a severity-dependent transformation matrix. Specifically, let $Q \in \mathbb{R}^{d_{\mathrm{in}} \times d_{\mathrm{in}}}$ be a fixed random orthogonal matrix and $\mathbf{s} \in \mathbb{R}^{d_{\mathrm{in}}}$ be fixed slopes linearly spaced in $[-S, S]$ with $S{=}6$. The base input covariance under severity $s$ is
\[
\Sigma_{\mathrm{base}}(s) = Q\,\mathrm{diag}(e^{s \cdot \mathbf{s}})\,Q^\top\,(I + s\rho(1 + \tfrac{s}{2})\,VV^\top),
\]
where $V \in \mathbb{R}^{d_{\mathrm{in}} \times r_V}$ is a fixed random orthonormal matrix ($r_V{=}32$) and $\rho{=}10$. The base covariance is then rescaled directionally: variance in the output-relevant subspace $\mathrm{span}(W_2^\top)$ is reduced by factor $(1 - 0.6s)$, and variance in its orthogonal complement is amplified by factor $(1 + 2s^2)$. Finally, the covariance is globally normalized so that $\mathrm{tr}(\Sigma(s)) = d_{\mathrm{in}}$ for all $s$, ensuring that reconstruction error differences are not driven by trivial scale changes. The transformation matrix $A_s$ is the matrix square root of the resulting $\Sigma(s)$. At $s{=}0$, $\Sigma(0) = I$ (ID); as $s$ increases toward $1$, the input covariance undergoes progressive rotation and anisotropic rescaling, which propagates through the ReLU layer to induce second-moment shift in hidden space. We use $N{=}20{,}000$ samples for all geometric computations.

\paragraph{Validation on real models.}
We also validate the results on pretrained language models (GPT-2 Small, Pythia-1.4B) under temporal, domain, and adversarial distribution shifts, using the experimental setup described in Section~\ref{sec:experiments}. Unlike the toy setting, OOD severity is not varied continuously; instead, we compare ID and pure OOD for each shift type.

\subsection{Explainer Subspace Alignment with the ID Dominant Subspace}
\label{subsec:explainer_subspace_alignment}
Section~\ref{subsec:notation_preliminaries} claims that the explainer subspace $\Pi$ aligns closely with $\Pi_{\mathrm{ID}}$ for a well-trained ID explainer. We verify this by measuring the subspace overlap
\[
\mathrm{overlap}(U_{\mathrm{dec}},\, U_{\mathrm{ID}})
:= \frac{1}{r}\,\|U_{\mathrm{dec}}^\top U_{\mathrm{ID}}\|_F^2
\;\in\;[0,1],
\]
where $U_{\mathrm{ID}} \in \mathbb{R}^{d \times r}$ contains the top-$r$ eigenvectors of $M_{\mathrm{ID}}$. A value of $1$ indicates perfect alignment; $0$ indicates orthogonality.

\subsubsection{Toy Setting}
\label{subsubsec:alignment_toy}
We sweep dictionary sizes $k \in \{d/2,\, 1d,\, 2d,\, 4d,\, 8d,\, 32d\}$ for both transcoders and SAEs to test whether the claim holds across different levels of overcompleteness. Figure~\ref{fig:subspace_overlap} reports the overlap as a function of OOD severity for each dictionary size. In both panels, solid lines show the explainer--ID overlap and dashed lines show the explainer--OOD overlap.

\begin{figure}[h]
\centering
\begin{subfigure}[t]{0.48\linewidth}
    \centering
    \includegraphics[width=\linewidth]{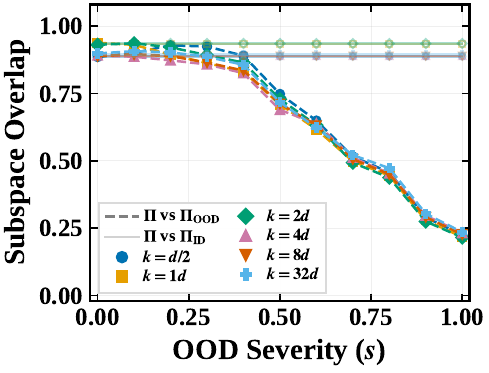}
\end{subfigure}
\hfill
\begin{subfigure}[t]{0.48\linewidth}
    \centering
    \includegraphics[width=\linewidth]{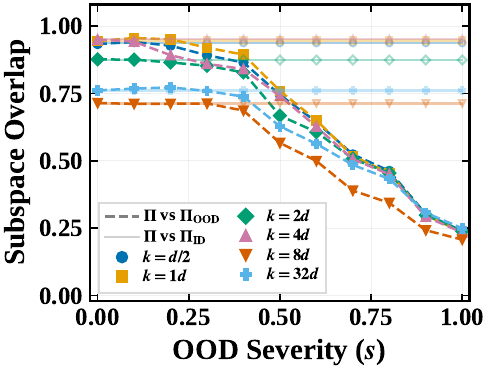}
\end{subfigure}
\caption{\textbf{Explainer subspace overlap between the explainer and the ID-active subspace (solid) vs.\ the OOD-active subspace (dashed), as a function of OOD severity $s$, for dictionary sizes $k \in \{d/2, 1d, 2d, 4d, 8d, 32d\}$.} \textbf{Left:} Transcoder. \textbf{Right:} SAE. For $k \ge 4d$, both explainer types maintain high ID overlap ($>0.89$) regardless of severity, while OOD overlap degrades monotonically.}
\label{fig:subspace_overlap}
\end{figure}

\paragraph{Transcoder (left).} The explainer--ID overlap (solid) remains above $0.89$ for all dictionary sizes and all severity levels. The overlap is largely insensitive to $k$: even an undercomplete dictionary ($k{=}d/2$) captures the ID-active subspace well. The explainer--OOD overlap (dashed) degrades monotonically with severity, dropping below $0.3$ at $s{=}1.0$ regardless of $k$.

\paragraph{SAE (right).} The explainer--ID overlap depends more on $k$. For $k \ge 4d$, the overlap exceeds $0.93$, comparable to the transcoder. For smaller $k$ ($k{=}d/2$ or $k{=}1d$), the overlap drops to $0.70$--$0.77$, indicating that undercomplete SAEs do not fully capture the ID-active subspace. As with the transcoder, the explainer--OOD overlap degrades with severity for all $k$.

\paragraph{Interpretation.} For sufficiently overcomplete dictionaries ($k \ge 4d$, the standard setting in practice), both transcoders and SAEs align closely with the ID-active subspace regardless of OOD severity, confirming the claim in Section~\ref{subsec:notation_preliminaries}. The divergence between the ID overlap (flat) and the OOD overlap (decreasing) is precisely the faithfulness gap: the explainer remains anchored to the ID geometry while the model's active subspace rotates away under OOD shift.

\subsubsection{Real-Data Setting}
\label{subsubsec:alignment_realdata}
Figure~\ref{fig:subspace_overlap_real} reports the subspace overlap for ID-trained explainers on GPT-2 Small and Pythia-1.4B under temporal, domain, and adversarial shifts. The rank is $r{=}64$ for both models.

\begin{figure}[h]
\centering
\begin{subfigure}[t]{0.48\linewidth}
    \centering
    \includegraphics[width=\linewidth]{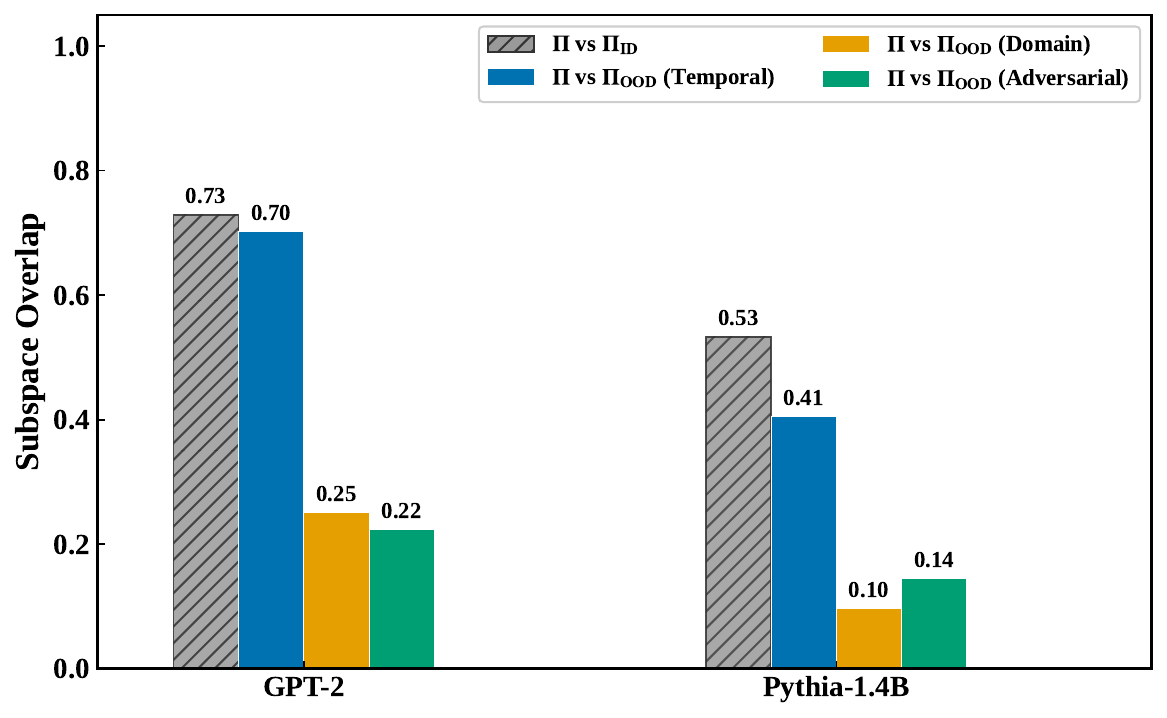}
    \caption{Transcoder}
\end{subfigure}
\hfill
\begin{subfigure}[t]{0.48\linewidth}
    \centering
    \includegraphics[width=\linewidth]{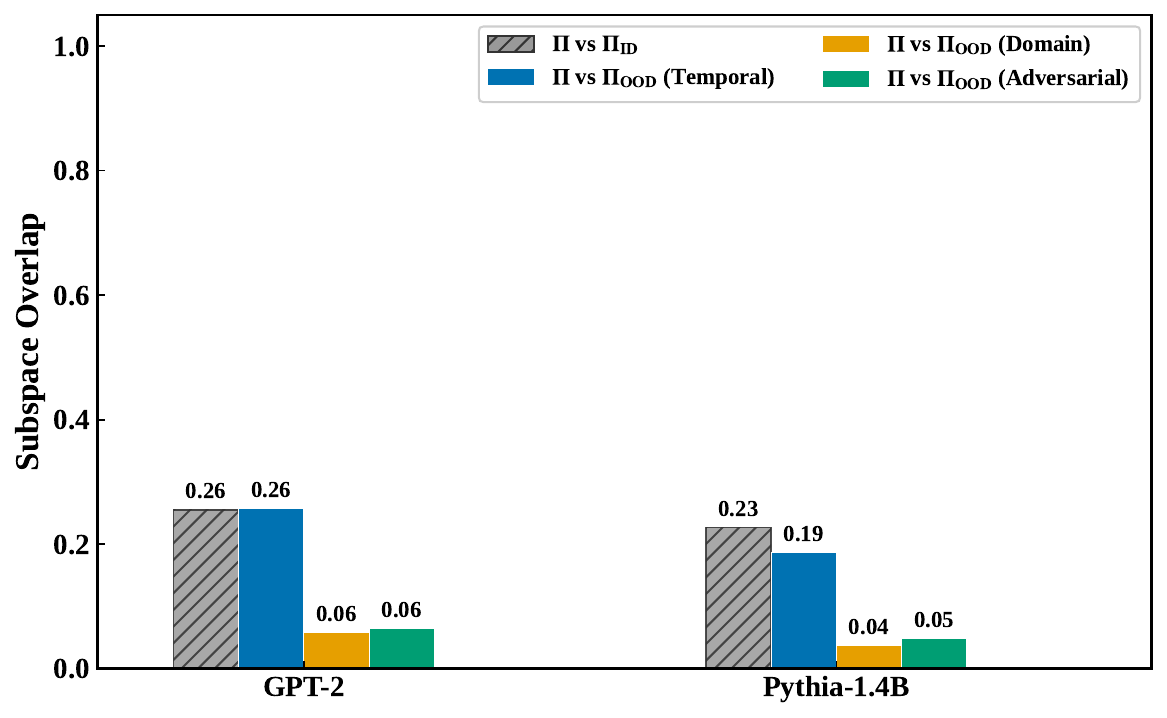}
    \caption{SAE}
\end{subfigure}
\caption{\textbf{Subspace overlap on pretrained language models.} Hatched bars: explainer vs.\ ID-active subspace. Colored bars: explainer vs.\ OOD-active subspace under temporal, domain, and adversarial shifts. The explainer--ID overlap consistently exceeds the OOD overlaps across both explainer types and all shift types.}
\label{fig:subspace_overlap_real}
\end{figure}

\paragraph{Transcoder.} Across both models, the explainer--ID overlap (hatched bars, 0.53--0.73) substantially exceeds the explainer--OOD overlap, which drops to 0.10--0.25 for domain and adversarial shifts. The gap is larger for domain and adversarial shifts than for temporal shift, consistent with the stronger geometric distortion induced by these shift types.

\paragraph{SAE.} SAE explainers show a similar pattern. The explainer--ID overlap (0.23--0.26) exceeds the OOD overlap under domain and adversarial shifts (0.04--0.06), while temporal shift retains a larger portion of the ID overlap (0.19--0.26). The overall overlap values are lower than transcoders because SAEs reconstruct residual-stream activations rather than MLP outputs, spreading energy across more directions.

\paragraph{Interpretation.} The ID overlap values are lower than in the toy setting (0.89+). This is expected: real language models have higher effective dimensionality, and the rank $r{=}64$ captures a smaller fraction of the total hidden dimension ($d{=}768$ for GPT-2, $d{=}2048$ for Pythia-1.4B) than $r{=}p$ in the toy setting. Despite this, the relative pattern (ID overlap above OOD overlap under domain and adversarial shifts) holds consistently for both transcoders and SAEs, confirming the alignment claim in Section~\ref{subsec:notation_preliminaries} on real models. Temporal shift produces a milder gap, consistent with its smaller second-moment perturbation.

\paragraph{Faithfulness gap on $M_{\mathrm{ID}}$ vs.\ on $M_{\mathrm{OOD}}$.}
Definition~\ref{def:faithfulness_gap} measures the gap of $\Pi_{\mathrm{dec}}$ against $\Pi_{\mathrm{OOD}}$. To assess the residual ID-side misalignment, we additionally report the analogous quantity against $\Pi_{\mathrm{ID}}$,
\[
\Delta_{\mathrm{ID}}(\Pi_{\mathrm{dec}}) := \|\Pi_{\mathrm{ID}} - \Pi_{\mathrm{dec}}\|_F,
\]
which directly quantifies how well the ID-trained explainer captures $\Pi_{\mathrm{ID}}$. Frobenius gap and overlap encode the same information through $\|\Pi_A - \Pi_B\|_F = \sqrt{2r\,(1-\mathrm{overlap}(U_A, U_B))}$, so Table~\ref{tab:frobenius_gap_real} reports the gaps converted from the same measurements as Figure~\ref{fig:subspace_overlap_real}. The maximum value at $r{=}64$ is $\sqrt{2r} = \sqrt{128} \approx 11.31$.

\begin{table}[h]
\centering
\small
\caption{\textbf{Frobenius faithfulness gap of the ID-trained explainer at $r{=}64$.} Columns 4--6 report $\Delta_{\mathrm{ID}}(\Pi_{\mathrm{dec}}) = \|\Pi_{\mathrm{ID}}-\Pi_{\mathrm{dec}}\|_F$, $\Delta(\Pi_{\mathrm{dec}}) = \|\Pi_{\mathrm{OOD}}-\Pi_{\mathrm{dec}}\|_F$, and $\|\Pi_{\mathrm{ID}}-\Pi_{\mathrm{OOD}}\|_F$. The last column is the ratio $\Delta(\Pi_{\mathrm{dec}})/\Delta_{\mathrm{ID}}(\Pi_{\mathrm{dec}})$. Maximum possible value at $r{=}64$ is $\sqrt{128}\approx 11.31$. All values are derived from the overlap measurements behind Figure~\ref{fig:subspace_overlap_real}.}
\label{tab:frobenius_gap_real}
\begin{tabular}{llccccc}
\toprule
Explainer & Model & Shift & $\Delta_{\mathrm{ID}}(\Pi_{\mathrm{dec}})$ & $\Delta(\Pi_{\mathrm{dec}})$ & $\|\Pi_{\mathrm{ID}}-\Pi_{\mathrm{OOD}}\|_F$ & Ratio \\
\midrule
\multirow{6}{*}{Transcoder}
 & GPT-2 Small  & Temporal      & 5.89 & 6.17  & 5.65  & 1.05$\times$ \\
 & GPT-2 Small  & Domain        & 5.89 & 9.79  & 9.83  & 1.66$\times$ \\
 & GPT-2 Small  & Adversarial   & 5.89 & 9.97  & 9.83  & 1.69$\times$ \\
 \cmidrule(lr){2-7}
 & Pythia-1.4B  & Temporal      & 7.73 & 8.72  & 7.68  & 1.13$\times$ \\
 & Pythia-1.4B  & Domain        & 7.73 & 10.75 & 10.78 & 1.39$\times$ \\
 & Pythia-1.4B  & Adversarial   & 7.73 & 10.46 & 10.49 & 1.35$\times$ \\
\midrule
\multirow{6}{*}{SAE}
 & GPT-2 Small  & Temporal      & 9.76 & 9.75  & 4.83  & 1.00$\times$ \\
 & GPT-2 Small  & Domain        & 9.76 & 10.98 & 10.33 & 1.12$\times$ \\
 & GPT-2 Small  & Adversarial   & 9.76 & 10.95 & 10.30 & 1.12$\times$ \\
 \cmidrule(lr){2-7}
 & Pythia-1.4B  & Temporal      & 9.95 & 10.20 & 7.65  & 1.03$\times$ \\
 & Pythia-1.4B  & Domain        & 9.95 & 11.10 & 10.88 & 1.12$\times$ \\
 & Pythia-1.4B  & Adversarial   & 9.95 & 11.03 & 10.71 & 1.11$\times$ \\
\bottomrule
\end{tabular}
\end{table}

For transcoders, $\Delta_{\mathrm{ID}}(\Pi_{\mathrm{dec}})$ is consistently smaller than $\Delta(\Pi_{\mathrm{dec}})$, with the OOD-side gap exceeding the ID-side gap by $1.35$--$1.69\times$ under domain and adversarial shifts, where the second-moment shift is largest. Moreover, $\|\Pi_{\mathrm{ID}}-\Pi_{\mathrm{OOD}}\|_F$ tracks $\Delta(\Pi_{\mathrm{dec}})$ to within $3\%$ under these shifts, empirically supporting the substitution $\Pi_{\mathrm{dec}} \approx \Pi_{\mathrm{ID}}$ used in Proposition~\ref{prop:second-moment_shift_enlarges_the_faithfulness_gap}. SAE explainers exhibit a larger $\Delta_{\mathrm{ID}}(\Pi_{\mathrm{dec}})$ (close to the upper bound at $r{=}64$), reflecting the lower per-rank overlap of residual-stream dictionaries; the ordering $\Delta(\Pi_{\mathrm{dec}}) \ge \Delta_{\mathrm{ID}}(\Pi_{\mathrm{dec}})$ still holds under domain and adversarial shifts but with a smaller margin ($1.11$--$1.12\times$). Temporal shift produces nearly equal ID- and OOD-side gaps for both explainer types, consistent with its milder second-moment perturbation.


\subsection{Relative Magnitude of the Explainer-Dependent Term}
\label{subsec:explainer_dependent_magnitude}

The decomposition of Eq.~\eqref{eq:decomposition} separates OOD faithfulness as
\[
\mathcal{L}_{\mathrm{OOD}}(\Pi)
=
\underbrace{\mathcal{L}_{\mathrm{OOD}}(\Pi_{\mathrm{OOD}})}_{\text{irreducible}}
+
\underbrace{\operatorname{tr}\!\big((\Pi_{\mathrm{OOD}}-\Pi)\,M_{\mathrm{OOD}}\big)}_{\text{explainer-dependent}}.
\]
Adaptation can only reduce the explainer-dependent component. If this component is negligible relative to the irreducible component, no adaptation strategy can meaningfully improve OOD faithfulness. We measure the explainer-dependent ratio
\[
\eta := \frac{\operatorname{tr}[(\Pi_{\mathrm{OOD}} - \Pi)\,M_{\mathrm{OOD}}]}{\mathcal{L}_{\mathrm{OOD}}(\Pi)}
\]
to assess whether the explainer-dependent component is an actionable target.

\subsubsection{Toy Setting}
\label{subsubsec:explainer_dep_toy}

Figure~\ref{fig:thm1_decomposition} plots $\eta$ as a function of OOD severity $s$ for each dictionary size $k$. At $s{=}0$, $\eta < 0.05$ for most $k$: when ID and OOD coincide, the explainer subspace is near-optimal. As severity increases, $\eta$ grows steadily to $\eta \approx 0.31$ at $s{=}1.0$ for both transcoders and SAEs, with little variation across $k$.

The moderate value of $\eta$ at maximum severity is a consequence of the toy setting's low rank ratio: $r = p = 8$ out of $d = 256$ dimensions, so only $3.1\%$ of the hidden space is retained. The irreducible term $\mathcal{L}_{\mathrm{OOD}}(\Pi_{\mathrm{OOD}}) = \sum_{i=r+1}^d \lambda_i(M_{\mathrm{OOD}})$ sums 248 discarded eigenvalues and dominates by construction. In real models where $r$ is chosen to capture a larger fraction of the activation energy, $\eta$ is substantially higher (Section~\ref{subsubsec:explainer_dep_realdata}). The key observation in this toy setting is that $\eta$ increases monotonically with severity, confirming that distribution shift enlarges the explainer-dependent component relative to the total error.

\begin{figure}[h]
\centering
\begin{subfigure}[t]{0.48\linewidth}
    \centering
    \includegraphics[width=\linewidth]{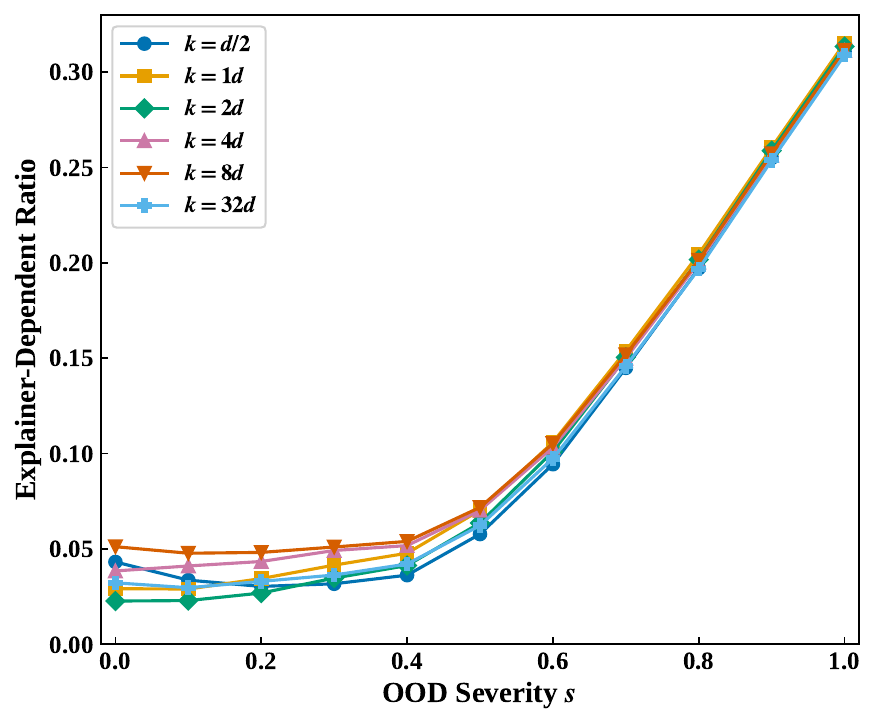}
    \caption{Transcoder}
\end{subfigure}
\hfill
\begin{subfigure}[t]{0.48\linewidth}
    \centering
    \includegraphics[width=\linewidth]{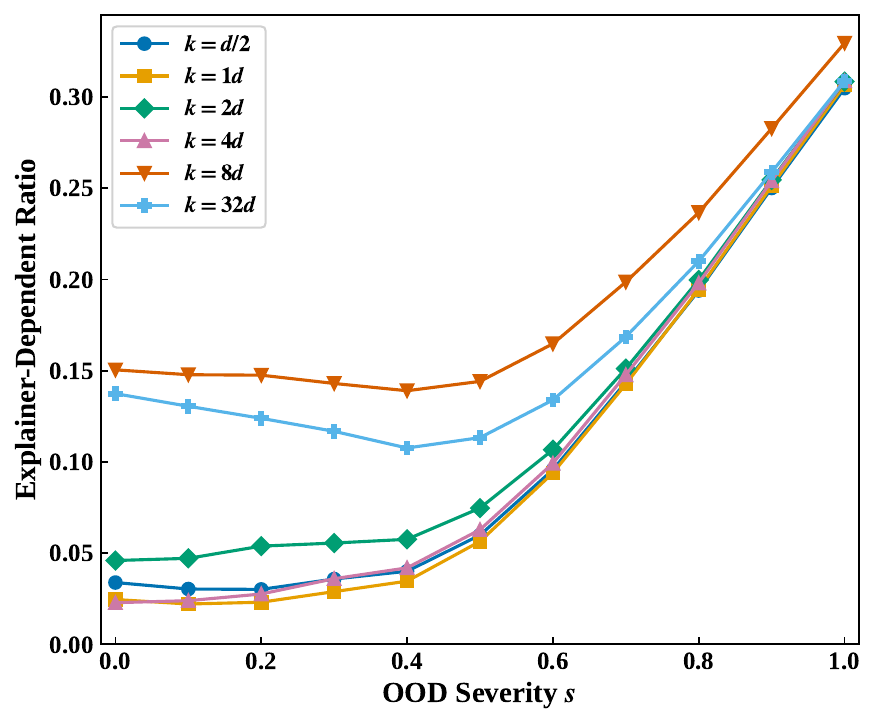}
    \caption{SAE}
\end{subfigure}
\caption{\textbf{Explainer-dependent ratio $\eta$ as a function of OOD severity $s$ for dictionary sizes $k \in \{d/2, 1d, 2d, 4d, 8d, 32d\}$.} At $s{=}1.0$, $\eta \approx 0.31$ for both explainer types, independent of $k$.}
\label{fig:thm1_decomposition}
\end{figure}

\subsubsection{Real-Data Setting}
\label{subsubsec:explainer_dep_realdata}
Figure~\ref{fig:thm1_decomposition_real} reports $\eta$ at pure ID and pure OOD for GPT-2 Small and Pythia-1.4B under temporal, domain, and adversarial shifts.

\paragraph{Transcoder (left).} At pure OOD, domain and adversarial shifts reach $\eta > 0.99$ across both models: the explainer-dependent component dominates the total error almost entirely. Temporal shift yields $\eta \approx 0.66$--$0.99$ depending on the model, reflecting its milder geometric distortion. These values are substantially higher than in the toy setting ($\eta \approx 0.31$), because the toy setting uses $r/d = 8/256 = 3.1\%$ so the irreducible term dominates by construction.

\paragraph{SAE (right).} SAE explainers exhibit the same trend. Under domain and adversarial shifts, $\eta > 0.99$ for both models, confirming that the explainer-dependent component dominates. Temporal shift yields $\eta \approx 0.68$--$0.99$ depending on the model, consistent with its milder geometric distortion. At pure ID, $\eta \approx 0.51$ (GPT-2) and $\eta \approx 0.99$ (Pythia-1.4B), reflecting the larger model's higher effective dimensionality relative to $r{=}64$.

\begin{figure}[h]
\centering
\begin{subfigure}[t]{0.48\linewidth}
    \centering
    \includegraphics[width=\linewidth]{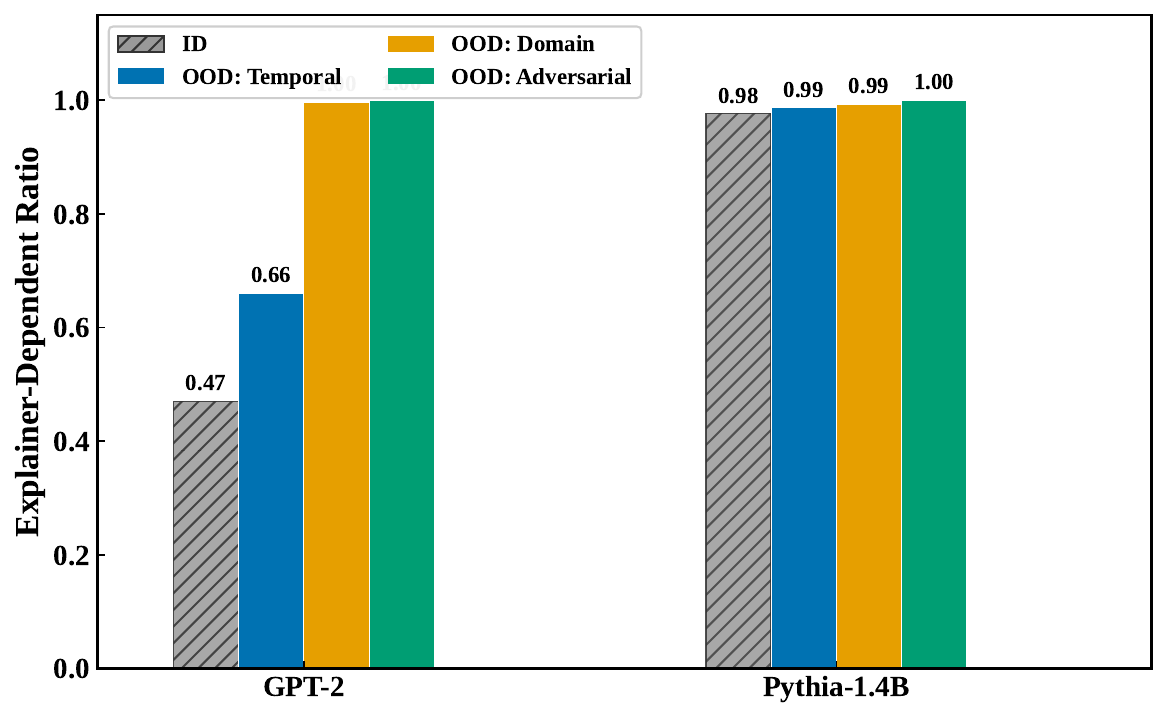}
    \caption{Transcoder}
\end{subfigure}
\hfill
\begin{subfigure}[t]{0.48\linewidth}
    \centering
    \includegraphics[width=\linewidth]{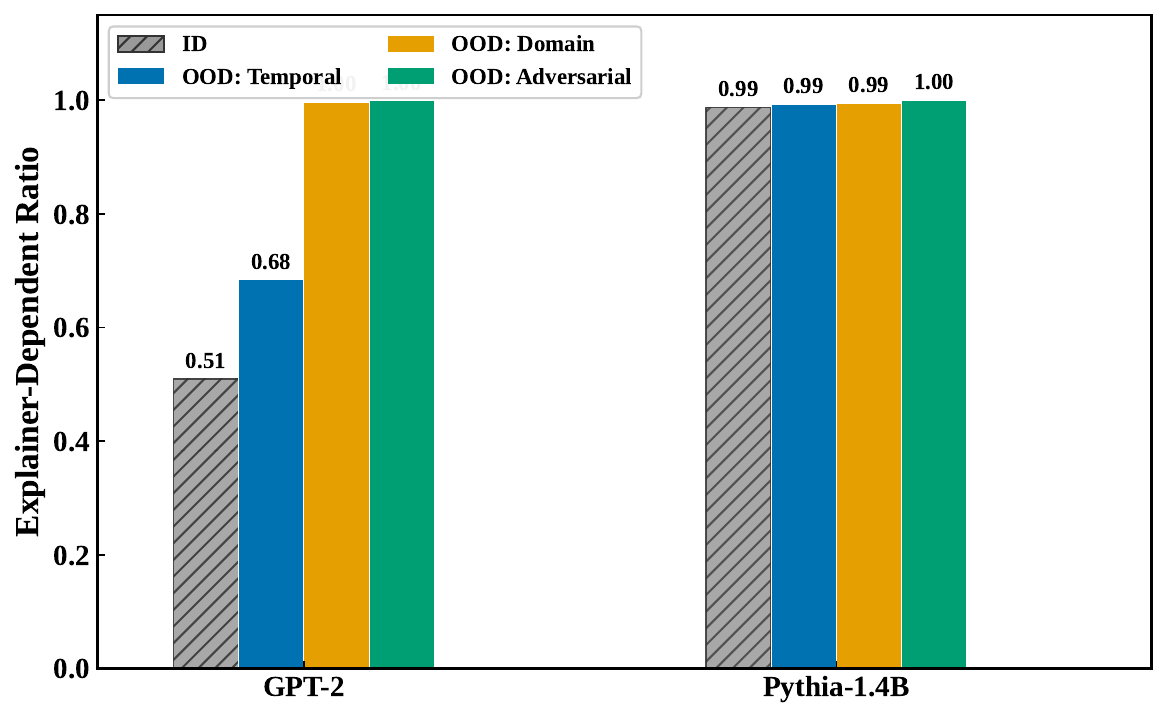}
    \caption{SAE}
\end{subfigure}
\caption{\textbf{Explainer-dependent ratio $\eta$ at pure ID (hatched) and pure OOD (colored) ($r{=}64$).} Under domain and adversarial shifts, $\eta > 0.99$ for both explainer types.}
\label{fig:thm1_decomposition_real}
\end{figure}


\subsection{Empirical Verification of Proposition~\ref{prop:second-moment_shift_enlarges_the_faithfulness_gap}}
\label{subsec:empirical_verification_propositions}
Proposition~\ref{prop:second-moment_shift_enlarges_the_faithfulness_gap} predicts that second-moment shift controls the faithfulness gap via
\[
\Delta(\Pi_{\mathrm{ID}}) \le \frac{\sqrt{2}}{\gamma_{\mathrm{ID}}}\,\|M_{\mathrm{OOD}} - M_{\mathrm{ID}}\|_F.
\]
Since this bound depends only on $M_{\mathrm{ID}}$ and $M_{\mathrm{OOD}}$, it is independent of the explainer architecture and dictionary size. We verify it empirically on both the toy setting and real models.

\subsubsection{Toy Setting}
\label{subsubsec:prop_verification_toy}

Figure~\ref{fig:prop2_verification_toy} plots the normalized second-moment shift against $\Delta(\Pi_{\mathrm{ID}})$, with color indicating OOD severity $s$. The two quantities are near-perfectly correlated (Pearson $r{=}0.993$, Spearman $\rho{=}1.000$), consistent with the linear upper bound.

\begin{figure}[h]
\centering
\includegraphics[width=0.55\linewidth]{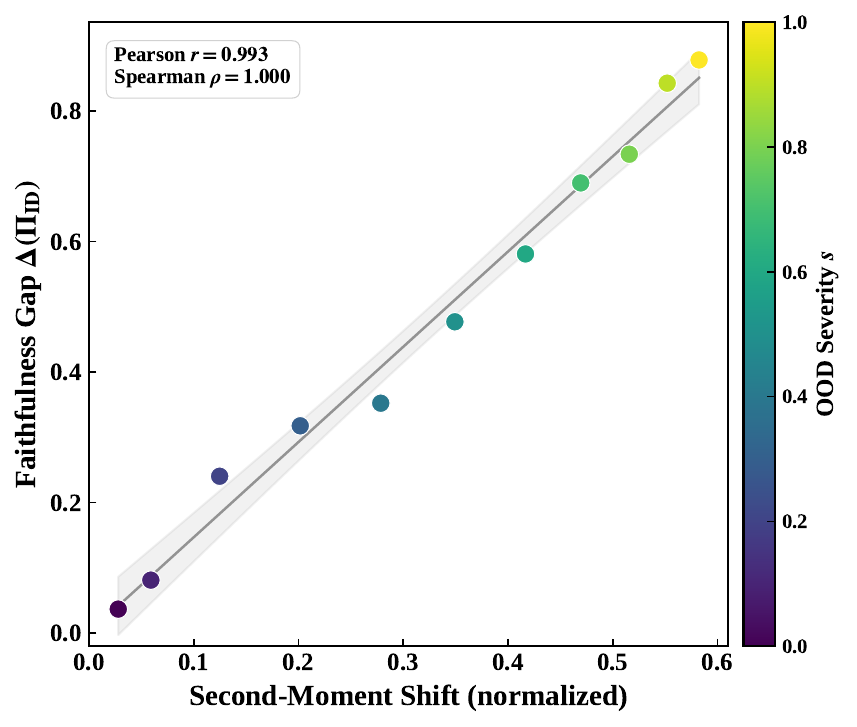}
\caption{\textbf{Proposition~\ref{prop:second-moment_shift_enlarges_the_faithfulness_gap} verification (toy).} X: normalized second-moment shift. Y: faithfulness gap $\Delta(\Pi_{\mathrm{ID}})$. Color: OOD severity $s$. The result is independent of explainer type and dictionary size.}
\label{fig:prop2_verification_toy}
\end{figure}

\subsubsection{Real-Data Setting}
\label{subsubsec:prop_verification_realdata}

\begin{figure}[h]
\centering
\begin{subfigure}[t]{0.48\linewidth}
    \centering
    \includegraphics[width=\linewidth]{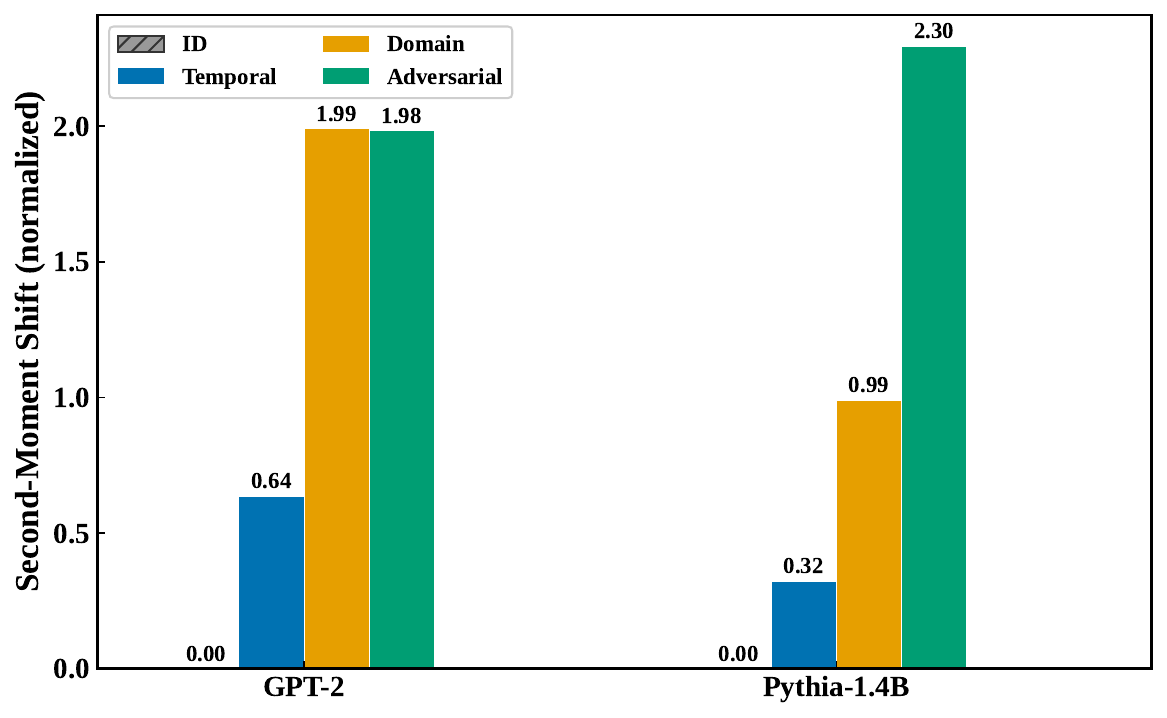}
    \caption{Transcoder: Second-moment shift}
\end{subfigure}
\hfill
\begin{subfigure}[t]{0.48\linewidth}
    \centering
    \includegraphics[width=\linewidth]{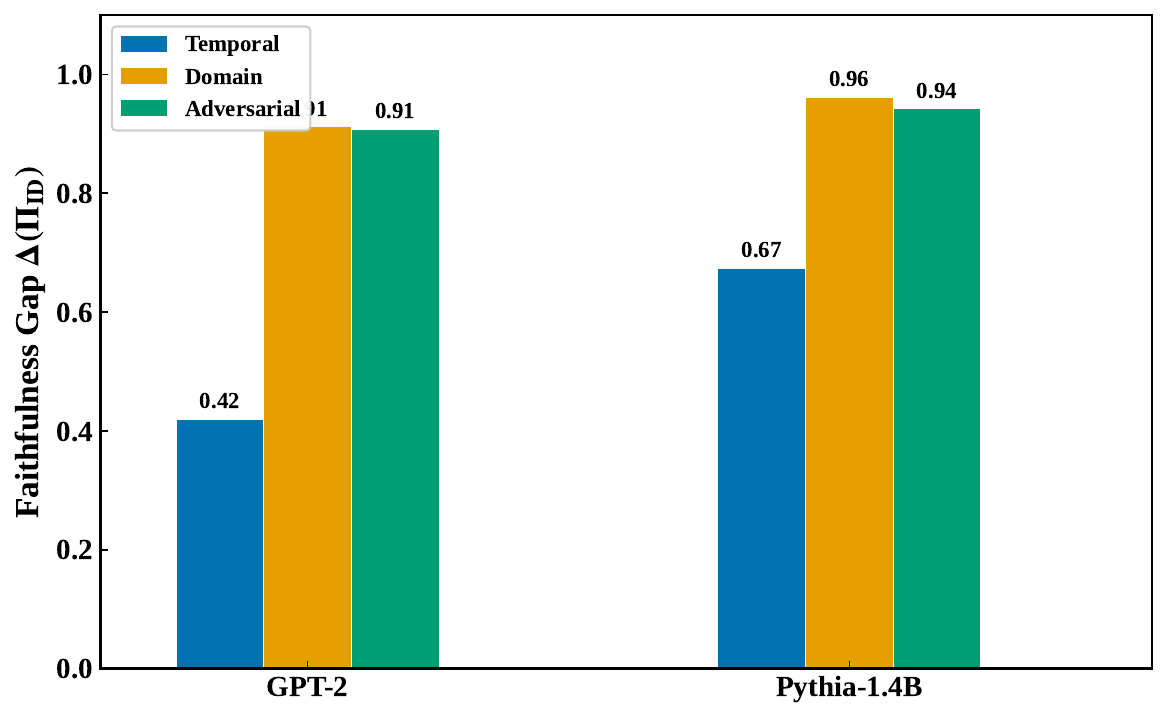}
    \caption{Transcoder: Faithfulness gap $\Delta(\Pi_{\mathrm{ID}})$}
\end{subfigure}
\\
\begin{subfigure}[t]{0.48\linewidth}
    \centering
    \includegraphics[width=\linewidth]{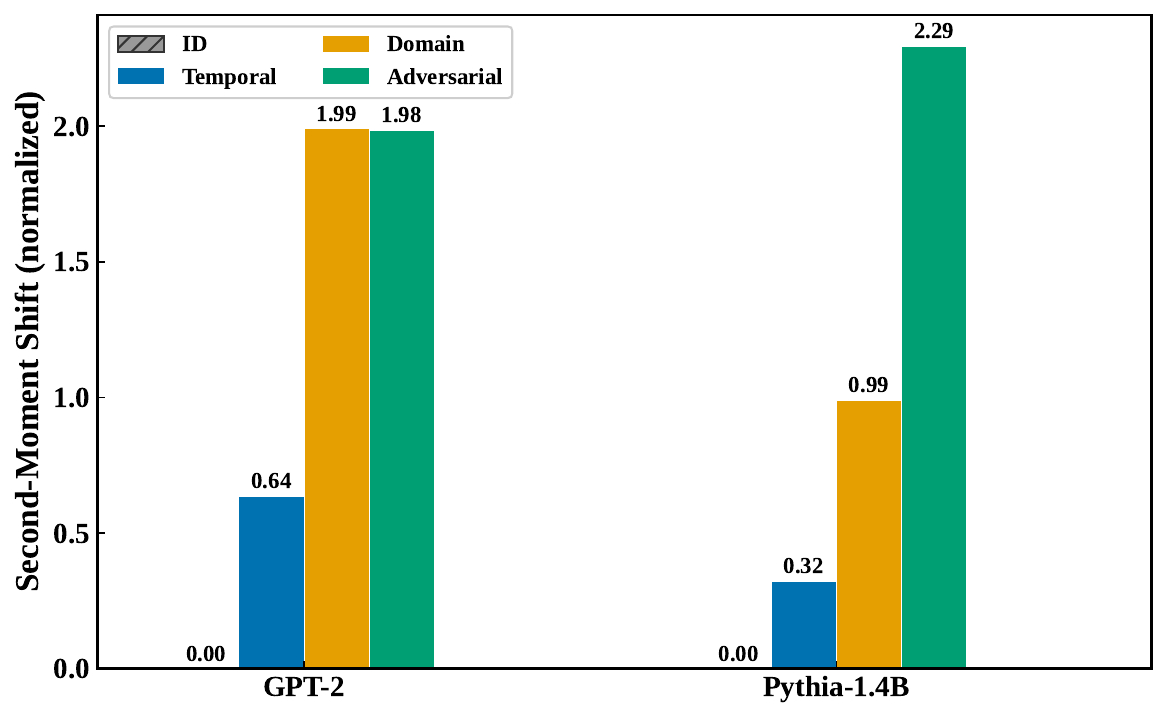}
    \caption{SAE: Second-moment shift}
\end{subfigure}
\hfill
\begin{subfigure}[t]{0.48\linewidth}
    \centering
    \includegraphics[width=\linewidth]{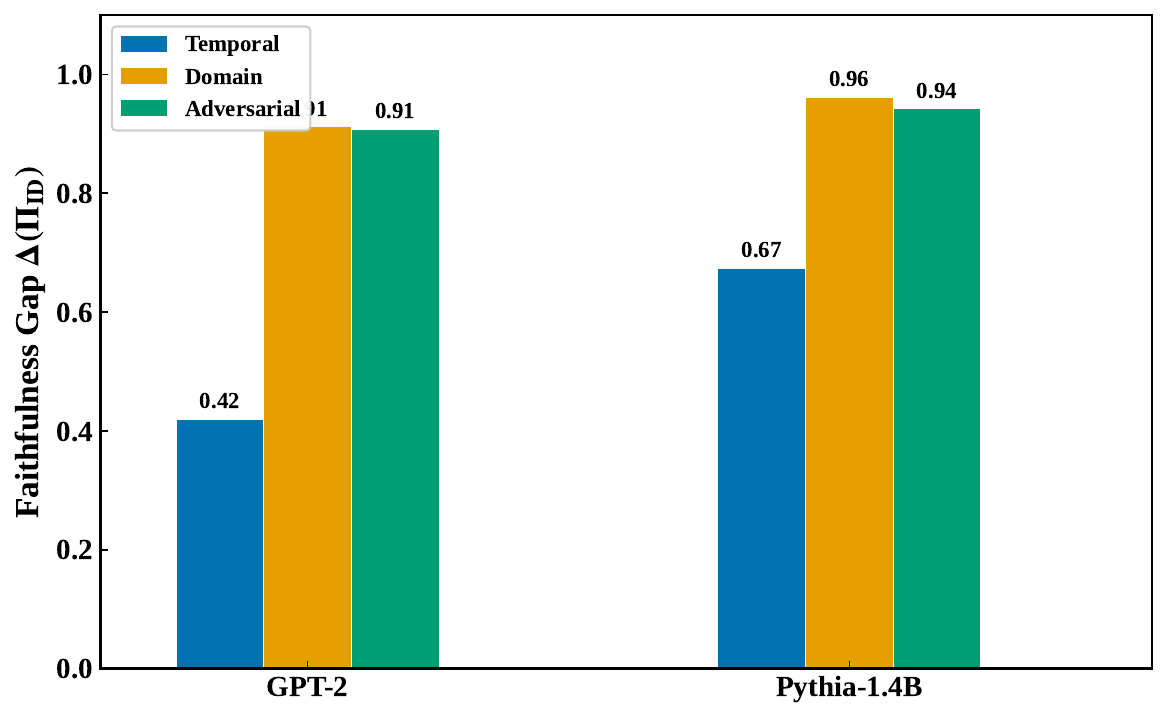}
    \caption{SAE: Faithfulness gap $\Delta(\Pi_{\mathrm{ID}})$}
\end{subfigure}
\caption{\textbf{Proposition~\ref{prop:second-moment_shift_enlarges_the_faithfulness_gap} verification ($r{=}64$) at pure OOD.} Top row: Transcoder. Bottom row: SAE. Within each model, larger shifts correspond to larger gaps for both explainer types.}
\label{fig:prop2_verification_real}
\end{figure}

Figure~\ref{fig:prop2_verification_real} shows the second-moment shift and faithfulness gap for GPT-2 Small and Pythia-1.4B ($r{=}64$) at pure OOD for both transcoders (top row) and SAEs (bottom row). Within each model, the ordering is consistent: temporal shift produces the smallest second-moment shift and the smallest faithfulness gap, while domain and adversarial shifts produce larger shifts and correspondingly larger gaps.

\paragraph{SAE.} The same ordering holds for SAE explainers: temporal shift produces the smallest second-moment shift and faithfulness gap, while domain and adversarial shifts produce larger values. The magnitudes are comparable to transcoders, confirming that the proposition is independent of the explainer architecture.

\paragraph{Summary.}
The proposition holds across all models and both explainer types under diverse real-world distribution shifts, confirming that the theoretical predictions generalize beyond the controlled toy setting.

\section{Empirical Verification of Theorem~\ref{thm:improvement_over_id}}
\label{subsec:geometric_appendix}
Theorem~\ref{thm:improvement_over_id} predicts that GAE's projection-loss improvement over the ID explainer grows at least quadratically with the faithfulness gap: $\mathcal{L}_{\mathrm{OOD}}(\Pi_{\mathrm{ID}}) - \mathcal{L}_{\mathrm{OOD}}(\Pi_{\mathrm{dec}}^{\mathrm{GAE}}) \geq \tfrac{1}{2}\gamma_{\mathrm{OOD}}\,\Delta(\Pi_{\mathrm{ID}})^2$. We verify this in the controlled toy setting (Section~\ref{subsec:controlled_validation}) by fixing $r=p=8$ and sweeping OOD severity from $s{=}0$ (ID) to $s{=}1$ (maximum shift). At each severity, we compute the Step~1 GAE projector $\Pi_{\mathrm{dec}}^{\mathrm{GAE}} = \widehat{\Pi}_{\mathrm{OOD}}$ and measure the projection-loss improvement $I(s) = \mathcal{L}_{\mathrm{OOD},s}(\Pi_{\mathrm{ID}}) - \mathcal{L}_{\mathrm{OOD},s}(\widehat{\Pi}_{\mathrm{OOD}})$ against the squared faithfulness gap $\Delta(\Pi_{\mathrm{ID}})^2$.

\begin{figure}[h]
\centering
\includegraphics[width=0.7\linewidth]{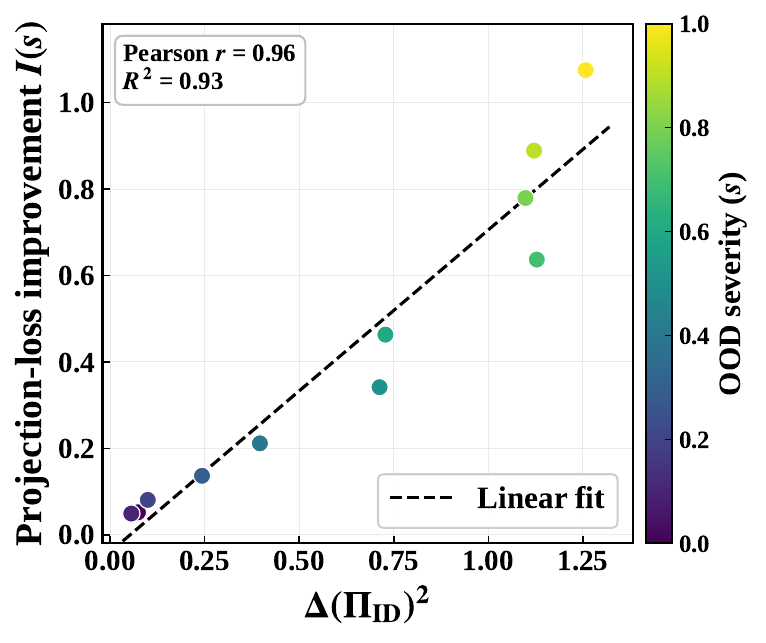}
\caption{\textbf{Empirical verification of Theorem~\ref{thm:improvement_over_id} on the controlled toy setting.} Projection-loss improvement $I(s)$ versus the squared faithfulness gap $\Delta(\Pi_{\mathrm{ID}})^2$, swept across OOD severity $s \in [0,1]$. The dashed line is a linear fit ($R^2 = 0.93$, Pearson $r = 0.96$), supporting the quadratic dependence predicted by Theorem~\ref{thm:improvement_over_id}.}
\label{fig:theorem1_verification}
\end{figure}

Figure~\ref{fig:theorem1_verification} shows a strong linear relationship between $I(s)$ and $\Delta(\Pi_{\mathrm{ID}})^2$ (Pearson $r = 0.96$, $R^2 = 0.93$), supporting the quadratic dependence predicted by Theorem~\ref{thm:improvement_over_id}. The empirical improvement exceeds the guaranteed lower bound at every severity (0 violations out of 11), confirming that the bound is non-vacuous. The bound uses the worst-case eigengap $\gamma_{\mathrm{OOD}}/2$ as its constant, which is conservative relative to the effective improvement rate, as expected for a spectral-gap-based guarantee.

\section{Experimental Details}\label{sec:experimental_details}

\subsection{Model and Explainer Details}
\label{subsec:model_explainer_details}

Table~\ref{tab:model_details} summarizes the model and explainer configurations. All models are frozen pretrained checkpoints; only explainer components are adapted.

\begin{table}[h]
\centering
\caption{\textbf{Model and explainer configurations.}}
\label{tab:model_details}
\begin{tabular}{lccccl}
\toprule
Model & $d$ & Layer & $k$ ($32d$) & ID Corpus \\
\midrule
GPT-2 Small & 768 & 8 & 24{,}576 & OpenWebText~\citep{Gokaslan2019OpenWeb} \\
Pythia-1.4B & 2{,}048 & 15 & 65{,}536 & The Pile~\citep{gao2020pile} \\
\bottomrule
\end{tabular}
\end{table}

For each model, we train two explainer types: Top-K SAEs~\citep{gao2024scaling}, which reconstruct residual-stream activations using Top-K sparsity, and transcoders~\citep{dunefsky2024transcoders}, which reconstruct MLP outputs from MLP inputs. All explainers use dictionary size $k{=}32d$~\citep{bricken2023monosemanticity} and are trained on in-distribution activations with the standard reconstruction-plus-sparsity objective.

\subsection{Baseline Details}
\label{subsec:baseline_details}

\begin{itemize}[leftmargin=1.5em]
\item \textbf{Fixed (ERM).} The ID-trained explainer applied to OOD inputs without any adaptation. This is the default deployment setting for existing dictionary-based explainers.

\item \textbf{TERM.} An ID explainer trained with tilted empirical risk minimization~\citep{li2020tilted,muhamed2025decoding}, which upweights high-loss (rare/tail) samples during training to improve coverage of infrequent concepts. This is an alternative ID training strategy, not an OOD adaptation method.

\item \textbf{Finetune}~\citep{kissane2024saetransfer}\textbf{.} The ID-trained explainer finetuned on OOD activations with a warm start. This adapts the existing dictionary to OOD data via gradient-based training.

\item \textbf{Retrain.} The explainer retrained from scratch on OOD activations with the same architecture and hyperparameters. This baseline provides a reference point but is not an oracle upper bound: retraining on OOD data can distort pretrained feature structure~\citep{kumar2022fine}.

\item \textbf{SAEBoost.} A residual boosting approach~\citep{koriagin2025teach}: a secondary explainer is trained on the OOD reconstruction residuals of the ID-trained base explainer, and the two outputs are summed at inference ($\hat{h} = \hat{h}_{\mathrm{base}} + \hat{h}_{\mathrm{resid}}$). This adds OOD-specific capacity while retaining the base dictionary, but requires OOD training data.

\item \textbf{FaithfulSAE.} The explainer retrained on the target model's own unconditional generations~\citep{cho2025faithfulsae}, avoiding dependence on external datasets. Requires full retraining but no OOD data.

\item \textbf{GAE (ours).} Training-free geometric adaptation (Algorithm~\ref{alg:gae}). Step~1 rotates the ID dictionary's subspace to align with the OOD-active subspace via orthogonal Procrustes. Step~2 refits the decoder via constrained ridge regression with geometry and preservation regularization. The entire pipeline is closed-form; no gradient computation or iterative training is required.
\end{itemize}

\subsection{GAE Implementation Details}
\label{subsec:gae_implementation_details}

\paragraph{Step~2 regularization.}
The closed-form decoder refit (Section~\ref{subsec:gae_methods}, Step~2) regularizes the decoder toward the Step~1 output $\widetilde{W}_{\mathrm{dec}}(T^\star)$ with weight $\lambda_{\mathrm{pres}}$, following Eq.~\eqref{eq:gae_step2_dec}.

\paragraph{Decoder interpolation.}
The Step~2 closed-form solution $W_{\mathrm{dec}}^{\mathrm{GAE}}$ optimizes sample-level reconstruction under geometry constraints. With limited OOD samples, this solution can overfit to the estimation noise in $\{z_i, h_i\}$. To mitigate this, we interpolate the Step~2 output with the Step~1 rotated dictionary:
\begin{equation}
\label{eq:decoder_interpolation}
W_{\mathrm{final}} = (1 - \alpha)\,W_{\mathrm{dec}}^{\mathrm{GAE}} + \alpha\,\widetilde{W}_{\mathrm{dec}}(T^\star),
\end{equation}
where $\alpha \in [0, 1]$ controls the interpolation. When $\alpha = 0$, the output equals the closed-form solution in Section~\ref{subsec:gae_methods}. When $\alpha = 1$, the output equals the Step~1 rotation without reconstruction refinement. We treat $\alpha$ as a hyperparameter selected per OOD setting.

\paragraph{Hyperparameter selection.}
GAE requires no gradient computation or iterative optimization. The hyperparameters ($r$, $\lambda_{\mathrm{geom}}$, $\lambda_{\mathrm{pres}}$, $\alpha$) are selected per OOD setting using a small held-out portion of unlabeled OOD activations, monitoring reconstruction quality ($|\Delta\mathrm{CE}|$) and causal faithfulness (nComp). No OOD labels are required. Once selected, the same hyperparameters are used for all evaluation prompts.

\paragraph{Hyperparameter summary.}
Tables~\ref{tab:gae_hparams_tc} and~\ref{tab:gae_hparams_sae} list the GAE hyperparameters for transcoders and SAEs, respectively. All settings use the second-moment matrix (not centered covariance) for OOD subspace estimation, as prescribed in Algorithm~\ref{alg:gae}.

\begin{table}[h]
\centering
\caption{\textbf{GAE hyperparameters for transcoder experiments.}}
\label{tab:gae_hparams_tc}
\begin{tabular}{llccccc}
\toprule
Model & OOD Setting & $r$ & $\lambda_{\mathrm{geom}}$ & $\lambda_{\mathrm{pres}}$ & $\alpha$ & $N_{\mathrm{fit}}$ \\
\midrule
GPT-2 & HaluEval (Adversarial) & 32 & 0.1 & 0.2 & 0 & 2{,}048 \\
GPT-2 & Edgar (Domain) & 3 & 0.1 & 0.2 & 0 & 2{,}048 \\
GPT-2 & FineWeb (Temporal) & 6 & 0.1 & 0.04 & 0 & 2{,}048 \\
\midrule
Pythia-1.4B & HaluEval (Adversarial) & 64 & 15 & 2 & 0 & 2{,}048 \\
Pythia-1.4B & Edgar (Domain) & 64 & 0.1 & 0.2 & 0 & 2{,}048 \\
Pythia-1.4B & FineWeb (Temporal) & 64 & 20 & 1 & 0 & 2{,}048 \\
\bottomrule
\end{tabular}
\end{table}

\begin{table}[h]
\centering
\caption{\textbf{GAE hyperparameters for SAE experiments.} When only Step~1 (Procrustes rotation) is applied, all Step~2 hyperparameters are set to zero.}
\label{tab:gae_hparams_sae}
\begin{tabular}{llccccc}
\toprule
Model & OOD Setting & $r$ & $\lambda_{\mathrm{geom}}$ & $\lambda_{\mathrm{pres}}$ & $\alpha$ & $N_{\mathrm{fit}}$ \\
\midrule
GPT-2 & HaluEval (Adversarial) & 639 & 0 & 0 & 1 & 0 \\
GPT-2 & Edgar (Domain) & 462 & 0 & 0 & 1 & 0 \\
GPT-2 & FineWeb (Temporal) & 700 & 0 & 0 & 1 & 0 \\
\midrule
Pythia-1.4B & HaluEval (Adversarial) & 3 & 0.1 & 0.2 & 0 & 2{,}048 \\
Pythia-1.4B & Edgar (Domain) & 1{,}750 & 0 & 0 & 1 & 0 \\
Pythia-1.4B & FineWeb (Temporal) & 3 & 0 & 0.2 & 0 & 2{,}048 \\
\bottomrule
\end{tabular}
\end{table}

\subsection{Evaluation Details}
\label{subsec:evaluation_details}

\paragraph{Normalized comprehensiveness (nComp).}
We measure causal faithfulness via logit-level feature ablation~\citep{bricken2023monosemanticity,cunningham2023sparse}. Given a prompt, let $\ell_0$ denote the target-token logit under the explainer's full reconstruction, and $\ell_\varnothing$ the logit when all features are ablated to zero. For a feature budget $m^*$, let $\ell_{\setminus m^*}$ be the logit after removing the top-$m^*$ features. We define
\begin{equation}
\mathrm{nComp} = \frac{\ell_0 - \ell_{\setminus m^*}}{\left|\ell_0 - \ell_\varnothing\right|}, \label{eq:ncomp}
\end{equation}
where $m^* = 32$. Higher values indicate that the top features are causally important for the model's output.

\paragraph{Delta cross-entropy ($\Delta$CE).}
We measure reconstruction quality by the cross-entropy increase when original activations are replaced with the explainer's reconstruction~\citep{gao2024scaling}:
\begin{equation}
\Delta\mathrm{CE} = \mathrm{CE}(\hat{h}) - \mathrm{CE}(h), \label{eq:delta_ce}
\end{equation}
where $\mathrm{CE}(h)$ is the loss with original activations and $\mathrm{CE}(\hat{h})$ is the loss with reconstructed activations. Lower values indicate better preservation of the model's predictive behavior.

\paragraph{Normalized AOPC (nAOPC).}
nAOPC~\citep{edin2025normalized} averages the normalized logit drop across multiple feature budgets when top-$m$ features are removed:
\begin{equation}
\mathrm{nAOPC} = \frac{1}{|\mathcal{M}|}\sum_{m \in \mathcal{M}} \frac{\ell_0 - \ell_{\setminus m}}{\left|\ell_0 - \ell_\varnothing\right|}, \label{eq:naopc}
\end{equation}
where $\mathcal{M} = \{1, 2, 4, 8, 16, 32, 64, 128\}$. Higher values indicate that the identified features are causally important across a range of budgets.

\paragraph{Evaluation protocol.}
We evaluate at the last token position using zero-residual ablation (replacing ablated features with zero). The denominator $|\ell_0 - \ell_\varnothing|$ normalizes each example by the logit range between full and empty reconstruction, enabling cross-example comparison regardless of the absolute logit scale. We exclude examples where $|\ell_0 - \ell_\varnothing| < 0.1$ to avoid unstable normalization. Feature budgets are $\mathcal{M} = \{1, 2, 4, 8, 16, 32, 64, 128\}$ with $m^* = 32$. We use $N_{\mathrm{eval}} = 1{,}000$ evaluation prompts per setting and seed $= 2026$ throughout.

\subsection{Compute Resources}
\label{subsec:compute_resources}

All experiments run on a single NVIDIA RTX A6000 GPU (48\,GiB VRAM) with an Intel Xeon Gold 6326 CPU and 252\,GiB of system RAM. No experiment requires multi-GPU or model-parallel execution. GPT-2 runs use peak GPU memory under 8\,GiB. Pythia-1.4B runs with batch size 64 use peak GPU memory under 24\,GiB.

\paragraph{Per-method wall-clock.}
Table~\ref{tab:compute_cost} reports the cost of a single (model, OOD setting) run for each adaptation method. Finetune processes 5M tokens, taking about 2 minutes on GPT-2 and 12 minutes on Pythia-1.4B. Retrain, SAEBoost, and FaithfulSAE each process 100M tokens, taking about 39 minutes on GPT-2 and 4 hours on Pythia-1.4B. GAE finishes in 0.5\,s on GPT-2 and 2.9\,s on Pythia-1.4B using a single forward pass over $\sim$2{,}000 OOD activations and no gradient computation. Faithfulness evaluation (nAOPC, nComp, $\Delta$CE on 1{,}000 prompts) adds about 1 minute per (model, OOD setting, baseline).

\paragraph{Total compute.}
The full result table (two models, three OOD settings, six adaptation baselines) requires roughly 50 GPU-hours on a single RTX A6000, dominated by the Retrain-style baselines on Pythia-1.4B. The GAE rows themselves contribute under 1 GPU-minute to this total. Pretraining of the ID dictionaries (transcoders and SAEs on OpenWebText / The Pile) is a one-time cost that we treat as external to the adaptation experiments.

\vspace{-1em}
\section{Additional Case Studies on Other Semantic Classes}
\label{sec:case_study_appendix}
\vspace{-1em}
This appendix repeats the body case-study protocol on two further HaluEval prompts whose target tokens fall in distinct semantic classes: male first names and professions. The protocol is unchanged from Section~\ref{subsec:case_study}: we keep the encoder frozen, take the top-3 features by GAE causal effect on the target token, and report each feature's direct logit attribution to 20 class-member tokens and 10 unrelated noun controls. Fixed and GAE share the same encoder and the same top-3 features, so any difference in attribution comes entirely from the decoder rotation.

\begin{figure}[h]
\centering
\includegraphics[width=\linewidth]{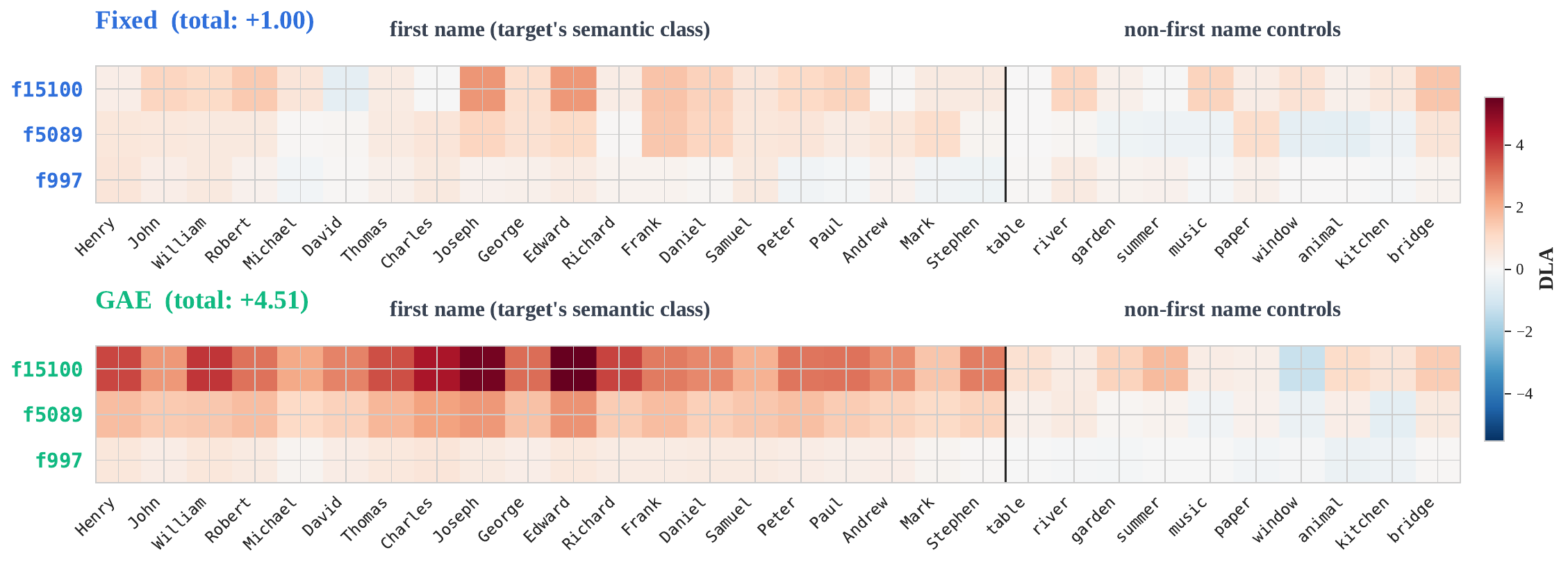}
\caption{\textbf{Per-feature DLA on a prompt predicting `\,Henry' (GPT-2, Transcoder).} The truncated input is ``Question: What nationality was James''; the next token is a male first name. Each cell reports the feature's direct logit attribution to a candidate token, with 20 male first names on the left and 10 unrelated noun controls on the right. Fixed's total class-specificity is $+1.00$ and GAE's is $+4.51$, a $4.5\times$ amplification of the same encoder-selected features' pull toward the first-name class.}
\label{fig:case_study_first_name}
\end{figure}

\begin{figure}[h]
\centering
\includegraphics[width=\linewidth]{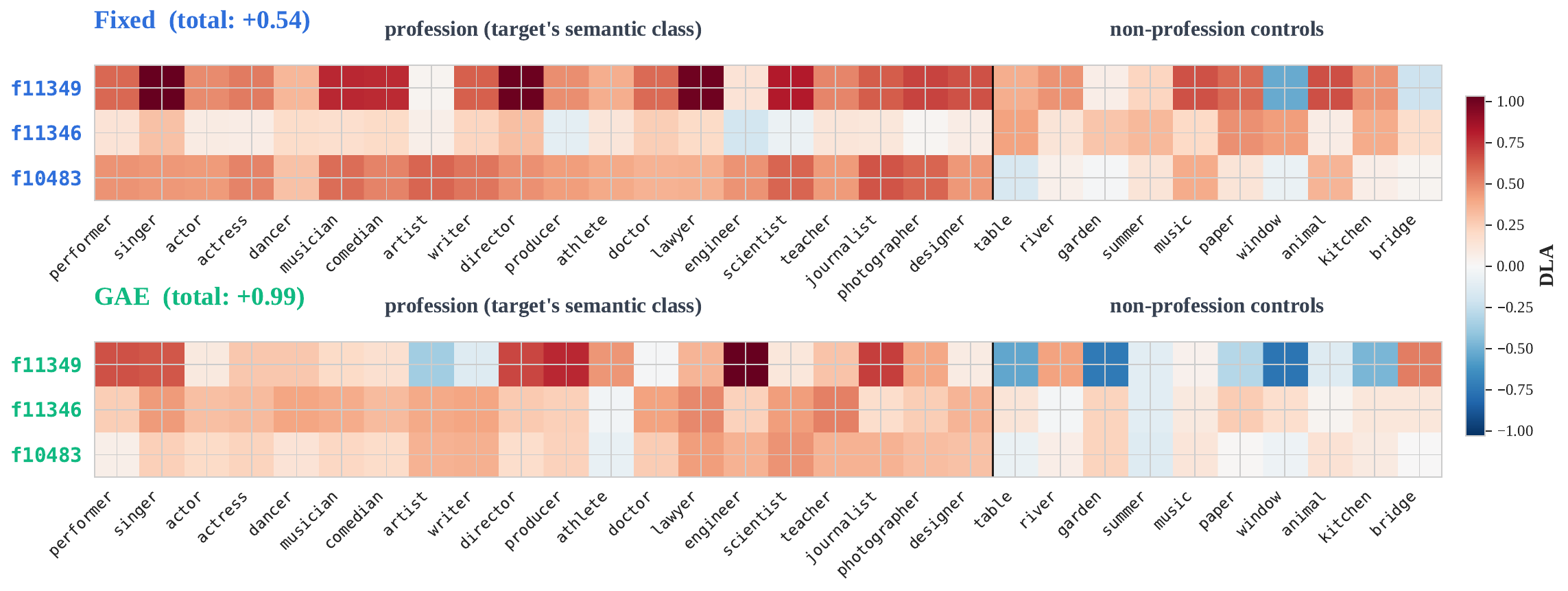}
\caption{\textbf{Per-feature DLA on a prompt predicting `\,politician' (GPT-2, Transcoder).} The truncated input is ``Question: Which American''; the next token is a profession. The 20 class-member tokens are common professions and the 10 controls are unrelated nouns. Fixed's total class-specificity is $+0.54$ and GAE's is $+0.99$. The GAE row drives several control cells negative (blue), where Fixed leaves them positive, sharpening the contrast between the class and its controls without changing which features were selected.}
\label{fig:case_study_profession}
\end{figure}

In both cases the decoder rotation alone reproduces the body-case finding: the same top-3 features, with the same activations, contribute more to their target's semantic class under GAE than under Fixed. The feature selection itself is identical because the encoder is shared.

\section{Hyperparameter Sensitivity}
\label{subsec:hp_sensitivity}

We sweep GAE's three hyperparameters on HaluEval (GPT-2 + Transcoder), holding the other two at the defaults $r\!=\!32$, $N_{\mathrm{OOD}}\!=\!2000$, $\lambda_{\mathrm{pres}}\!=\!0.2$ (Figure~\ref{fig:hp_sensitivity}).

\begin{figure}[h]
\centering
\begin{subfigure}[t]{0.32\linewidth}
\centering
\includegraphics[width=\linewidth]{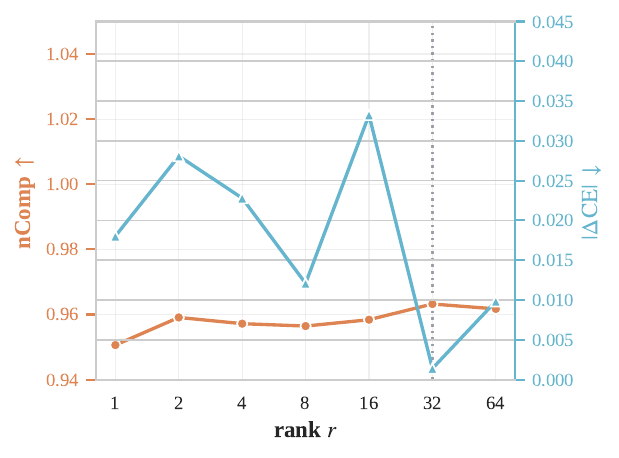}
\caption{Rank $r$.}
\label{fig:ablation_rank}
\end{subfigure}
\hfill
\begin{subfigure}[t]{0.32\linewidth}
\centering
\includegraphics[width=\linewidth]{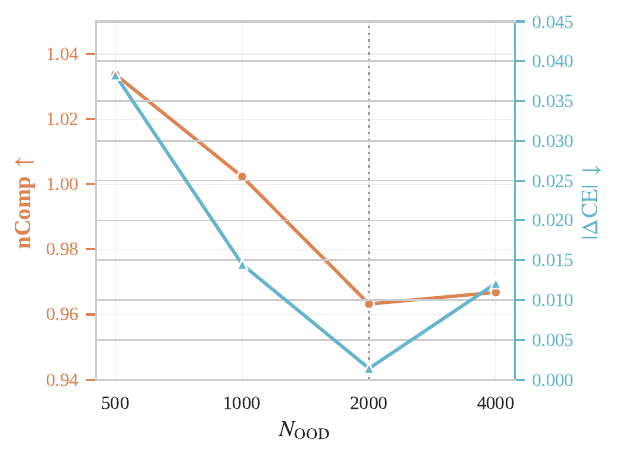}
\caption{$N_{\mathrm{OOD}}$.}
\label{fig:ablation_n_ood}
\end{subfigure}
\hfill
\begin{subfigure}[t]{0.32\linewidth}
\centering
\includegraphics[width=\linewidth]{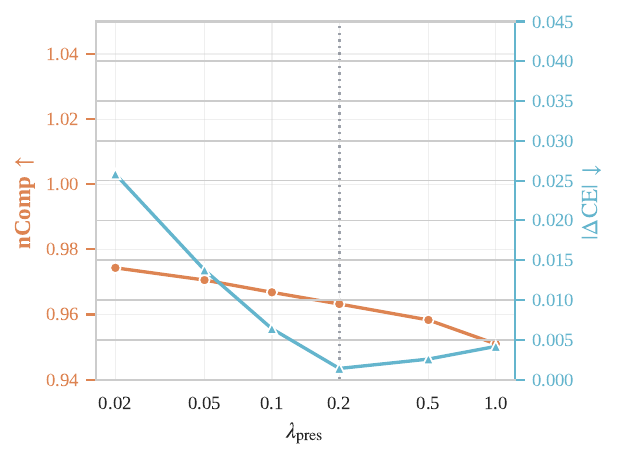}
\caption{$\lambda_{\mathrm{pres}}$.}
\label{fig:ablation_lambda_pres}
\end{subfigure}
\caption{\textbf{Hyperparameter sweeps on HaluEval (GPT-2, Transcoder)}: nComp (orange, left axis) and $|\Delta\mathrm{CE}|$ (cyan, right axis) are stable across rank $r$, OOD sample size $N_{\mathrm{OOD}}$, and preservation weight $\lambda_{\mathrm{pres}}$.}
\label{fig:hp_sensitivity}
\end{figure}

\textbf{Rank $r$}: nComp stays above $0.95$ for every $r\!\in\!\{1,\dots,64\}$; rank-1 already gives $0.951$, confirming that the ID-to-OOD drift concentrates in a few directions. \textbf{OOD sample size $N_{\mathrm{OOD}}$}: $|\Delta\mathrm{CE}|$ improves from $0.038$ at $N\!=\!500$ to $0.001$ at $N\!\geq\!2000$ as the covariance estimate stabilizes. \textbf{Preservation weight $\lambda_{\mathrm{pres}}$}: increasing $\lambda_{\mathrm{pres}}$ trades a small nComp decrease ($0.02$) for a large $|\Delta\mathrm{CE}|$ improvement ($0.026\!\to\!0.001$), with the default near the elbow.

\section{Faithfulness on Held-out In-Distribution Data}
\label{sec:id_appendix}

We evaluate GAE against the two training-free baselines (Fixed, TERM) on a held-out subset of each model's training corpus (OpenWebText for GPT-2, the Pile for Pythia-1.4B). Training-based baselines (Retrain, Finetune, SAEBoost, FaithfulSAE) are outside this comparison because they target a different operating regime, exploiting sample-level specialization rather than geometric adaptation, so mixing them would confound the cause of any improvement.

Each model is paired with a held-out slice of its own training corpus that no explainer has touched during dictionary fitting. The evaluation prompts and the 2{,}000-sample adaptation set are drawn fresh from this slice, sharing the same broad domain as the data the Fixed explainer was trained on while remaining strictly unseen. The resulting setup isolates the no-shift regime at the distribution level: there is no domain gap and no temporal gap, only the sampling variation that any finite subset inherits from a large corpus. The question this section asks is whether a geometric adaptation method has anything to do once the obvious shift has been removed.

\begin{figure}[h]
\centering
\begin{subfigure}{0.48\textwidth}
  \centering
  \includegraphics[width=\linewidth]{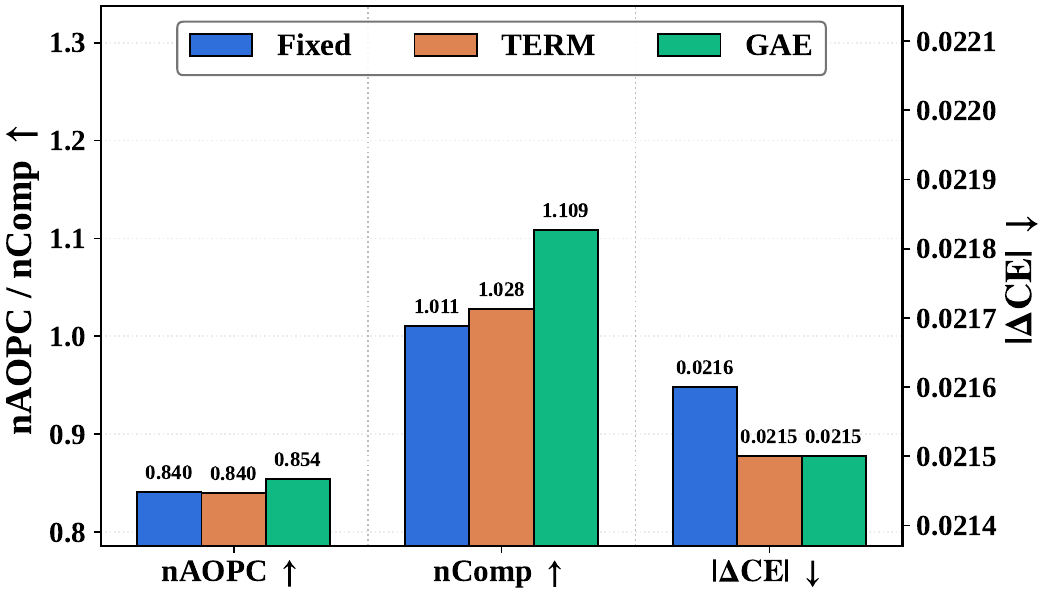}
  \caption{GPT-2}
\end{subfigure}\hfill
\begin{subfigure}{0.48\textwidth}
  \centering
  \includegraphics[width=\linewidth]{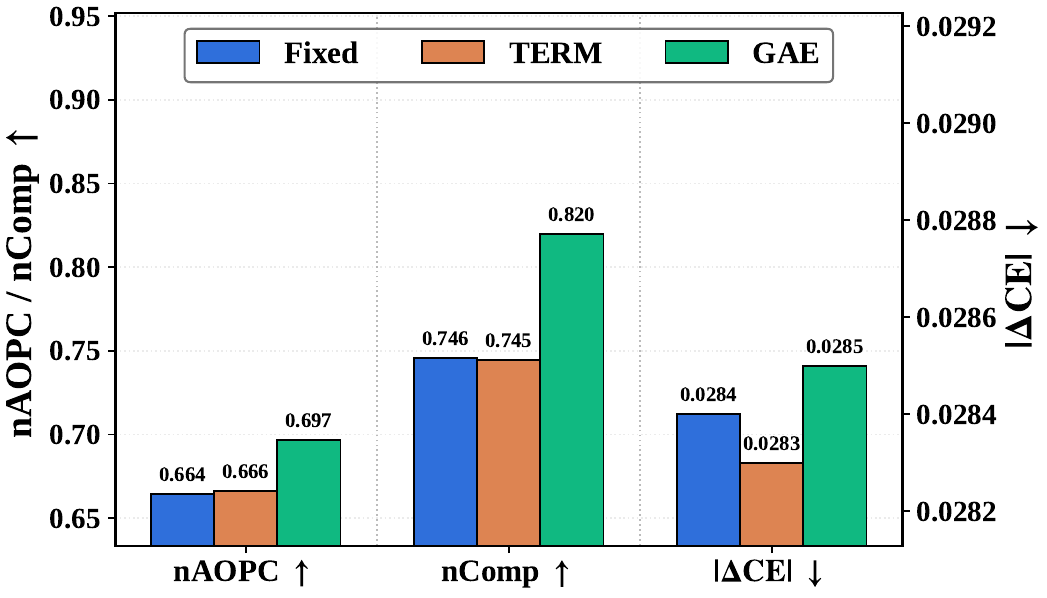}
  \caption{Pythia-1.4B}
\end{subfigure}
\caption{\textbf{Faithfulness of training-free explainer methods on held-out in-distribution data with the Transcoder explainer.} The left axis plots the causal-faithfulness metrics nAOPC and nComp (higher is better) and the right axis plots reconstruction quality $|\Delta\mathrm{CE}|$ (lower is better). Both backbones show the same pattern: GAE lifts nAOPC and nComp above Fixed and TERM, with the largest swing on GPT-2 (nComp $+0.10$), while $|\Delta\mathrm{CE}|$ tracks Fixed to within $0.0001$.}
\label{fig:id_appendix_transcoder}
\end{figure}

\begin{figure}[h]
\centering
\begin{subfigure}{0.48\textwidth}
  \centering
  \includegraphics[width=\linewidth]{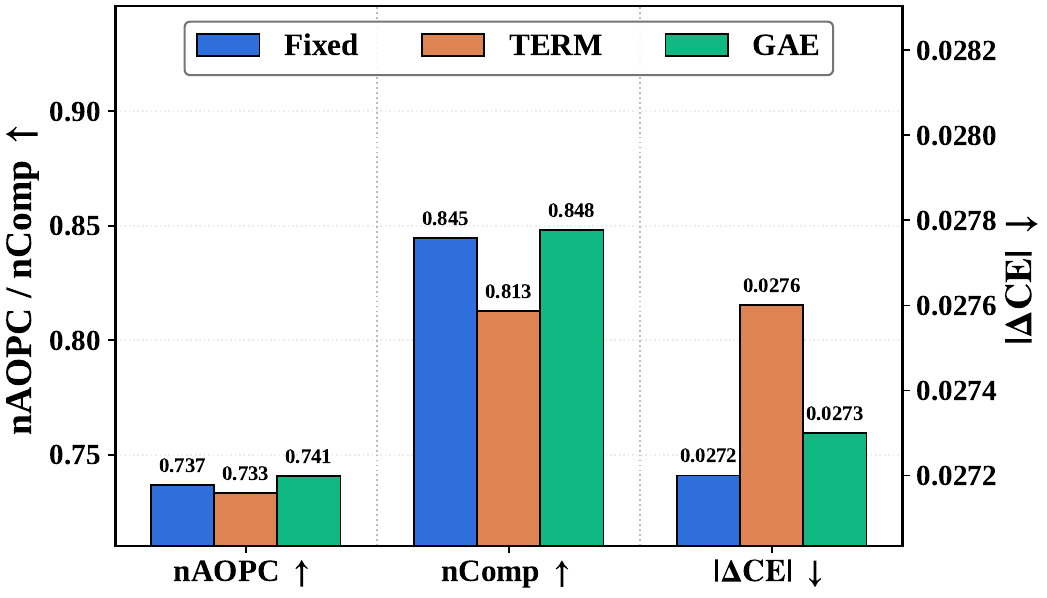}
  \caption{GPT-2}
\end{subfigure}\hfill
\begin{subfigure}{0.48\textwidth}
  \centering
  \includegraphics[width=\linewidth]{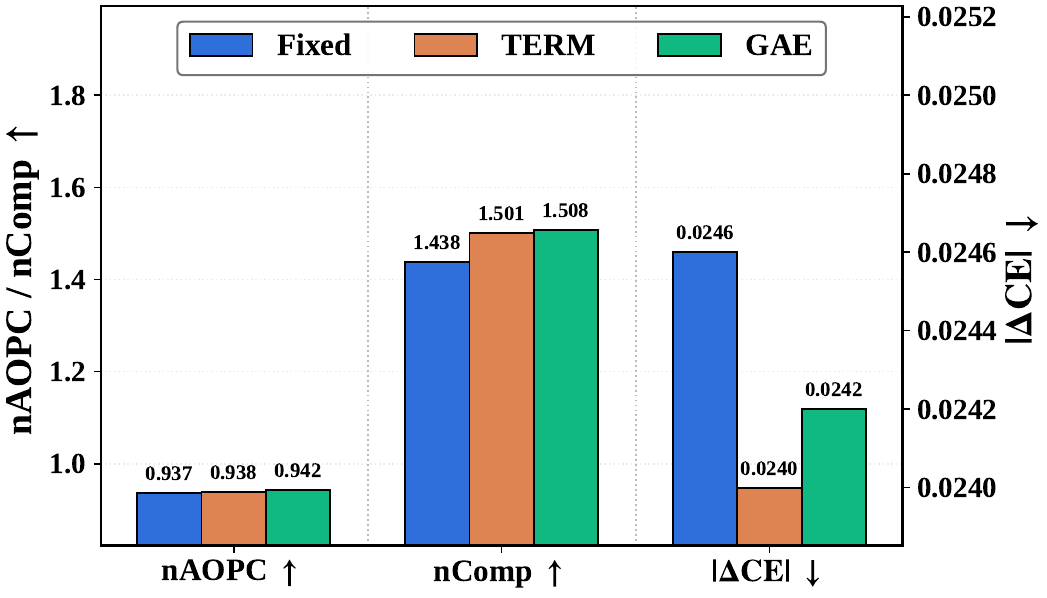}
  \caption{Pythia-1.4B}
\end{subfigure}
\caption{\textbf{Faithfulness of training-free explainer methods on held-out in-distribution data with the Top-K SAE explainer.} The left axis plots nAOPC and nComp (higher is better) and the right axis plots $|\Delta\mathrm{CE}|$ (lower is better). The SAE dictionary already sits closer to optimal on this ID slice, so the gap to Fixed is narrower than on the Transcoder cells, but GAE still moves both causal metrics in the right direction (largest at Pythia-1.4B with nComp $+0.07$), and on Pythia-1.4B even nudges $|\Delta\mathrm{CE}|$ slightly below Fixed ($0.0246 \to 0.0242$).}
\label{fig:id_appendix_sae}
\end{figure}

The answer is yes. Even on this ID slice, the 2{,}000 adaptation samples carry their own subset-specific second-moment structure, which is statistically distinct from the full training corpus the Fixed checkpoint saw. GAE picks up this finer geometry: it improves both causal metrics over Fixed in every cell, while leaving reconstruction quality untouched. The Transcoder cells show the largest correction, with nComp moving from $1.011$ to $1.109$ on GPT-2 and from $0.746$ to $0.820$ on Pythia-1.4B, and nAOPC tracking ($0.840 \to 0.854$ and $0.665 \to 0.697$). The Top-K SAE cells move less but in the same direction, with the largest jump at Pythia-1.4B where nComp rises from $1.438$ to $1.508$. Reconstruction quality $|\Delta\mathrm{CE}|$ stays within $0.0005$ of Fixed on all four cells, so the causal-metric gains do not destabilize the well-aligned dictionary GAE starts from. This is consistent with the geometric picture: GAE's correction is driven by the gap between the explainer's encoded covariance and the covariance of the evaluated activations, and that gap does not disappear simply because the two are drawn from the same nominal distribution. The same mechanism that recovers faithfulness under explicit domain, temporal, and adversarial shift continues to operate, at smaller magnitude, on the residual within-corpus drift that ID held-out evaluation exposes.



\end{document}